\def\eqref#1{Eq.~\ref{#1}}
\def\1{\bm{1}}
\def\R{{\mathbb{R}}}
\def\vone{{\bm{1}}}
\def\valpha{{\bm{\alpha}}}
\def\va{{\bm{a}}}
\def\ve{{\bm{e}}}
\def\vi{{\bm{i}}}
\def\vp{{\bm{p}}}
\def\vs{{\bm{s}}}
\def\vu{{\bm{u}}}
\def\vv{{\bm{v}}}
\def\vx{{\bm{x}}}
\def\evp{{p}}
\def\evu{{u}}
\def\mA{{\bm{A}}}
\def\mI{{\bm{I}}}
\def\mL{{\bm{L}}}
\def\mP{{\bm{P}}}
\def\mQ{{\bm{Q}}}
\def\mR{{\bm{R}}}
\def\mS{{\bm{S}}}
\def\mU{{\bm{U}}}
\def\mV{{\bm{V}}}
\def\mW{{\bm{W}}}
\def\mX{{\bm{X}}}
\def\mY{{\bm{Y}}}
\def\gF{{\mathcal{F}}}
\def\gG{{\mathcal{G}}}
\def\gH{{\mathcal{H}}}
\def\emS{{S}}
\definecolor{mydarkblue}{rgb}{0,0.08,0.45}
\theoremstyle{plain}
\newtheorem{theorem}{Theorem}[section]
\newtheorem{lemma}[theorem]{Lemma}
\newtheorem{corollary}[theorem]{Corollary}
\theoremstyle{definition}
\theoremstyle{remark}
\DeclareMathOperator{\rank}{rank}
\DeclareMathOperator{\cov}{cov}
\DeclareMathOperator{\MAP}{MAP}
\def\ml{\{\!\!\{}
\def\mr{\}\!\!\}}
\def\shownotes{0}
\newcommand{\authnote}[2]{{\color{blue}[#1: #2]}}
\newcommand{\authnote}[2]{}
\title{
    A Canonicalization Perspective on\\Invariant and Equivariant Learning
}
\author{%
    George Ma$^{*1}$ \quad Yifei Wang\thanks{Equal Contribution. George Ma has graduated from Peking University, and is currently a Ph.D. student at UC Berkeley.} \ $^{2}$ \quad Derek Lim$^{2}$ \quad Stefanie Jegelka$^{3}$ \quad Yisen Wang$^{4,5}$\thanks{Corresponding Author: Yisen Wang (\texttt{yisen.wang@pku.edu.cn}).} \\
    \textsuperscript{1} School of EECS, Peking University \\
    \textsuperscript{2} MIT CSAIL\\
    \textsuperscript{3} TUM CIT/MCML/MDSI \& MIT EECS/CSAIL\\
    \textsuperscript{4} State Key Lab of General Artificial Intelligence, \\ School of Intelligence Science and Technology, Peking University\\
    \textsuperscript{5} Institute for Artificial Intelligence, Peking University\\
}
\begin{document}

\maketitle
\doparttoc
\faketableofcontents

\begin{abstract}
In many applications, we desire neural networks to exhibit invariance or equivariance to certain groups due to symmetries inherent in the data. Recently, frame-averaging methods emerged to be a unified framework for attaining symmetries efficiently by averaging over input-dependent subsets of the group, \textit{i.e.}, frames. What we currently lack is a principled understanding of the design of frames. In this work, we introduce a canonicalization perspective that provides an essential and complete view of the design of frames. Canonicalization is a classic approach for attaining invariance by mapping inputs to their canonical forms. We show that there exists an inherent connection between frames and canonical forms. Leveraging this connection, we can efficiently compare the complexity of frames as well as determine the optimality of certain frames. Guided by this principle, we design novel frames for eigenvectors that are strictly superior to existing methods---some are even optimal---both theoretically and empirically. The reduction to the canonicalization perspective further uncovers equivalences between previous methods. These observations suggest that canonicalization provides a fundamental understanding of existing frame-averaging methods and unifies existing equivariant and invariant learning methods. Code is available at \url{https://github.com/PKU-ML/canonicalization}.
\end{abstract}

\section{Introduction}

When designing machine learning models, incorporating data symmetry provides a strong inductive bias that facilitates learning and generalization \citep{invariant-generalization,sample-complexity-gain,wang2019symmetric}, as evidenced in multiple applications like convolutional neural networks \citep{alexnet}, graph neural networks \citep{gnn-model}, point clouds \citep{pointnet,pointnet++}, \textit{etc}. Often these symmetry priors require models to be invariant or equivariant to certain groups $G$. Among these approaches, model-specific methods restrict every model component to respect data symmetries, which, however, often sacrifices expressive power \citep{gin,higher-order}. On the other hand, model-agnostic methods allow the use of arbitrary (non-invariant) base models, and ensure invariance or equivariance through averaging over group actions \citep{rp,invariant-maps,frame-averaging}. This approach can attain universal expressive power with first-order backbones, but comes with high computation cost for exponentially large groups (\textit{e.g.}, permutation).

To alleviate the latter challenge, frame averaging (FA) \citep{frame-averaging} has been recognized to be a general framework that can achieve invariance and equivariance efficiently by averaging over a small (input-dependent) subset of the group $\mathcal{F}(X)\subset G$, known as a frame. Frame averaging can improve the averaging complexity of orders of magnitude, and has found wide applications in multiple fields such as graph neural networks \citep{so2-convolution}, materials modeling \citep{faenet}, antibodies generation \citep{abdiffuser}, \textit{etc}. Nevertheless, existing frames are still computationally prohibitive in many domains, in particular, exponentially large for graphs \citep{frame-averaging} and eigenvectors \citep{laplacian-canonization}, making it an intriguing problem to explore the design of more efficient frames of lower complexity.

However, a key challenge in this direction is a lack of rigorous ways to characterize the complexity of frames, since existing frames in the literature are still heuristically designed. Although it sounds like a being only of theoretical interest, the ability to characterize the complexity and expressiveness of algorithms has played a vital role in the development of modern algorithms and deep learning models \citep{attention-complexity}. As an example, the WL hierarchies \citep{wl,wl-graph-kernel,biconnectivity,subgraph-wl,homomorphism-expressivity}, alongside other complexity measures \citep{equivariant-polynomials}, have been the guideline for developing expressive graph neural networks \citep{subgraph-gnn-symmetry,higher-order,epnn}. However, for averaging methods, we still lack a formal language to quantify, compare, and improve different approaches, which hinders principled development in this area.

In this paper, we propose \emph{canonicalization} as an essential view of frame averaging and a practical yet principled measure for the complexity of frames. Canonicalization is a classical technique with wide applications in graph theory \citep{graph-canonization}, algebra \citep{general-theory} and geometry \citep{positive-geometries}, and it has also recently been explored for learning with symmetry \citep{laplacian-canonization}. The key idea of a canonicalization $\mathcal{C}$ is to map all inputs that are equivalent under a group $G$ to the same \emph{canonical form} $\mathcal{C}(X)$. With the canonical input $\mathcal{C}(X)$, any (non-invariant) base neural network $\phi$ will give a $G$-invariant mapping. We show that for a given group, the differences between any frames can be reduced to the differences of their corresponding canonicalizations. Moreover, in contrast to frames that are generally hard to analyze, we show that the canonicalization perspective is much more fertile, leading to a set of theoretical conditions and practical principles for quantifying the complexity and expressiveness of different frames. 

To illustrate the benefits of this canonicalization framework, we focus on the symmetries of eigenvectors (sign and basis group), which covers a wide range of applications like graph learning \citep{laplacian-canonization,signnet}, PCA methods \citep{sign-equivariant} and spectral clustering \citep{stability-of-spectral-clustering}. In this domain, we show that the canonicalization perspective now allows us to answer some long-standing theoretical questions and derive some better or even optimal frames. Specifically, we reveal that for sign invariance (a major concern for eigenvectors), \emph{any} sign-invariant network, including the popular SignNet \citep{signnet}, can be reduced to the \emph{same} input canonicalization, equivalently. Building on this fundamental result, we easily resolve the open problem of SignNet's universality, by showing that it is \emph{not} a universal approximator of functions that are sign invariant and permutation equivariant; we also show how to modify SignNet to be universal with minimal changes. Moreover, as a concrete, practical application, we develop new canonicalizations and frames for eigenvectors, named Orthogonal Axis Projection (OAP), which attain \emph{optimal} or at least \emph{better-than-prior} complexities for unconstrained and constrained scenarios. 

At last, we validate our theoretical findings on the \textsc{Exp} dataset, showing that our canonicalization and frames indeed yield orders of lower complexity while demonstrating expressive power beyond 1-WL for distinguishing non-isomorphic graphs. We also show that permutation-equivariant OAP canonicalizes more eigenvectors than previous methods, resulting in better performance on the ZINC and OGBG molecular graph tasks. 


\section{Preliminaries}\label{sec:preliminaries}

Let $f\colon\mathcal{X}\to\mathcal{Y}$ be a function and $G$ be a group acting on $\mathcal{X}$ and $\mathcal{Y}$. We say $f$ is \emph{invariant} to $G$ (or $G$-invariant) if for all $g\in G$ and $X\in\mathcal{X}$, we have $f(g\cdot X)=f(X)$. Similarly, we say $f$ is \emph{equivariant} to $G$ (or $G$-equivariant) if for all $g\in G$ and $X\in\mathcal{X}$ we have $f(g\cdot X)=g\cdot f(X)$. We treat invariance as a special case of equivariance by letting $G$ act on $\mathcal{Y}$ as the identity transformation. Existing research has developed neural networks with specific equivariance properties, such as permutation invariance/equivariance \citep{deep-sets,equivariant-deepsets}, rotation invariance/equivariance \citep{equivariant-and-stable,thomas2018tensor,wang2021residual}, sign/basis invariance \citep{signnet}, sign equivariance \citep{sign-equivariant}, and multi-set equivariance \citep{multiset-equivariant}.


In the rest of the paper, we define $V,W$ to be vector spaces with norms $\lVert\cdot\rVert_V,\lVert\cdot\rVert_W$, and $G$ be a group. The elements $g\in G$ act on vectors in $V,W$ with the group's representations $\rho_1\colon G\to\mathrm{GL}(V)$ and $\rho_2\colon G\to\mathrm{GL}(W)$, where $\mathrm{GL}(V)$ is the space of invertible linear maps $V\to V$. The group $G$ induces an equivalence relation on $V$, such that $u\sim v$ if and only if there exists $g\in G$ such that $u=\rho_1(g)v$. For each input $X$, we denote its orbit or equivalence class by $V_G(X)=\{\rho_1(g)^{-1}X\mid g\in G\}$, and its automorphism group as $G_X=\{g\in G\mid\rho_1(g)X=X\}$.

\subsection{Averaging Methods}\label{sec:averaging-methods}\label{sec:learned-canonicalization}

Given a network $\phi\colon V\to W$, the na\"ive way to achieve invariance is through \emph{group averaging}:
\[
    \varPhi_\mathrm{GA}(X)=\frac1{|G|}\sum\nolimits_{g\in G}\phi\bigl(\rho_1(g)^{-1}X\bigr).
\]
The resulting $\varPhi$ is $G$-invariant, and preserves universal expressive power if $\phi$ is itself universal. Some existing works adopt this approach \citep{rp,invariant-maps}.
However, exact averaging becomes intractable if the cardinality of $G$ is large, for example, the permutation group of graphs.
In such cases, random sampling of the group is necessary \citep{rp,benchmarking-gnn}, at the sacrifice of exact symmetry.

Frame averaging \citep{frame-averaging} is another approach to achieve exact invariance by averaging over a subset of the group on an input $X$, \textit{i.e.,} a frame $\gF(X)\subset G$ that maintains $G$-equivariance: $\mathcal{F}(\rho_1(g)X)=g\mathcal{F}(X)=\{gh\mid h\in\mathcal{F}(X)\}$.
If an equivariant frame $\mathcal{F}$ is easy to compute, and its cardinality $|\mathcal{F}(X)|$ is not too large, then the following frame averaging scheme
\begin{equation}\label{eq:frame-averaging}
    \varPhi_\mathrm{FA}(X;\mathcal{F},\phi)=\frac1{|\mathcal{F}(X)|}\sum\nolimits_{g\in\mathcal{F}(X)}\phi\bigl(\rho_1(g)^{-1}X\bigr)
\end{equation}
also provides the required function symmetrization. $\varPhi_{\mathrm{FA}}$ is $G$-invariant, and universally approximates $G$-invariant functions that are approximable by $\phi$. Equivariance can be achieved similarly by multiplying $\rho_2(g)$ with each term. \citet{frame-averaging} also proposed a variant called \emph{invariant frame averaging} by averaging over the cosets $\mathcal{F}(X)/G_X$ to achieve invariance. Since there is no established way to find the representatives of $\mathcal{F}(X)/G_X$, they adopt uniform sampling from $\mathcal{F}(X)$ to approximate $\varPhi_\mathrm{FA}$. In Section~\ref{sec:ca-relationship}, the proposed canonicalization achieves the same complexity without sampling.

\citet{equivariance-canonicalization} proposed to learn an equivariant canonicalization function for equivariance. They define the canonicalization function to be $h\colon V\to G$ that is $G$-equivariant. Let $\phi$ be the backbone network, then the network taking form $\rho_2(h(X))\phi(\rho_1(h(X))^{-1}X)$ is equivariant and universal. The canonicalization $h$ can be seen as a frame whose output has size $1$. 
However, such a $G$-equivariant $h$ that outputs a single group element does not exist for inputs $X$ with non-trivial automorphism (they introduce a relaxed version of equivariance in this case). Following the classic literature, we define canonicalization on the input space $V$ instead of the group space $G$, which also avoids the problem above.

\subsection{Symmetries of Eigenvectors}

Denote the Laplacian matrix of a graph as $\mL=\mI-\hat{\mA}$, where $\hat{\mA}$ is the normalized adjacency matrix.
Laplacian Positional Encoding (LapPE) uses the eigenvectors of $\mL$ as positional encoding. LapPE enjoys the benefits of having permutation equivariance and universal expressive power \citep{laplacian-canonization}, but also suffers from two well-known \emph{ambiguity} problems.
The first one, known as \emph{sign ambiguity}, captures that for a unit-norm eigenvector $\vu_{\lambda_i}$ corresponding to eigenvalue $\lambda_i$, the sign flipped $-\vu_{\lambda_i}$ is also a unit-norm eigenvector of the same eigenvalue. The second one, termed \emph{basis ambiguity}, captures that eigenvalues with multiplicity degree $d_i>1$ can have any orthogonal basis in its eigenspace as valid eigenvectors.
Because of these ambiguities, we can get distinct GNN outputs for the same graph, resulting in unstable and sub-optimal performance \citep{benchmarking-gnn,lpe,signnet}.
Besides, sign and basis ambiguities also exist in general eigenvectors that do not require permutation equivariance, which are also widely used in real-world applications, such as PCA methods \citep{sign-equivariant} and spectral clustering \citep{stability-of-spectral-clustering}. We defer more background to Appendix~\ref{app:pca}.

\section{Canonicalization: A Unified and Essential View of Equivariant Learning}\label{sec:canonization-and-averaging}

In this section, we reduce existing averaging methods to canonicalization and show canonicalization can serve as a unified perspective of these methods. Then, using insights from canonicalization theory, we show how SignNet and BasisNet---two invariant networks on eigenvectors---are equivalent to their underlying canonicalizations, while solving the open problem regarding their expressivity.

\subsection{A Reduction from Frames to Canonicalizations}\label{sec:ca-relationship}\label{sec:canonical-averaging}

Frame averaging reduces the number of forward passes in the averaging step compared with group averaging. However, a major obstacle in analyzing the complexity of frames are the automorphisms of the input: $G_X=\{g\in G\mid \rho_1(g)X=X\}$, which can be exponentially large and intractable to compute. For instance, consider defining a frame of a graph as the set of all permutations that sort its node features in increasing order. If there are $m$ nodes in the graph with identical node features, then the frame size is at least $m!$, which grows exponentially. However, all permutations in the frame actually result in the same graph, indicating  room for improvement in the efficiency of frame averaging.

In this work, we propose an alternative view to design frames that overcome the difficulties above. For each $X\in V$, instead of averaging over the group elements as in FA (\eqref{eq:frame-averaging}), one can directly average over the \emph{output} elements of the transformations, which is not affected by the complexity of automorphisms since all automorphisms in $G_X$ yield the same output. This converts the problem of finding a $G$-equivariant subset of the group to finding a $G$-invariant set of inputs. In fact, in the classic literature, this problem is known as canonicalization, that achieves invariance by mapping inputs in the same equivalence class to the same canonical form. Formally, we define a \emph{canonicalization} as a set-valued function $\mathcal{C}\colon V\to 2^V\backslash\varnothing$, such that it is $G$-invariant: $\mathcal{C}(\rho_1(g)X)=\mathcal{C}(X),\forall X\in V,g\in G$. We call its output $\mathcal{C}(X)$ the \emph{canonical form} of $X$. Among possible canonicalizations, we are most interested in \emph{orbit canonicalization}, where the canonical form falls back into the equivalence class: $\mathcal{C}(X)\subset V_G(X)$ for all $X\in V$. With a canonical form, one can instead perform canonical averaging (CA) over $\mathcal{C}(X)$ to obtain invariant\footnote{We can similarly achieve (relaxed) equivariance \citep{equivariance-canonicalization} by letting the canonicalization output an additional canonizing action $g_0$ with $\rho_1(g_0)X=X_0$, and multiplying $\rho_2(g_0)$ on each term.} representations:
\begin{equation}\label{eq:canonical-averaging}
    \varPhi_\mathrm{CA}(X;\mathcal{C},\phi)=\frac1{|\mathcal{C}(X)|}\sum\nolimits_{X_0\in\mathcal{C}(X)}\phi(X_0).
\end{equation}

The following theorem establishes the equivalence between canonicalizations and frames.
\begin{theorem}\label{thm:equivalence}
    For any frame $\mathcal{F}$ there exists an orbit canonicalization $\mathcal{C}_\mathcal{F}$ \textit{s.t.}\ for all $X\in V$, $g\in G$, and backbone $\phi$, we have $\varPhi_\mathrm{FA}(X;\mathcal{F},\phi)=\varPhi_\mathrm{CA}(X;\mathcal{C}_\mathcal{F},\phi)$ and
    \[
        |\mathcal{C}_\mathcal{F}(X)|=|\mathcal{F}(X)|/|G_X|\leq|\mathcal{F}(X)|.
    \]
    In turn, for any orbit canonicalization $\mathcal{C}$ there exists a frame $\mathcal{F}_\mathcal{C}$ \textit{s.t.}\ for all $X\in V$,  $g\in G$, and backbone $\phi$, we have $\varPhi_\mathrm{CA}(X;\mathcal{C},\phi)=\varPhi_\mathrm{FA}(X;\mathcal{F}_\mathcal{C},\phi)$ and 
    \[
        |\mathcal{F}_\mathcal{C}(X)|=|G_X|\cdot|\mathcal{C}(X)|\geq|\mathcal{C}(X)|.
    \]
\end{theorem}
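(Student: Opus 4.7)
The plan is to build $\mathcal{C}_\mathcal{F}$ and $\mathcal{F}_\mathcal{C}$ from one another through the evaluation map $g\mapsto\rho_1(g)^{-1}X$, which sends group elements into the orbit $V_G(X)$. The crux of both directions is a single structural observation: the fibers of this evaluation map are precisely the left cosets of the automorphism group $G_X$. Indeed, $\rho_1(g_1)^{-1}X=\rho_1(g_2)^{-1}X$ iff $g_2 g_1^{-1}\in G_X$, i.e., iff $g_1,g_2$ lie in a common left coset $G_X g$, which has size $|G_X|$. Moreover, for any frame $\mathcal{F}$ and any $h\in G_X$, combining the frame equivariance $\mathcal{F}(\rho_1(h)X)=h\mathcal{F}(X)$ with $\rho_1(h)X=X$ forces $h\mathcal{F}(X)=\mathcal{F}(X)$, so $\mathcal{F}(X)$ is a disjoint union of full left $G_X$-cosets.

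For the forward direction, I would set $\mathcal{C}_\mathcal{F}(X)=\{\rho_1(g)^{-1}X\mid g\in\mathcal{F}(X)\}$, which lies in $V_G(X)$ by construction. Its $G$-invariance $\mathcal{C}_\mathcal{F}(\rho_1(h)X)=\mathcal{C}_\mathcal{F}(X)$ follows by a direct reindexing using the equivariance of $\mathcal{F}$, making it an orbit canonicalization. The structural observation above immediately yields $|\mathcal{C}_\mathcal{F}(X)|=|\mathcal{F}(X)|/|G_X|$. For the averaging identity, I would group the terms of $\varPhi_\mathrm{FA}$ by their image: each $X_0\in\mathcal{C}_\mathcal{F}(X)$ receives $|G_X|$ identical contributions of $\phi(X_0)$, and the factor $|G_X|/|\mathcal{F}(X)|$ collapses to $1/|\mathcal{C}_\mathcal{F}(X)|$, giving $\varPhi_\mathrm{FA}(X;\mathcal{F},\phi)=\varPhi_\mathrm{CA}(X;\mathcal{C}_\mathcal{F},\phi)$ for every $\phi$.

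For the reverse direction, I would define $\mathcal{F}_\mathcal{C}(X)=\{g\in G\mid\rho_1(g)^{-1}X\in\mathcal{C}(X)\}$, the full preimage of $\mathcal{C}(X)$ under the evaluation map. The equivariance $\mathcal{F}_\mathcal{C}(\rho_1(h)X)=h\mathcal{F}_\mathcal{C}(X)$ follows by a direct substitution that uses only the $G$-invariance of $\mathcal{C}$. Since $\mathcal{C}(X)\subset V_G(X)$, the fiber over each $X_0\in\mathcal{C}(X)$ is a full left coset of $G_X$, which gives $|\mathcal{F}_\mathcal{C}(X)|=|G_X|\cdot|\mathcal{C}(X)|$, and the same term-grouping argument recovers the averaging equality.

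The main obstacle is recognizing and then cleanly exploiting the coset structure: the gap between $|\mathcal{F}(X)|$ and $|\mathcal{C}_\mathcal{F}(X)|$ is pinned down \emph{exactly} by $|G_X|$, not merely bounded by it. Everything else reduces to an orbit-stabilizer-style bookkeeping: once the fibers of the evaluation map are identified with $G_X$-cosets, the two constructions become mutual inverses up to multiplicities, and the remaining verifications are routine changes of variables that invoke no structure beyond the defining equivariance of $\mathcal{F}$ and invariance of $\mathcal{C}$.
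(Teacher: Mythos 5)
Your proposal is correct and takes essentially the same route as the paper: the same constructions $\mathcal{C}_\mathcal{F}(X)=\{\rho_1(g)^{-1}X\mid g\in\mathcal{F}(X)\}$ and $\mathcal{F}_\mathcal{C}(X)=\{g\in G\mid\rho_1(g)^{-1}X\in\mathcal{C}(X)\}$, and the same coset-counting argument identifying the fibers of the evaluation map with $G_X$-cosets. One small improvement over the paper's write-up is worth noting: you explicitly verify that $\mathcal{F}(X)$ is a disjoint union of \emph{full} $G_X$-cosets by applying frame equivariance at $h\in G_X$ (so that $h\mathcal{F}(X)=\mathcal{F}(X)$), which is precisely what is needed to conclude every fiber has size exactly $|G_X|$; the paper's proof uses this fact implicitly without stating it. A cosmetic nit: the sets $G_X g$ that appear here are conventionally called \emph{right} cosets of $G_X$ (they are the orbits of the left $G_X$-action on $G$), not left cosets, but nothing in your argument depends on the label.
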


Theorem~\ref{thm:equivalence} reveals that frames and canonicalizations have a fundamental equivalence. In other words, a frame is built upon a canonicalization, and a canonicalization induces a frame. Henceforth, we can reduce the difficult problem of designing (equivariant) frames to designing canonicalizations. This simple change of perspective also allows us to have a better theoretical characterization of the complexity and optimality of frames. From now on, we adopt a canonicalization language and reveal some key properties of canonicalization that provide principled guidelines for frame/canonicalization design.

\subsection{Theoretical Properties of Canonicalization: Universality, Optimality, and Canonicalizability}\label{sec:canonization}

Here, we examine the theoretical properties of canonicalization in terms of its expressiveness and efficiency. Subsequently, we present unique insights provided by the canonicalization perspective that are not found in the existing literature on frames.

For expressiveness, we define the universality of canonicalization as follows. A canonicalization $\mathcal{C}$ is \emph{universal} if for any $G$-invariant function $f\colon V\to W$, there exists a well-defined function $\phi\colon 2^V\to W$ such that $f(X)=\phi(\mathcal{C}(X))$ for all $X\in V$. The following theorem shows that a canonicalization is universal \textit{iff} it corresponds to an orbit canonicalization.
\begin{theorem}\label{thm:universal-iff-orbit}
    A canonicalization $\mathcal{C}$ is universal \textit{iff} there exists an orbit canonicalization $\mathcal{C}_c$ and an injective mapping $g\colon 2^V\to 2^V$ such that $g(\mathcal{C}(X))=\mathcal{C}_c(X),\forall X\in V$.
\end{theorem}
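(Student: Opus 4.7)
The plan is to reduce the theorem to a single, transparent property of $\mathcal{C}$, namely \emph{orbit separation}: $\mathcal{C}(X)=\mathcal{C}(X')$ iff $X\sim_G X'$. My first step is a short lemma stating that $\mathcal{C}$ is universal iff it separates orbits. The reverse implication is a direct consistency check when defining $\phi(\mathcal{C}(X)):=f(X)$, since orbit separation forces $f(X)=f(X')$ by $G$-invariance whenever $\mathcal{C}(X)=\mathcal{C}(X')$, and one extends $\phi$ arbitrarily off $\mathrm{image}(\mathcal{C})$. For the forward implication, I would apply universality to the characteristic function $f_X(Y):=\mathbb{1}[Y\in V_G(X)]$ of a single orbit --- this is $G$-invariant and, once factored through $\mathcal{C}$, forces any two points with the same $\mathcal{C}$-value to lie in the same orbit.

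Given this lemma, both directions of the theorem follow by short constructions. For the ``$\Rightarrow$'' direction I would take $\mathcal{C}_c(X):=V_G(X)$, which is trivially an orbit canonicalization, and define $g$ on $\mathrm{image}(\mathcal{C})$ by $g(\mathcal{C}(X)):=V_G(X)$; orbit separation makes this assignment consistent and injective, and the required identity $g\circ\mathcal{C}=\mathcal{C}_c$ holds by design. For ``$\Leftarrow$'' I would first note that every orbit canonicalization automatically separates orbits: any $Y$ in the common nonempty set $\mathcal{C}_c(X)=\mathcal{C}_c(X')$ lies in $V_G(X)\cap V_G(X')$, forcing $X\sim_G X'$. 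Injectivity of $g$ then transfers separation back to $\mathcal{C}$ via $\mathcal{C}(X)=\mathcal{C}(X')\Rightarrow g(\mathcal{C}(X))=g(\mathcal{C}(X'))\Rightarrow \mathcal{C}_c(X)=\mathcal{C}_c(X')\Rightarrow X\sim_G X'$, and the preliminary lemma closes the argument.

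The main obstacle I expect is the extension of the partial injection $g$, which I only build on $\mathrm{image}(\mathcal{C})$, to the global injection $2^V\to 2^V$ demanded literally by the theorem statement. A cardinality argument should suffice: $|\mathrm{image}(\mathcal{C})|\leq|V/G|\leq|V|<|2^V|$ by Cantor's theorem, and likewise for the image of $X\mapsto V_G(X)$, so both complements in $2^V$ have cardinality $|2^V|$ and any bijection between them turns the partial injection into a global one without collisions. This step is set-theoretic bookkeeping rather than substantive mathematics, but it is the only place where care is needed to match the statement verbatim; everything else is a direct consequence of the orbit-separation reformulation.
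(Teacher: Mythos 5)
Your proof is correct and follows the same essential idea as the paper's, but you isolate the crux as an explicit lemma (universality iff orbit separation), whereas the paper proves sufficiency via its Lemma~\ref{lem:orbit-imply-universal} (orbit canonicalizations are universal) and proves necessity by directly constructing a map $g$ sending $\mathcal{C}(X)$ to a subset of $V_G(X)$. Your decomposition is actually a bit more careful at the necessity step: the paper justifies well-definedness of $g$ with ``since $\mathcal{C}$ is invariant,'' which as written is insufficient --- invariance only guarantees constancy within each orbit, while well-definedness of $g$ additionally requires that $\mathcal{C}$ never takes equal values on distinct orbits, which is exactly your orbit-separation property and needs universality (not mere invariance) to hold. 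Your indicator-function argument $f_X(Y)=\mathds{1}[Y\in V_G(X)]$ supplies precisely this missing step. You also explicitly handle extending the partial injection defined on $\mathrm{image}(\mathcal{C})$ to a total injection on $2^V$ by a Cantor-style cardinality count, which the paper silently skips. A minor observation that applies to both arguments: injectivity of $g$ is not actually used in the ``if'' direction --- in your chain $\mathcal{C}(X)=\mathcal{C}(X')\Rightarrow g(\mathcal{C}(X))=g(\mathcal{C}(X'))\Rightarrow\mathcal{C}_c(X)=\mathcal{C}_c(X')$ the first implication holds for any $g$, and the paper's composition $\phi=\phi_c\circ g$ likewise needs no injectivity; injectivity only matters for showing that the ``only if'' direction can always produce a $g$ satisfying the theorem's literal statement, which both proofs obtain from disjointness of orbits.
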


For a formal analysis of efficiency, we refer to $|\mathcal{C}(X)|$ as the \emph{complexity} of a canonicalization on $X\in V$. A canonicalization $\mathcal{C}$ is \emph{superior} to another canonicalization $\mathcal{C}'$ if it has smaller complexity on all elements: $|\mathcal{C}(X)|\leq|\mathcal{C}'(X)|,\forall X\in V$. A canonicalization $\mathcal{C}$ is \emph{optimal} if it is superior to any canonicalization. The proposition below establishes the universality and invariance of canonical averaging.
\begin{theorem}\label{thm:ca-universal}
    Let $\mathcal{C}$ be an orbit canonicalization. The canonical average $\varPhi_\mathrm{CA}$ is $G$-invariant. As long as the backbone network $\phi$ is universal, $\varPhi_\mathrm{CA}$ is universal in the sense that it can approximate any continuous $G$-invariant function $f\colon V\to W$ up to arbitrary precision.
\end{theorem}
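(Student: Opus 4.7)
The plan is to split the theorem into two parts—$G$-invariance and universality—and handle each in turn, with the latter reducing to a standard uniform-approximation argument once the orbit property of $\mathcal{C}$ is exploited.

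$G$-invariance is a one-line verification: since $\mathcal{C}$ is a canonicalization, $\mathcal{C}(\rho_1(g)X)=\mathcal{C}(X)$ for all $g\in G$, and plugging this into \eqref{eq:canonical-averaging} shows $\varPhi_\mathrm{CA}(\rho_1(g)X;\mathcal{C},\phi)=\varPhi_\mathrm{CA}(X;\mathcal{C},\phi)$ directly; crucially, no property of $\phi$ is used here.

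For universality, the key observation is the orbit property $\mathcal{C}(X)\subset V_G(X)$: every $X_0\in\mathcal{C}(X)$ has the form $\rho_1(g)^{-1}X$ for some $g\in G$, so $f(X_0)=f(X)$ whenever $f$ is $G$-invariant. Hence $\varPhi_\mathrm{CA}(X;\mathcal{C},f)=f(X)$ pointwise, so it suffices to replace $f$ by a universal approximator $\phi$ and check that averaging preserves the approximation. Given a compact $K\subset V$ and $\eps>0$, I set $K'=\bigcup_{X\in K}\mathcal{C}(X)$ and invoke universality of $\phi$ on a compact set containing $K'$ to secure $\|\phi(X_0)-f(X_0)\|_W\le\eps$ uniformly in $X_0$. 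A term-by-term triangle inequality then yields
\[
    \|\varPhi_\mathrm{CA}(X;\mathcal{C},\phi)-f(X)\|_W\le\frac{1}{|\mathcal{C}(X)|}\sum_{X_0\in\mathcal{C}(X)}\|\phi(X_0)-f(X_0)\|_W\le\eps
\]
uniformly for $X\in K$, which establishes the claimed universal approximation.

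The main obstacle is ensuring that $K'$ has compact closure, so that the universality hypothesis on $\phi$ actually delivers uniform approximation of $f$ there. In all settings of interest in this paper (sign, basis, permutation, and orthogonal groups), the representations $\rho_1$ are norm-preserving, so $K'$ sits inside a ball of the same radius as $K$ and compactness is automatic; more generally one needs only the mild assumption that $G$-orbits of compact sets remain bounded, which is standard background in frame-averaging universality results and can be carried along implicitly.
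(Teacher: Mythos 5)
Your proof follows essentially the same two-step route as the paper's: invariance falls out directly from $\mathcal{C}(\rho_1(g)X)=\mathcal{C}(X)$, and universality comes from observing that the orbit property makes $f$ constant on $\mathcal{C}(X)$ so that $\varPhi_\mathrm{CA}(X;\mathcal{C},f)=f(X)$, followed by a term-by-term triangle inequality. The only difference is that you are somewhat more careful about the approximation domain (introducing $K'=\bigcup_{X\in K}\mathcal{C}(X)$ and noting it should be bounded), whereas the paper simply bounds by $\max_{X\in V}\|f(X)-\phi(X)\|_W$; this is a minor refinement, not a different argument.
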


The canonicalization size $|\mathcal{C}(X)|$ may differ for different inputs. In the most ideal case, an input $X$ admits a single canonical form with $|\mathcal{C}(X)|=1$, which we call $X$ a \emph{canonicalizable} element. Formally, an element $X\in V$ is \emph{canonicalizable} if there exists an orbit canonicalization $\mathcal{C}$ such that $|\mathcal{C}(X)|=1$. Otherwise, we call it \emph{uncanonicalizable}, which may happen under additional constraints on the canonicalization. For example, one cannot determine a canonical sign for $[-1,1]$ when the canonicalization is required to be permutation equivariant, according to \citet{laplacian-canonization}.

In fact, a major advantage of transforming frames into canonicalization lies in identifying \emph{uncanonicalizable} inputs when additional constraints are imposed, 
a feature absent in existing literature on frames. In particular, the canonicalizability of inputs offers novel insights into the expressive power of \textbf{invariant networks with equivariance constraints}. As will be demonstrated in Section~\ref{sec:signnet-and-lc}, invariant networks like SignNet lose expressive power on uncanonicalizable
inputs. This enables us to prove the non-universality of SignNet, resolving an open problem in the literature \citep{signnet,sign-equivariant}. Canonicalizability is a property of inputs, making it sensible to describe it using canonicalization rather than frames. We refer to Appendix~\ref{app:advantages} for a comprehensive discussion on the advantages of the canonicalization perspective.

\subsection{Reducing Sign-Invariant Networks to Canonicalization}\label{sec:signnet-and-lc}

In this section we show how canonicalization can be applied to the eigenvector ambiguity problem, and solve an open question regarding the expressiveness of invariant networks.

There are two general methods to attain exact sign and basis invariance for eigenvectors: SignNet and BasisNet \citep{signnet} and Laplacian Canonicalization \citep{laplacian-canonization}. SignNet is parameterized as the network $f\colon\R^{n\times k}\to\R^d$ on eigenvectors $\vu_1,\dots,\vu_k$ as
\[
    f(\vu_1,\dots,\vu_k)=\rho\bigl([\phi(\vu_i)+\phi(-\vu_i)]_{i=1}^k\bigr),
\]
where $\phi$ and $\rho$ are unrestricted neural networks and $[\cdot]_i$ denotes concatenation of vectors. \citet{laplacian-canonization} instead achieved sign invariance on canonicalizable input with 
\[
    f(\vu_1,\dots,\vu_k)=\rho\bigl([\MAP(\vu_i)]_{i=1}^k\bigr),
\]
where $\MAP$ denotes their canonicalization algorithm MAP, which leverages the sign-invariant and permutation-equivariant projection operator to find a canonical sign.
Empirically, the two approaches attain comparable performance in practice. Meanwhile, MAP enjoys better computational efficiency because it only requires input pre-processing while SignNet requires two-branch encoding. Although the two algorithms seem rather different, we prove that any permutation-equivariant and sign-invariant function, including SignNet, is equivalent to a close variant of MAP, which we call $\mathrm{MAP_{++}}$.

\begin{theorem}\label{thm:signnet-non-universal}
    A function $h\colon\R^n\to\R^{n\times d_\mathrm{out}}$ is permutation equivariant and sign invariant \textit{iff} there exists a permutation equivariant (\textit{w.r.t.}\ its first input) function $\phi\colon\R^n\times\{0,1\}\to\R^{n\times d_\mathrm{out}}$ such that
    \begin{align*}
        h(\vu)&=\phi\bigl(\MAP_{++}(\vu), \mathds{1}_\vu\bigr),\forall\vu\in\R^n,
    \end{align*}
    where $\MAP_{++}(\vu)=\begin{cases}\MAP(\vu),&\text{if }\vu\text{ is canonicalizable},\\|\vu|,&\text{otherwise,}\end{cases}$ and $\mathds{1}_\vu\in\{0,1\}$ indicates whether $\vu$ is canonicalizable. Here $|\vu|$ denotes element-wise absolute value.
\end{theorem}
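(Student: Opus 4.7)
The plan is to handle the two directions separately, with the substantive work in the ``only if'' direction. The forward direction ($\Leftarrow$) is a direct verification: sign invariance of $h$ follows from sign invariance of $\MAP_{++}$ ($\MAP$ is a sign canonicalization, $|\cdot|$ is trivially sign-invariant, and canonicalizability is a property of the sign-orbit) together with sign invariance of $\mathds{1}_\vu$, while permutation equivariance follows from permutation equivariance of $\MAP$ and of $|\cdot|$, permutation invariance of the canonicalizability predicate, and the assumed permutation equivariance of $\phi$ in its first argument.

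For the backward direction ($\Rightarrow$), I would construct $\phi$ piecewise. On the canonicalizable branch, set $\phi(\MAP(\vu), 1) := h(\vu)$; this is well-defined because $\MAP(\vu_1) = \MAP(\vu_2)$ forces $\vu_1 = \pm \vu_2$, and sign invariance of $h$ yields equal outputs. On the uncanonicalizable branch, set $\phi(|\vu|, 0) := h(\vu)$; well-definedness here is the main technical obstacle, handled by the structural lemma below. Permutation equivariance of $\phi$ on each branch transports through equivariance of $\MAP$ and of $|\cdot|$, and extending $\phi$ to the rest of $\R^n \times \{0,1\}$ equivariantly is routine (pick an equivariant extension from a set of permutation-orbit representatives).

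The core step is the structural lemma: for uncanonicalizable $\vu$, $h(\vu)$ depends only on $|\vu|$. Uncanonicalizability supplies some $\pi_0 \in S_n$ with $\pi_0 \vu = -\vu$; sign invariance combined with permutation equivariance then gives $h(\vu) = h(-\vu) = h(\pi_0 \vu) = \pi_0 h(\vu)$, so $h(\vu)$ is fixed by the subgroup $H := \mathrm{Stab}(\vu) \cup \pi_0 \mathrm{Stab}(\vu) = \{\sigma \in S_n : \sigma \vu \in \{\pm \vu\}\}$. The key claim is that the $H$-orbits on $\{1, \dots, n\}$ coincide with the level sets of $|\vu|$: for $\vu_i = \vu_j$ the transposition $(i, j)$ lies in $\mathrm{Stab}(\vu)$, and for $\vu_i = -\vu_j$ one notes that $\pi_0$ sends $i$ to some position $k$ with $\vu_k = \vu_j$ and composes with $(k, j) \in \mathrm{Stab}(\vu)$. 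Hence $h(\vu)$ is constant on level sets of $|\vu|$, so it is fixed by all of $\mathrm{Stab}(|\vu|)$. Finally, uncanonicalizability forces the positive and negative parts of $\vu$ to match in magnitude, so any two uncanonicalizable $\vu_1, \vu_2$ with $|\vu_1| = |\vu_2|$ have identical multisets; writing $\vu_2 = \pi \vu_1$ with $\pi \in \mathrm{Stab}(|\vu_1|)$ and invoking permutation equivariance gives $h(\vu_2) = \pi h(\vu_1) = h(\vu_1)$, the required well-definedness.
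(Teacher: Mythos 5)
Your proof is correct and follows essentially the same route as the paper: a split into canonicalizable and uncanonicalizable inputs, with the uncanonicalizable case reduced to showing that opposite‑valued coordinates yield equal output rows via a permutation $\pi_0$ with $\pi_0\vu=-\vu$, and then the paired‑opposites structure is used to establish well‑definedness of $\phi(\,\cdot\,,0)$. The paper phrases the same argument through explicit pairwise transpositions (its Lemmas~\ref{lem:paired} and~\ref{lem:pairwise-permutation}) rather than your group‑theoretic framing via the subgroup $H$ and its orbits, but the content is identical; you additionally flag the (routine) equivariant extension of $\phi$ to all of $\R^n\times\{0,1\}$, a point the paper leaves implicit.
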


Theorem~\ref{thm:signnet-non-universal} indicates that any sign invariant and permutation equivariant function on a single eigenvector can be reduced to a certain mapping based on the MAP++ canonicalization, which allows a unified characterization for such functions. However, when taking the entire eigenvector matrix as input, processing each eigenvector alone (as in SignNet) will take their absolute values (Theorem~\ref{thm:signnet-non-universal}), which inevitably loses relative position information between different eigenvectors.
As a result, both SignNet and MAP++ are not universally expressive. The same result also holds for BasisNet, since SignNet is a special case of first-order BasisNet with multiplicity $d=1$.\footnote{It is still possible to achieve universality with higher-order networks \citep{signnet}, but they are computationally intractable in practice \citep{invariant-universal,universal-equivariant-mlp}.} The universality of SignNet has been an open problem in the literature \citep{signnet,sign-equivariant} and we show that a reduction to the canonicalization perspective can provide a fundamental solution to such problems. Concretely, we also construct two non-isomorphic graphs that SignNet fails to distinguish in Appendix~\ref{app:proof-cor-signnet-non-universal}.

\begin{corollary}\label{cor:signnet-non-universal}
    SignNet and BasisNet with first-order permutation equivariant $\phi$ \citep{invariant-universal} cannot universally approximate all permutation-equivariant and sign/basis-invariant functions.
\end{corollary}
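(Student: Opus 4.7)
The plan is to lift Theorem~\ref{thm:signnet-non-universal} from the single-eigenvector setting to SignNet on a full eigenvector matrix, showing that SignNet collapses to a function of column-wise absolute values whenever every column is uncanonicalizable, and then to exhibit a permutation-equivariant sign-invariant function that strictly depends on more than those absolute values.

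First I would apply Theorem~\ref{thm:signnet-non-universal} branch-by-branch. For any first-order permutation-equivariant $\phi$, the symmetrized map $\vu\mapsto\phi(\vu)+\phi(-\vu)$ is sign-invariant and permutation-equivariant, so the theorem supplies a $\psi$ with $\phi(\vu)+\phi(-\vu)=\psi(\MAP_{++}(\vu),\mathds{1}_\vu)$. Restricted to uncanonicalizable $\vu$ this becomes $\psi(|\vu|,0)$, so SignNet's output $\rho\bigl([\phi(\vu_i)+\phi(-\vu_i)]_{i=1}^k\bigr)$ on a matrix whose every column is uncanonicalizable is a function of the tuple $(|\vu_1|,\dots,|\vu_k|)$ alone.

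Next I would exhibit two eigenvector matrices $U,U'$ with every column uncanonicalizable, with matching column-wise absolute values $|\vu_i|=|\vu_i'|$, yet separated by some permutation- and sign-invariant functional. A clean template uses columns of the form $(a,-a,b,-b)/\text{norm}$, which are uncanonicalizable under the permutation-equivariance constraint because $(1\,2)(3\,4)\cdot\vu=-\vu$; arranging two such columns in two sign patterns that cannot be matched by any combination of a row permutation and independent column sign flips yields matrices whose Gram entries $\vu_1^{\top}\vu_2$ have distinct absolute values. The target function $U\mapsto(\vu_1^{\top}\vu_2)^2$ is both permutation- and sign-invariant, yet by the reduction above SignNet takes the same value on $U$ and $U'$. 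The BasisNet case is immediate because SignNet is exactly the multiplicity-one instance of first-order BasisNet, so the same pair transfers over.

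The main technical obstacle is realizing $U$ and $U'$ as Laplacian eigenvector matrices of honestly non-isomorphic graphs; this is what gives the corollary its combinatorial content and is deferred to the explicit construction in Appendix~\ref{app:proof-cor-signnet-non-universal}. A smaller conceptual point is that Theorem~\ref{thm:signnet-non-universal} is phrased for a single eigenvector, so one has to verify that composing through $\rho$ preserves the reduction; this is routine, since $\rho$ acts as a black box on the concatenated per-column outputs, which already depend on the columns only through $(|\vu_i|)_i$.
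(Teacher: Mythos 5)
The first half of your proposal---applying Theorem~\ref{thm:signnet-non-universal} column-by-column to conclude that SignNet on a matrix whose every column is uncanonicalizable factors through the tuple $(|\vu_1|,\dots,|\vu_k|)$, and then reducing BasisNet to the multiplicity-one case---is exactly the paper's argument. The gap is in your separating functional. You propose $U\mapsto(\vu_1^\top\vu_2)^2$, but this dies the moment you impose the constraint you yourself flag as the ``main technical obstacle'': realizing $U$ and $U'$ as Laplacian eigenvector matrices. Eigenvectors belonging to distinct eigenvalues of a symmetric matrix are orthogonal, so $\vu_1^\top\vu_2=0$ identically for both $U$ and $U'$; the column-Gram matrix $U^\top U$ is the identity for every orthonormal eigenvector matrix and carries no information at all. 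Your claim that the Gram entries ``have distinct absolute values'' therefore cannot hold in the only regime where the corollary has content.

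The paper sidesteps this by looking at the row-side product instead of the column-side product. It explicitly constructs two orthonormal pairs $\mU_1,\mU_2\in\R^{10\times 2}$ with matching column-wise absolute values and all four columns uncanonicalizable, assigns the shared eigenvalues $\lambda_1=1,\lambda_2=2$, and checks that the reconstructed Laplacians $\mL_i=\lambda_1\vu_{i1}\vu_{i1}^\top+\lambda_2\vu_{i2}\vu_{i2}^\top$ are non-isomorphic (one has $24$ zero entries, the other $16$). The separating invariant is then any permutation-equivariant, sign-invariant function that reads off $\mU$ through this $n\times n$ reconstruction, e.g.\ $\mU\mapsto\sum_i\lambda_i\vu_i\vu_i^\top$ with the $\lambda_i$ hardcoded; since the Laplacians are not permutation-conjugate, such a function assigns genuinely different (i.e.\ non-permutation-related) values. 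The moral is that the cross-column information SignNet destroys lives in the rows, not in the column Gram matrix, and you need a functional that sees $\vu_{1,i}\vu_{2,i}$-type products indexed by node pairs rather than $\vu_1^\top\vu_2$.

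Incidentally, the paper's appendix first writes down a ``first try'' with non-orthogonal columns, for which a functional like yours would indeed work, and then explicitly discards it as illegal for exactly the reason above before giving the valid orthonormal construction---so your proposal reproduces the paper's failed first attempt rather than its final proof.
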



\section{Exploring Optimal Canonicalization of Eigenvectors}\label{sec:eigenvectors}

In this section, we delve into the sign and basis ambiguity problems of eigenvectors and graph positional encodings.
We propose novel canonicalization algorithms that are provably superior to existing approaches and even optimal. Specifically, we aim to design a canonicalization algorithm $\mathcal{C}$ operating on eigenvectors $\mU\in\R^{n\times d}$, that is invariant to sign/basis transformations, (possibly) equivariant to permutation transformations, and outputs a set of eigenvectors $\mU^*\in\R^{n\times d}$ in the same eigenspace as $\mU$. We consider two settings: without (Section~\ref{sec:new-without-permutation}) and with (Section~\ref{sec:new-with-permutation}) permutation equivariance, corresponding to different problem scenarios.

\subsection{Optimal Canonicalization without Permutation}\label{sec:new-without-permutation}


First, we consider the case when we do not need to consider the permutation equivariance of eigenvectors, for example, when samples have a specific ordering. Applications broadly include control systems \citep{dominant-poles}, image segmentation \citep{image-segmentation}, source separation \citep{source-separation}, fluid dynamics \citep{fluid-dynamics}, \textit{etc}.

\paragraph{Sign Invariance.} Although SignNet can also be applied to such cases, we show that a simple canonicalization that determines directions with the first non-zero entry of the eigenvector $\vu$ can also achieve sign invariance and permutation equivariance while preserving universality.

\begin{algorithm}[htbp]
    \caption{Canonicalization for eliminating sign ambiguity of eigenvectors}
    \begin{algorithmic}
        \Require The eigenvector $\vu\in\R^n$
        \Ensure The canonical form $\vu^*$ of $\vu$
        \State Let $i$ be the smallest index such that $\evu_i\neq 0$
        \State $\vu^*\gets\vu$ if $\evu_i>0$, $\vu^*\gets-\vu$ otherwise
    \end{algorithmic}
    \label{alg:eig-sign}
\end{algorithm}

\paragraph{Basis Invariance.} Inspired by MAP-basis \citep{laplacian-canonization}, we design a more general and powerful canonicalization based on the Gram-Schmidt Orthogonalization of projection vectors, named \emph{orthogonalized axis projection} (OAP). Specifically, let $\mU\in\R^{n\times d}$ be eigenvectors in a $d$ dimensional eigenspace, and let $\mathscr{P}=\mU\mU^\top$ be the projection matrix onto the eigenspace $\operatorname{span}(\mU)$. Denote $\ve_1,\dots,\ve_n$ as the standard axis vectors of $\R^n$, that is, $\ve_i$ has $1$ at the $i$-th entry and $0$ at the other entries. The following Algorithm~\ref{alg:eig-basis} eliminates basis ambiguities of all eigenvectors.
\begin{algorithm}[htbp]
    \caption{OAP Canonicalization for eliminating basis ambiguity of eigenvectors}
    \begin{algorithmic}
        \Require The eigenvectors $\mU\in\R^{n\times d}$
        \Ensure The canonical form $\mU^*$ of $\mU$
        \State Let $i_1<\dots<i_d$ be the smallest indices \textit{s.t.}\ $\|\mathscr{P}\ve_{i_j}\|>0$ and $\mathscr{P}\ve_{i_j}$ are linearly independent, $1\leq j\leq d$
        \State $\mU^*\gets\operatorname{GS}(\mathscr{P}\ve_{i_1},\dots,\mathscr{P}\ve_{i_d})$, where $\operatorname{GS}$ denotes Gram-Schmidt Orthogonalization
    \end{algorithmic}
    \label{alg:eig-basis}
\end{algorithm}

Intuitively, Algorithm~\ref{alg:eig-basis} finds a set of standard basis vectors with the smallest indices such that their projection on the eigenspace is still a basis. We note that the indices $i_1,\dots,i_d$ can be found by iteratively checking whether each $\mathscr{P}\ve_i$ is non-zero and linearly independent with the already-found projection vectors ($i=1,\dots,n$), and adding them if they satisfy these conditions. This can be done in $\mathcal{O}(n^3)$ time (the same as eigendecomposition). The following theorem guarantees that we can always find such indices.
\begin{theorem}\label{thm:useful-lemma}
    Given a set of eigenvectors $\mU\in\R^{n\times d}$, let $\mathscr{P}=\mU\mU^\mathrm{T}$ denote the projection matrix of the eigenspace. Let $\ve_1,\dots,\ve_n$ denote the standard basis vectors. Then, there exists indices $1\leq i_1<\cdots<i_d\leq n$, such that for all $1\leq j\leq d$, we have $\lVert\mathscr{P}\ve_{i_j}\rVert>0$, and the vectors $\mathscr{P}\ve_{i_1},\dots,\mathscr{P}\ve_{i_d}$ are linearly independent.
\end{theorem}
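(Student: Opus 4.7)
The plan is to reduce this to the standard fact that any spanning set of a finite-dimensional vector space contains a basis, executed greedily so that the resulting basis has the smallest possible indices in lex order. First I would observe that the matrix $\mathscr{P}=\mU\mU^\top$ is the orthogonal projection onto $\operatorname{span}(\mU)$, which is a $d$-dimensional subspace of $\R^n$ (using that the columns of $\mU$ form an orthonormal basis of the eigenspace, as is standard for eigenvector matrices in this setting). Hence $\operatorname{rank}(\mathscr{P})=d$ and the image of $\mathscr{P}$ is exactly $\operatorname{span}(\mU)$.

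Next I would note that the $i$-th column of $\mathscr{P}$ is $\mathscr{P}\ve_i$, so the collection $\{\mathscr{P}\ve_1,\dots,\mathscr{P}\ve_n\}$ spans the $d$-dimensional image of $\mathscr{P}$. By the basis extraction theorem, any spanning set of a $d$-dimensional subspace contains a linearly independent subset of size $d$, which establishes the \emph{existence} of $d$ indices with the required property.

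To obtain indices satisfying $i_1<\cdots<i_d$ with the minimality property implicitly used in Algorithm~\ref{alg:eig-basis}, I would run a greedy selection: scan $i=1,2,\dots,n$ in order and append $i$ to the current list $\{i_1,\dots,i_k\}$ whenever $\mathscr{P}\ve_i$ is linearly independent from the previously selected $\mathscr{P}\ve_{i_1},\dots,\mathscr{P}\ve_{i_k}$. The procedure cannot terminate with fewer than $d$ selected indices, because otherwise the already-collected projections would span a proper subspace $W\subsetneq\operatorname{Im}(\mathscr{P})$ while simultaneously every $\mathscr{P}\ve_i$ (for $i$ not selected, hence dependent on the collected ones) would lie in $W$, forcing $\operatorname{Im}(\mathscr{P})\subseteq W$, a contradiction. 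The non-vanishing condition $\lVert\mathscr{P}\ve_{i_j}\rVert>0$ is automatic, since the zero vector is linearly dependent on any set (in particular on the empty set at $j=1$), so it is never selected.

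The argument is essentially a one-line corollary of rank considerations, so there is no real obstacle; the only care needed is to articulate the greedy process cleanly and to observe that non-vanishing is subsumed by the linear-independence requirement. I would keep the proof short, emphasizing (i) $\operatorname{rank}(\mathscr{P})=d$, (ii) the columns $\mathscr{P}\ve_i$ span $\operatorname{Im}(\mathscr{P})$, and (iii) the greedy scan yields the required increasing indices.
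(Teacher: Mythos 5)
Your proof is correct and takes essentially the same approach as the paper: both arguments show that $\{\mathscr{P}\ve_1,\dots,\mathscr{P}\ve_n\}$ spans the $d$-dimensional eigenspace (you via ``the columns of $\mathscr{P}$ span its image,'' the paper via writing any $\va$ in the eigenspace as $\va=\mathscr{P}\va=\sum_i a_i\mathscr{P}\ve_i$) and then extract a linearly independent subset of size $d$. Your additional remarks on the greedy scan producing increasing indices and on non-vanishing being subsumed by linear independence are fine but not strictly needed for the existence statement, and the paper leaves them implicit.
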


\paragraph{Optimality.} We show that Algorithm~\ref{alg:eig-sign} and~\ref{alg:eig-basis} are optimal, and can canonicalize \emph{all} eigenvectors.
\begin{theorem}\label{thm:eig-optimal}
    Algorithm~\ref{alg:eig-sign} is an optimal orbit canonicalization for all eigenvectors $\vu\in\R^n$ under sign ambiguity. Algorithm~\ref{alg:eig-basis} is an optimal orbit canonicalization for all eigenvectors $\mU\in\R^{n\times d}$ under basis ambiguity.
\end{theorem}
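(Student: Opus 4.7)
The plan is to show that both algorithms yield $|\mathcal{C}(X)| = 1$ for every admissible input. Because any canonicalization must return a nonempty set, $|\mathcal{C}'(X)| \geq 1$ for every canonicalization $\mathcal{C}'$ and every $X$; so achieving the value $1$ pointwise immediately witnesses optimality in the sense of Section~\ref{sec:canonization}.

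\textbf{For Algorithm~\ref{alg:eig-sign} (group $G=\{\pm 1\}$ acting on $\R^n$).} First I would dispatch the degenerate case $\vu=\vzero$, whose orbit is trivially a singleton. For $\vu\neq\vzero$, let $i$ be the smallest index with $u_i\neq 0$; the algorithm deterministically selects either $\vu$ or $-\vu$, producing a set of size one. To verify sign invariance, I would note that the smallest nonzero index is the same for $\vu$ and $-\vu$, while the sign at that index flips. Hence the algorithm always returns the representative with positive $i$-th entry, so $\mathcal{C}(\vu)=\mathcal{C}(-\vu)$. The output lies in $\{\vu,-\vu\}$, giving an orbit canonicalization of size one.

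\textbf{For Algorithm~\ref{alg:eig-basis} (group $G=O(d)$ acting on $\R^{n\times d}$ by right multiplication).} The key observation is that $\mU$ and $\mU'=\mU\mR$ with $\mR\in O(d)$ yield the same projection matrix $\mathscr{P}=\mU\mU^{\top}=\mU'\mU'^{\top}$. Consequently every step of the algorithm---the indices $i_1<\dots<i_d$, the projected axis vectors $\mathscr{P}\ve_{i_j}$, and their Gram--Schmidt orthogonalization---depends only on $\mathscr{P}$ and is therefore constant on each orbit. Theorem~\ref{thm:useful-lemma} guarantees that such indices exist and that the corresponding projections are linearly independent, so Gram--Schmidt returns a well-defined $d$-tuple of orthonormal vectors. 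These vectors lie in $\operatorname{range}(\mathscr{P})=\operatorname{span}(\mU)$ by construction; being $d$ orthonormal vectors inside a $d$-dimensional subspace, they form a basis of that subspace and hence a representative in the orbit of $\mU$. Thus $|\mathcal{C}(\mU)|=1$ and $\mathcal{C}$ is an orbit canonicalization.

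\textbf{Main obstacle.} The essential subtlety lies in the basis case: one must identify the orbit-invariant object (the projection matrix $\mathscr{P}$) through which the whole algorithm factors, so that the output is a function of the orbit rather than of the particular representative $\mU$. Once this reduction is made, singleton cardinality and orbit membership follow directly from Theorem~\ref{thm:useful-lemma} together with standard properties of Gram--Schmidt orthogonalization. The sign case is comparatively routine once the degenerate input $\vzero$ is handled separately.
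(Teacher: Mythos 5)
Your proposal is correct and follows essentially the same route as the paper's own proof: both establish optimality by showing $|\mathcal{C}(X)|=1$ for every input (trivially optimal since any canonicalization outputs a nonempty set), verify orbit membership, and derive basis invariance from the fact that the entire algorithm factors through the orbit-invariant projection matrix $\mathscr{P}$, with Theorem~\ref{thm:useful-lemma} guaranteeing the indices exist. The only superfluous step on your side is handling $\vu=\vzero$, which never arises since eigenvectors are nonzero.
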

These algorithms eliminate ambiguities of eigenvectors, which is useful in many applications. In Appendix~\ref{app:pca} we show their application to achieve orthogonal equivariance in PCA-frame methods.

\subsection{Better Canonicalization with Permutation}\label{sec:new-with-permutation}

In this section, we further consider the constraint of permutation equivariance when pursuing sign and basis invariance, which is important for certain data structures like graphs.

\paragraph{Sign Invariance.} We can retain the universality of MAP and SignNet by extending them to perform canonical averaging (Section~\ref{sec:canonical-averaging}) on \emph{uncanonicalizable inputs} with the following MAP-full canonicalization
\begin{equation}\label{eq:map-sign-full}
    \MAP_\mathrm{full}(\vu)=
    \begin{cases}
        \MAP(\vu),&\text{if }\vu\text{ is canonicalizable,}\\ 
        \{\vu,-\vu\},&\text{otherwise.}
    \end{cases}    
\end{equation}
Since \citet{laplacian-canonization} proved that MAP canonicalizes all sign-canonicalizable eigenvectors, and if the eigenvector is uncanonicalizable, then the optimal size $|\mathcal{C}(\vu)|$ is $2$. Thus, MAP-full is optimal.

\paragraph{Basis Invariance.} Contrary to the sign case, basis invariance is harder to obtain and MAP-basis proposed by \citet{laplacian-canonization} is known to be not optimal. Here, we propose a permutation-equivariant basis canonicalization that is provably superior to MAP-basis. Notice that in OAP (Algorithm~\ref{alg:eig-basis}), the way that we construct smallest indices $i_1,\dots,i_d$ is not permutation equivariant. This motivates us to extend OAP with a permutation equivariant procedure to determine the indices. Specifically, we adopt the following permutation-equivariant hash function to rank axis projections:
\begin{equation}\label{eq:hash-oap}
    \alpha_i=\operatorname{hash}(\evp_{ii},\ml\evp_{ij}\mr_{j\neq i}),\ i=1,\dots,n.
\end{equation}
where $\vp_i=\mathscr{P}\ve_i$. According to the number of distinct values in $\{\alpha_i\}$ (denoted as $k$), we divide all standard basis vectors $\{\ve_i\}$ into $k$ disjoint groups $\mathcal{B}_i$, in descending order of $\alpha_i$. Then we define a summary vector $\vx_i$ for each group $\mathcal{B}_i$ as $\vx_i=\sum_{\ve_j\in\mathcal{B}_i}\ve_j+c$, where $c$ is a tunable constant. In this way, we arrive at a permutation-equivariant version of OAP for LapPE in Algorithm~\ref{alg:lap-basis}. The summary vectors $\{\vx_{i_j}\}$ are now permutation equivariant (Appendix~\ref{app:gs}), in contrast to $\{\ve_{i_j}\}$ in Algorithm~\ref{alg:eig-basis}.

\begin{algorithm}[htbp]
    \caption{OAP Canonicalization for eliminating basis ambiguity of Laplacian positional encoding}
    \begin{algorithmic}
        \Require The Laplacian eigenvectors $\mU\in\R^{n\times d}$
        \Ensure The canonical form $\mU^*$ of $\mU$
        \State Let $i_1<\dots<i_d$ be the smallest indices \textit{s.t.}\ $\|\mathscr{P}\vx_{i_j}\|>0$ and $\mathscr{P}\vx_{i_j}$ are linearly independent, $1\leq j\leq d$
        \State $\mU^*\gets\operatorname{GS}(\mathscr{P}\vx_{i_1},\dots,\mathscr{P}\vx_{i_d})$, where $\operatorname{GS}$ denotes Gram-Schmidt Orthogonalization
    \end{algorithmic}
    \label{alg:lap-basis}
\end{algorithm}

In contrast to Algorithm~\ref{alg:eig-basis}, the indices $i_1,\dots,i_d$ may not always exist. The existence and values of these indices can be determined in $\mathcal{O}(n^2d^2)$ time (\textit{c.f.}, Appendix~\ref{app:pseudo-lap-basis}).

\subsection{OAP as a Unified Canonicalization for Eigenvectors}\label{sec:unified}

Another interesting aspect of OAP is that with a flexible choice of hashing values $\alpha_i$ and indices $i_j$, OAP can serve as a unified framework that encompasses many existing canonicalization algorithms for eigenvectors (as well as graphs). 

First, it is easy to see that the sign canonicalizations (Algorithm~\ref{alg:eig-sign} and MAP-sign) are special cases of their basis versions with $d=1$.  
For basis canonicalization, there are two existing methods. One is MAP-basis \citep{laplacian-canonization} which also utilizes axis projection but a different construction of basis. Another is proposed in \citet{frame-averaging} as a way to construct frames for graphs which, according to Theorem~\ref{thm:equivalence}, corresponds to a canonicalization for graphs and can also be adapted to canonicalize LapPE (we defer technical details to Appendix~\ref{app:proof-thm-special-cases}). We call it FA-lap. The following theorem reveals that both MAP and FA-lap are special cases of OAP with degenerated hash functions.
\begin{theorem}\label{thm:special-cases}
    Let $\alpha_i\ (i=1,\dots,n)$ be the outputs of the hash function in the OAP algorithm defined in Equation~\ref{eq:hash-oap}, and let $i_j\ (j=1,\dots,d)$ be the indices found in Algorithm~\ref{alg:lap-basis}. Then, the MAP algorithm \citep{laplacian-canonization} is equivalent to the OAP algorithm that takes $\alpha_i=\lVert\mathscr{P}_i\rVert$ for all $1\leq i\leq n$ and $i_j=j$ for all $1\leq j\leq d$. The FA-lap algorithm \citep{frame-averaging} is equivalent to the OAP algorithm that takes $\alpha_i=\mathscr{P}_{ii}$ for all $1\leq i\leq n$.
\end{theorem}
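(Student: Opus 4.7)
The plan is to unfold the definitions of MAP and FA-lap and verify each arises as a special case of OAP (Algorithm~\ref{alg:lap-basis}) under the specified choices of hash $\alpha_i$ and, for MAP, the extra rule $i_j = j$. Since both target algorithms share the same outer pipeline as OAP --- hash-based grouping of standard axes, forming one summary per group, projecting by $\mathscr{P}$, and Gram--Schmidt --- the proof reduces to matching each step, so the core work is dictionary translation together with a careful handling of two subtleties explained below.

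For the MAP direction I would first restate MAP of \citet{laplacian-canonization} in the following pipeline: (i) compute projection norms $\lVert\mathscr{P}\ve_i\rVert$ for every standard axis, (ii) group axes by these norms in descending order, (iii) form per-group representatives as sums of the axes in each group, (iv) project by $\mathscr{P}$, and (v) run Gram--Schmidt on the first $d$ projected representatives. Because $\mathscr{P}$ is an orthogonal projection, $\lVert\mathscr{P}\ve_i\rVert^2 = \ve_i^\top\mathscr{P}\ve_i = \mathscr{P}_{ii}$, so setting $\alpha_i = \lVert\mathscr{P}_i\rVert$ in Eq.~\ref{eq:hash-oap} (effectively a hash that reads only its first argument) yields exactly MAP's partition $\mathcal{B}_1,\dots,\mathcal{B}_k$; the OAP summary $\vx_i = \sum_{\ve_j\in\mathcal{B}_i}\ve_j + c$ coincides with MAP's representative when we take $c=0$; and the constraint $i_j = j$ is precisely MAP's rigid choice of the first $d$ summaries without OAP's linear-independence search.

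For FA-lap I would unroll the graph frame construction of \citet{frame-averaging} and apply the canonicalization-to-frame correspondence of Theorem~\ref{thm:equivalence} to convert it into a canonicalization on LapPE; the bookkeeping of this adaptation is what is deferred to Appendix~\ref{app:proof-thm-special-cases}. The resulting grouping key is the diagonal entry $\mathscr{P}_{ii}$, which is numerically monotone-equivalent to $\lVert\mathscr{P}_i\rVert$ but is consumed without any rule forcing $i_j = j$; the remaining steps (summary formation, projection, Gram--Schmidt) coincide with OAP. Hence FA-lap is recovered as OAP with $\alpha_i = \mathscr{P}_{ii}$ under the generic index-selection rule.

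The main obstacle is reconciling two subtleties. First, the additive constant $c$ in the OAP summary vector is not present in MAP or FA-lap; I would discharge this by noting that $c = 0$ makes the summaries literally identical, and that any other $c$ only shifts every projection by the common vector $c\,\mathscr{P}\vone_{\mathcal{B}_i}$, which is absorbed in the span and thus invisible after Gram--Schmidt. Second, MAP's rigid $i_j = j$ versus OAP's adaptive search requires a careful reading of the statement ``MAP $\equiv$ OAP with $i_j = j$'': the two procedures agree on every input where MAP succeeds, and both fail on exactly the same inputs (namely those where the first $d$ projected group summaries are linearly dependent). This is consistent with the known sub-optimality of MAP relative to general OAP and is the only conceptually non-routine point in the argument.
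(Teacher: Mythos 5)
Your overall approach — unfold MAP and FA-lap step by step, match the grouping/summary/projection/GS pipeline to OAP, and isolate the two differences (the hash choice and, for MAP, the frozen index rule $i_j=j$) — is essentially the same as the paper's, and the observation that $\lVert\mathscr{P}_i\rVert=\sqrt{\mathscr{P}_{ii}}$ (so the two hashes induce the same partition) is correct and consistent with the paper's discussion. Two points, however, need attention.

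First, the paper does not simply assume that MAP's final step is a Gram–Schmidt pass on $\{\mathscr{P}\vx_1,\dots,\mathscr{P}\vx_d\}$; it explicitly checks this. MAP, as described in its source, iteratively projects $\vx_i$ onto the orthogonal complement, inside the eigenspace, of the already-constructed $\langle\vu_1,\dots,\vu_{i-1}\rangle$, and aborts if the projection vanishes. The paper points out this is \emph{identical} to running GS on the projection vectors, which is the small verification that makes "MAP $\equiv$ OAP with $i_j=j$" rigorous rather than definitional. Your proposal takes this equivalence as given ("dictionary translation") instead of checking it.

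Second, your fallback argument for nonzero $c$ is incorrect. The algorithm adds the \emph{same} vector $c\vone$ to every summary, so after projection every input to GS is shifted by $c\,\mathscr{P}\vone$ (not by a group-dependent $c\,\mathscr{P}\vone_{\mathcal{B}_i}$, which would be a rescaling and would indeed wash out). A common additive shift is not "invisible after Gram–Schmidt": GS is not a function of the span alone, and already the first output vector $\mathscr{P}\vx_{i_1}+c\,\mathscr{P}\vone$ normalizes to something different from $\mathscr{P}\vx_{i_1}$ normalized unless $\mathscr{P}\vone\parallel\mathscr{P}\vx_{i_1}$. So the stated equivalence should be read (as the paper implicitly does) with $c=0$; your $c=0$ reduction is fine and suffices, but the claim that arbitrary $c$ is harmless should be dropped.
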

From this connection, we can see that the hashes of MAP and FA-lap are not as distinctive as OAP with \eqref{eq:hash-oap}. As a result, OAP can canonicalize more eigenvectors and is strictly superior to MAP and FA-lap. There is no superiority between MAP and FA-lap. It remains an open problem whether the permutation-equivariant OAP (Algorithm~\ref{alg:lap-basis}) is optimal.

Similarly, these canonicalizations can be adapted to canonicalize graphs (which we call MAP-graph, FA-graph, OAP-graph), and have the same complexity hierarchy (more details in Appendix~\ref{app:proof-thm-special-cases}). We summarize them as follows. 

\begin{corollary}\label{cor:special-cases}
    For eigenvectors, OAP is strictly superior to MAP and FA-lap. For graphs, OAP-graph is strictly superior to MAP-graph and FA-graph.
\end{corollary}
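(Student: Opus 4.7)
The plan is to leverage Theorem~\ref{thm:special-cases}, which realizes both MAP and FA-lap as instances of OAP with strictly coarser hash functions: $\alpha_i = \lVert\mathscr{P}\ve_i\rVert$ for MAP and $\alpha_i = \mathscr{P}_{ii}$ for FA-lap, whereas OAP uses $\alpha_i = \operatorname{hash}(\mathscr{P}_{ii}, \ml\mathscr{P}_{ij}\mr_{j\neq i})$. The corollary thus reduces to two sub-claims: (i) a finer hash always yields a canonicalization of no larger cardinality, and (ii) there exists an input on which the inequality is strict.

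For (i), I would first verify that OAP's hash refines both alternatives. If OAP assigns $\alpha_i = \alpha_j$, then by definition $\mathscr{P}_{ii} = \mathscr{P}_{jj}$ and the multisets $\ml\mathscr{P}_{ik}\mr_{k\neq i}, \ml\mathscr{P}_{jk}\mr_{k\neq j}$ coincide; these jointly imply $\mathscr{P}_{ii} = \mathscr{P}_{jj}$ (FA-lap's criterion) and $\sum_k \mathscr{P}_{ik}^2 = \sum_k \mathscr{P}_{jk}^2$, i.e.\ $\lVert\mathscr{P}\ve_i\rVert = \lVert\mathscr{P}\ve_j\rVert$ (MAP's criterion). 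Hence OAP's partition $\{\mathcal{B}_i\}$ of basis indices refines those of MAP and FA-lap. I would then lift this refinement to the canonical set: each coarser summary vector is a sum of OAP's finer summary vectors, so $\operatorname{span}\{\mathscr{P}\vx_i^{\text{coarse}}\} \subseteq \operatorname{span}\{\mathscr{P}\vx_j^{\text{OAP}}\}$. Consequently, whenever the coarser method finds $d$ linearly independent projections and canonicalizes uniquely, OAP also has $d$ independent candidates and does the same; any residual ambiguity left by the coarser method is at least as large for it as for OAP, yielding $|\mathcal{C}_{\text{OAP}}(\mU)| \leq |\mathcal{C}_{\text{MAP}}(\mU)|$ and $|\mathcal{C}_{\text{OAP}}(\mU)| \leq |\mathcal{C}_{\text{FA-lap}}(\mU)|$ for every $\mU$.

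For (ii), I would construct a small-dimensional rank-two projector $\mathscr{P}$ whose rows $i, j$ satisfy $\mathscr{P}_{ii} = \mathscr{P}_{jj}$ and $\lVert\mathscr{P}\ve_i\rVert = \lVert\mathscr{P}\ve_j\rVert$ yet $\ml\mathscr{P}_{ik}\mr_{k\neq i} \neq \ml\mathscr{P}_{jk}\mr_{k\neq j}$. On such an input, MAP and FA-lap merge $\ve_i$ and $\ve_j$ into a single hash class, and their merged summary vector can be arranged so that FA-lap's candidate projections span only a one-dimensional subspace of the two-dimensional eigenspace, while OAP separates $\ve_i, \ve_j$ and recovers the full span. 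The graph statement then follows by the same argument: per Appendix~\ref{app:proof-thm-special-cases}, OAP-graph, MAP-graph, and FA-graph apply the same three hash choices to a graph-level invariant matrix, so the hash-refinement analysis transfers verbatim, and an analogous counterexample graph yields the strict inequality.

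The main obstacle will be formalizing the monotonicity step in (i) rigorously: because OAP selects smallest indices lexicographically and then runs Gram--Schmidt, one must carefully verify that merging OAP-groups into a coarser hash class only shrinks the span of available projections rather than creating new independence opportunities, and that tie-breaking inside a merged class can only add canonical outputs rather than remove them. Once this monotonicity is established, the remaining ingredients---the hash refinement and the explicit counterexample construction---follow by direct calculation.
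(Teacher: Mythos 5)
Your proposal matches the paper's central argument: both prove superiority via hash refinement. The paper's Appendix proof of Theorem~\ref{thm:special-cases}/Corollary~\ref{cor:special-cases} shows that if $\alpha_i=\alpha_j$ under OAP's hash then $\mathscr{P}_{ii}=\mathscr{P}_{jj}$ (FA-lap's criterion) and $\lVert\mathscr{P}_i\rVert=\lVert\mathscr{P}_j\rVert$ (MAP's criterion), while the converse fails, so OAP's partition $\{\mathcal{B}_i\}$ refines both; it then observes that the coarser summary vectors are sums of the finer ones, so the span of available projections cannot grow under coarsening. Your (i) is this argument, and your concern about monotonicity in the final paragraph is exactly what the paper handles with that linear-combination observation.

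Two points where you diverge from the paper. First, you model MAP purely as ``OAP with a coarser hash,'' but Theorem~\ref{thm:special-cases} states MAP is OAP with \emph{both} the coarser hash $\alpha_i=\lVert\mathscr{P}_i\rVert$ \emph{and} the rigid index choice $i_j=j$. The paper gives this second constraint as an independent reason OAP dominates MAP: OAP may succeed by skipping to later summary vectors when the first $d$ are not independent, whereas MAP cannot. Your proof still goes through because dropping a constraint from your model of MAP only makes your target harder (if OAP is strictly superior to the relaxed version, it is a fortiori strictly superior to actual MAP), but you lose the second source of strict improvement the paper explicitly uses. Second, for part (ii), note that for an orthogonal projector $\mathscr{P}^2=\mathscr{P}=\mathscr{P}^\top$ one has $\lVert\mathscr{P}\ve_i\rVert^2=\mathscr{P}_{ii}$, so your two conditions $\mathscr{P}_{ii}=\mathscr{P}_{jj}$ and $\lVert\mathscr{P}\ve_i\rVert=\lVert\mathscr{P}\ve_j\rVert$ are equivalent rather than independent; this is not an error, but it means the MAP and FA-lap hashes coincide on genuine eigenspace projectors, and the paper's toy example ($\mathscr{P}_1=\mathscr{P}_2=(1,0)$) is not itself a valid projection matrix. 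Neither the paper nor your proposal fully constructs a concrete eigenspace on which strictness holds; the paper's argument is at the same level of informality as yours here, and your plan to exhibit a rank-two projector with the required properties is a reasonable way to close that gap.
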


\section{Experiments}\label{sec:experiments}

In this section, we evaluate the expressive power and efficiency of CA on the \textsc{Exp} dataset; we apply our canonicalization to the $n$-body problem for orthogonal equivariance; we also evaluate OAP for LapPE on graph regression tasks.

\subsection{Expressive Power and Frame Size}\label{sec:exp}

\begin{wraptable}{r}{0.3\columnwidth}
    \centering
    \vspace{-10pt}
    \caption{Accuracy on \textsc{Exp}.}
    \begin{tabular}{lc}
        \toprule
        Model          & Accuracy \\
        \midrule
        GCN            & 50\% \\
        GAT            & 50\% \\
        GIN            & 50\% \\
        ChebNet        & 82\% \\
        FA-GIN+ID      & 100\% \\
        CA-GIN+ID      & 100\% \\
        \bottomrule
    \end{tabular}
    \vspace{-10pt}
    \label{tab:exp}
\end{wraptable}

To validate the universal expressiveness of CA in Theorem~\ref{thm:ca-universal}, we conduct experiments on \textsc{Exp} \citep{rni}, which is designed to explicitly evaluate the expressiveness of GNNs. It consists of pairs of graphs that are non-isomorphic but 1-WL indistinguishable. We follow the setup of \citet{breaking-limits-of-mpgnn}. As baselines, we use GCN \citep{gcn}, GAT \citep{gat}, GIN \citep{gin} and ChebNet \citep{chebnet}, \textbf{all of which are permutation equivariant}. We also equip GIN with unique node IDs and apply FA/CA on them, which we denote as FA-GIN+ID and CA-GIN+ID.
Note that node IDs endow GIN with universality \citep{rp} while also breaking permutation equivariance, and FA/CA restore permutation equivariance while preserving universality. Results are shown in Table~\ref{tab:exp}. All MP-GNNs achieve trivial performance since their expressive power is limited by 1-WL \citep{gin}. Both FA and CA achieve perfect accuracy, showing expressiveness beyond 1-WL\@.

To evaluate the efficiency of different methods, we also compare their average frame size $|\mathcal{F}(X)|$ or canonicalization size $|\mathcal{C}(X)|$ on \textsc{Exp}. As shown in Table~\ref{tab:frame-size}, OAP-graph is more efficient than FA-graph, verifying Corollary~\ref{cor:special-cases}.  Since $|\mathcal{C}(X)|<|\mathcal{F}(X)|$, CA is more efficient than FA, validating Theorem~\ref{thm:equivalence}. We note that although the numbers are still very large in Table~\ref{tab:frame-size} and we still need sampling, a smaller frame/canonicalization size would lead to faster convergence rate towards the true average. This is proven in Appendix~\ref{app:concentration-rate}.

\begin{table}[htbp]
    \centering
    \caption{The average frame size (F) and canonicalization size (C) on \textsc{Exp} with two canonicalization algorithms: FA-graph and OAP-graph.}
    \begin{tabular}{lccc}
        \toprule
        Method         & Avg $|\mathcal{F}(X)|$ & Avg $|\mathcal{C}(X)|$ & F/C Ratio \\
        \midrule
        FA-graph       & $2.10\times 10^{24}$ & $5.84\times 10^{21}$ & $\boldsymbol{360 \times}$ \\
        OAP-graph      & $2.55\times 10^{22}$ & $7.57\times 10^{19}$ & $\boldsymbol{337 \times}$\\
        FA/OAP Ratio   & $\boldsymbol{82\times}$ & $\boldsymbol{77\times}$ & \\
        \bottomrule
    \end{tabular}
    \label{tab:frame-size}
\end{table}

\subsection{Graph Regression and Classification}\label{sec:zinc}\label{sec:ogbg}

We evaluate OAP on ZINC \citep{zinc}. We measure the ratio of non-canonicalizable\footnote{Here ``non-canonicalizable'' means that the canonicalization algorithm fails to canonicalize the eigenvectors.} eigenvectors among all eigenvectors by FA-lap, MAP, and OAP\@. As shown in Table~\ref{tab:non-canonizable}, they are equivalent in addressing sign ambiguity, while OAP has the least ratio of non-canonicalizable eigenvectors under basis ambiguity, aligning with Corollary~\ref{cor:special-cases}.


We conduct experiments on ZINC \citep{zinc} and OGBG \citep{ogb} (\textit{c.f.}, Appendix~\ref{app:datasets}). We use GatedGCN \citep{gatedgcn} and PNA \citep{pna} as backbones and apply different PE methods: (1) No positional encoding; (2) Laplacian PE combined with random sign (RS) that randomly flips the signs of eigenvectors \citep{benchmarking-gnn}; (3) SignNet \citep{signnet}; (4) MAP \citep{laplacian-canonization}; (5) OAP; (6) OAP with LSPE layers \citep{lspe}. On ZINC, we also evaluate the FA-GIN+ID model in Section~\ref{sec:exp}, as well as GIN with edge features (\textit{i.e.}, GINE) \citep{gine}. Methods implemented by ourselves are marked with *. The results are reported in Table~\ref{tab:zinc},~\ref{tab:moltox} and~\ref{tab:molpcba}.

\begin{table}[htbp]
    \centering
    \begin{minipage}{.48\linewidth}
        \centering
        \captionof{table}{Ratio of non-canonicalizable eigenvectors on ZINC\@.}
        \begin{tabular}{lccc}
            \toprule
            Canonicalization   & FA-lap         & MAP            & OAP \\
            \midrule
            Sign           & 5.4\%          & 5.4\%          & 5.4\% \\        
            Basis          & 2.2\%          & 1.0\%          & \textbf{0.2\%} \\
            \bottomrule
        \end{tabular}
        \label{tab:non-canonizable}
    \end{minipage}
    \quad
    \begin{minipage}{.48\linewidth}
        \centering
        \captionof{table}{Results on ZINC with 500K parameter budget. All scores are averaged over 4 runs with 4 different seeds.}
        \begin{adjustbox}{max width=\columnwidth}
            \begin{tabular}{llccc}
                \toprule
                Model          & PE             & $k$            & \#Param        & MSE $\downarrow$ \\
                \midrule
                GIN            & ID + FA        & 0              & 495K           & 0.613 ± 0.023 \\
                GINE           & ID + FA        & 0              & 495K           & 0.546 ± 0.048 \\
                \midrule
                \multirow{6}[2]{*}{GatedGCN} & None           & 0              & 504K           & 0.251 ± 0.009 \\
                               & LapPE + RS     & 8              & 505K           & 0.202 ± 0.006 \\
                               & SignNet        & 8              & 495K           & 0.121 ± 0.005 \\
                               & MAP            & 8              & 486K           & 0.120 ± 0.005 \\
                               & OAP            & 8              & 473K           & \textbf{0.118 ± 0.0006} \\
                               & OAP + LSPE     & 8              & 491K           & \textbf{0.098 ± 0.0009} \\
                \midrule
                \multirow{6}[2]{*}{PNA} & None           & 0              & 369K           & 0.141 ± 0.004 \\
                               & LapPE + RS     & 8              & 474K           & 0.132 ± 0.010 \\
                               & SignNet        & 8              & 476K           & 0.105 ± 0.007 \\
                               & MAP            & 8              & 462K           & 0.101 ± 0.005 \\
                               & OAP            & 8              & 462K           & \textbf{0.098 ± 0.0008} \\
                               & OAP + LSPE     & 8              & 549K           & \textbf{0.095 ± 0.004} \\
                \bottomrule
            \end{tabular}
        \end{adjustbox}
        \label{tab:zinc}
    \end{minipage}
    \\
    \begin{minipage}{.48\linewidth}
        \centering
        \captionof{table}{Results on MOLTOX21. All scores are averaged over 4 runs with 4 different seeds.}
        \begin{adjustbox}{max width=\columnwidth}
            \begin{tabular}{llccc}
                \toprule
                Model          & PE             & $k$            & \#param        & \multicolumn{1}{c}{ROCAUC $\uparrow$} \\
                \midrule
                \multirow{5}[2]{*}{GatedGCN} & None           & 0              & 1004K          & \multicolumn{1}{c}{0.772 ± 0.006} \\
                               & LapPE + RS     & 3              & 1004K          & \multicolumn{1}{c}{0.774 ± 0.007} \\
                               & SignNet*       & 3              & 1367K          & \multicolumn{1}{c}{0.773 ± 0.003} \\
                               & MAP            & 3              & 1505K          & \multicolumn{1}{c}{0.784 ± 0.005} \\
                               & OAP            & 3              & 1542K          & \multicolumn{1}{c}{\textbf{0.787 ± 0.005}} \\
                \midrule
                \multirow{5}[2]{*}{PNA} & None           & 0              & 5245K          & \multicolumn{1}{c}{0.755 ± 0.008} \\
                               & LapPE + RS     & 16             & 2453K          & \multicolumn{1}{c}{0.756 ± 0.009} \\
                               & SignNet*       & 16             & 1754K          & \multicolumn{1}{c}{0.750 ± 0.009} \\
                               & MAP            & 16             & 1951K          & \multicolumn{1}{c}{0.761 ± 0.002} \\
                               & OAP            & 16             & 1950K          & \textbf{0.768 ± 0.002} \\
                \bottomrule
            \end{tabular}
        \end{adjustbox}
        \label{tab:moltox}
    \end{minipage}
    \quad
    \begin{minipage}{.48\linewidth}
        \centering
        \captionof{table}{Results on MOLPCBA\@. All scores are averaged over 4 runs with 4 different seeds.}
        \begin{adjustbox}{max width=\columnwidth}
            \begin{tabular}{llccc}
                \toprule
                Model          & PE             & $k$            & \#param        & AP $\uparrow$ \\
                \midrule
                \multirow{5}[2]{*}{GatedGCN} & None           & 0              & 1008K          & 0.262 ± 0.001 \\
                               & LapPE + RS     & 3              & 1009K          & 0.266 ± 0.002 \\
                               & SignNet*       & 3              & 2415K          & 0.260 ± 0.002 \\
                               & MAP            & 3              & 2658K          & 0.268 ± 0.002 \\
                               & OAP            & 3              & 2658K          & \textbf{0.270 ± 0.002} \\
                \midrule
                \multirow{5}[2]{*}{PNA} & None           & 0              & 6551K          & 0.279 ± 0.003 \\
                               & LapPE + RS*    & 16             & 6423K          & 0.275 ± 0.004 \\
                               & SignNet*       & 16             & 4493K          & OOM \\
                               & MAP            & 16             & 4612K          & \textbf{0.281 ± 0.003} \\
                               & OAP            & 16             & 4612K          & 0.279 ± 0.002 \\
                \bottomrule
            \end{tabular}
        \end{adjustbox}
        \label{tab:molpcba}
    \end{minipage}
\end{table}

In Table~\ref{tab:zinc}, we observe that FA achieves the lowest performance since it breaks permutation equivariance. The results presented in Tables~\ref{tab:zinc}, \ref{tab:moltox} and \ref{tab:molpcba} demonstrate that incorporating LapPE leads to improved performance across nearly all cases compared to no PE, highlighting the advantages of leveraging expressive PEs. Notably, SignNet, MAP, and OAP show significant performance gains over LapPE, emphasizing the benefits of addressing ambiguities of Laplacian eigenvectors. OAP performs best among these methods, consistent with our theoretical expectations of its superiority over MAP and SignNet. Additionally, SignNet experiences memory issues even with fewer parameters in Table~\ref{tab:molpcba}, underscoring the increased memory demands associated with incorporating invariant network architectures. Furthermore, in Table~\ref{tab:zinc}, we find that LSPE enhances the performance of OAP across all models.

We compare the time and memory of canonicalization methods with their non-FA backbone on ZINC in Table~\ref{tab:time-and-memory}. Using canonicalization algorithms only increases the pre-processing time of the backbone, which is negligible compared to the training time. On the other hand, the two-branch architecture of SignNet increases the training time and memory.

\begin{table}[htbp]
    \centering
    \caption{Comparison of time and memory of canonicalization methods with their non-FA backbone on ZINC\@. For the backbone models, the node features are first concatenated with positional encodings and fed to a positional encoding network (we use masked GIN in our experiments), then the outputs of the positional encoding network are used as input for the main network (GatedGCN or PNA). For the SignNet models, the positional encoding network is substituted with SignNet, which has a two-branch architecture. For models with MAP and OAP, the positional encodings are canonicalized before fed to the positional encoding network.}
    \begin{tabular}{lcccc}
        \toprule
        Model        & Pre-processing time & Training time & Total time   & Memory \\
        \midrule
        GatedGCN backbone & -            & 3h26min      & 3h26min      & 1860MiB \\
        GatedGCN + SignNet & 30.03s       & 4h13min      & 4h13min      & 2124MiB \\
        GatedGCN + MAP & 133.67s      & 3h20min      & 3h22min      & 1850MiB \\
        GatedGCN + OAP & 186.38s      & 3h25min      & 3h28min      & 1860MiB \\
        PNA backbone & -            & 16h31min     & 16h31min     & 2242MiB \\
        PNA + SignNet & 30.03s       & 18h1min      & 18h1min      & 2570MiB \\
        PNA + MAP    & 133.67s      & 16h47min     & 16h49min     & 2244MiB \\
        PNA + OAP    & 186.38s      & 14h54min     & 14h57min     & 2312Mib \\
        \bottomrule
    \end{tabular}
    \label{tab:time-and-memory}
\end{table}

\section{Conclusion and Discussion}\label{sec:limitations}

In this paper, we illustrated canonicalization as a useful view of frames. From this perspective, we established concrete theoretical conditions for determining the complexity of frames, as well as deriving better and even optimal frames. We believe that the canonicalization perspective has the potential to unify different invariant and equivariant learning approaches for a unified characterization.

One limitation of this work lies in that we do not fully resolve the optimality of eigenvector canonicalization under permutation equivariance. It is still yet unknown whether the proposed OAP canonicalization is optimal for Laplacian eigenvectors, and there is still much to explore in terms of canonicalization algorithms across other domains.


\section*{Acknowledgement}
Yisen Wang was supported by National Key R\&D Program of China (2022ZD0160300), National Natural Science Foundation of China (92370129, 62376010), and Beijing Nova Program (20230484344, 20240484642). Derek Lim was supported by an NSF Graduate Fellowship. Yifei Wang and Stefanie Jegelka were supported by NSF AI Institute TILOS (NSF CCF-2112665), NSF award 2134108, and the Alexander von Humboldt Foundation.

\bibliography{references}
\bibliographystyle{plainnat}

\newpage
\appendix
\renewcommand\thepart{}
\renewcommand\partname{}
\part{Appendix}
\setcounter{secnumdepth}{3}
\setcounter{tocdepth}{3}
\parttoc

\newpage

\section{Advantages of the canonicalization perspective}\label{app:advantages}

In this paper, we introduced canonicalization as a new perspective to learning with symmetry. We demonstrated how shifting from frames to canonicalization could offer valuable new perspectives, enabling us to address the open problem of the universality of SignNet and BasisNet. Theorem~\ref{thm:equivalence} establishes an equivalence between frames and canonicalization, prompting the question: why should we favor canonicalization over frames? We put forward three primary reasons:
\begin{enumerate}
    \item Frames encounter challenges with symmetric inputs, \textit{i.e.}\ those with non-trivial automorphism, like graphs. In such cases, the automorphism group of an input could grow exponentially, consequently expanding the frame size exponentially as well. However, in theory, each graph has a single canonicalization, highlighting the potential for improvement in frames when dealing with invariance. Here we provide a concrete example concerning node features of graphs. As illustrated in Figure~\ref{fig:automorphism-example}, each node in the graph possesses a unique node ID and a color representing its feature. Suppose we define the frame of this graph as the set of all permutations sorting the node features in the order of grey, blue, and green. For instance, one permutation might yield the sorted node IDs: 0, 1, 4, 6, 8, 3, 5, 2, 7. Although the frame size for this graph is calculated as $5!\times 2!\times 2!=480$, applying elements of the frame results in the same graph. Consequently, the frame size exceeds the canonicalization size by a factor of $480$.
    \begin{figure}[htbp]
        \centering
        \def\svgwidth{50mm}
        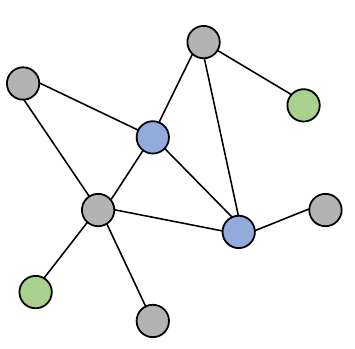
        \caption{An example where the frame size exceeds the canonicalization by a factor of $480$.}
        \label{fig:automorphism-example}
    \end{figure}
    \item Frames must adhere to $G$-equivariance: $\mathcal{F}(\rho_1(g)X)=g\mathcal{F}(X)$, posing a stringent requirement. Consequently, constructing frames becomes challenging. While one approach involves constructing a canonicalization first and then inducing its corresponding frame, a preferable option is to directly utilize canonicalization. The presence of input automorphism complicates the analysis and improvement of existing frames. An optimal canonicalization maintains a size of $1$ for all inputs, irrespective of input symmetry. In contrast, the size of even optimal frames depends on input automorphism.
    \item The principal advantage of canonicalization over frame lies in the presence of \emph{uncanonicalizable} inputs, offering intriguing theoretical insights. Sign invariance serves as a notable example. According to \citet{laplacian-canonization}, an eigenvector $\vu\in\R^n$ is uncanonicalizable with respect to sign \textit{iff} there exists a permutation matrix $\mP$ such that $\vu=-\mP\vu$. In Theorem~\ref{thm:signnet-non-universal}, we further demonstrated that invariant networks applied to Laplacian eigenvectors lose expressive power on such inputs. Since canonicalizability pertains to the input space $V$ rather than $G$, analyzing the expressiveness of invariant networks through their underlying canonicalization is considerably more convenient.
    
    We aim to delve deeper into this insight by exploring the concept of canonicalizability as introduced in \citet{laplacian-canonization}. As demonstrated therein, uncanonicalizable inputs emerge when equivariance constraints are imposed on an already invariant canonicalization while ensuring its universality is maintained. It is the trilemma of invariance, equivariance, and universality that gives rise to uncanonicalizable inputs. A frame possesses equivariance inherently, and examining the frame alone suggests that imposing additional equivariance constraints would not result in conflicts, as observed in the input space.
\end{enumerate}

\section{Implementation and verification of MAP}\label{app:implementation-and-verification}

\subsection{The Gram-Schmidt process}\label{app:gs}

The Gram-Schmidt process is heavily used in our canonicalization algorithms; we discuss it in this subsection. The Gram-Schmidt process is a method of constructing an orthonormal basis from a set of vectors in an inner product space, most commonly the Euclidean space $\R^n$ equipped with the standard inner product. Specifically, the Gram-Schmidt process takes a finite, linearly independent set of vectors $\{\vx_1,\vx_2,\dots,\vx_m\}$ in an inner product space $V$ with $m\leq n$, and generates an orthogonal set of vectors $\{\vv_1,\vv_2,\dots,\vv_m$ that spans the same $m$-dimensional subspace of $\R^n$, where
\begin{align*}
    \vv_1&=\vx_1,\\
    \vv_2&=\vx_2-\frac{\langle\vx_2,\vv_1\rangle}{\|\vv_1\|^2}\vv_1,\\
    \vv_3&=\vx_3-\frac{\langle\vx_3,\vv_1\rangle}{\|\vv_1\|^2}\vv_1-\frac{\langle\vx_3,\vv_2\rangle}{\|\vv_2\|^2}\vv_2,\\
    &\vdotswithin{=}\\
    \vv_i&=\vx_i-\sum_{k=1}^{i-1}\frac{\langle\vx_i,\vv_k\rangle}{\|\vv_k\|^2}\vv_k,\\
    &\vdotswithin{=}\\
    \vv_m&=\vx_m-\sum_{k=1}^{m-1}\frac{\langle\vx_m,\vv_k\rangle}{\|\vv_k\|^2}\vv_k.
\end{align*}

The Gram-Schmidt process can be implemented in PyTorch \citep{pytorch} using QR decomposition.
\begin{minted}{python}
def orthogonalize(U: Tensor) -> Tensor:
    Q, R = torch.linalg.qr(U)
    S = torch.sign(torch.diag(R))
    return Q * S
\end{minted}
If the input matrix $\mU\in\R^{n\times d}$ has full rank (in our case it does because $\mU$ is a matrix of eigenvectors), and if we require the diagonal elements of $\mR$ to be positive, then $\mQ$ and $\mR$ are unique, and (column vectors of) $\mQ\in\R^{n\times d}$ gives the Gram-Schmidt orthogonalization of (column vectors of) $\mU$. The matrix $\mS$ in the above code is to ensure that diagonal elements of $\mR\mS$ are positive.

Proof for the correctness of this method can be found in common linear algebra textbooks \citep{linear-algebra}. Here we are mainly interested in the equivariance properties of this method. In particular, we prove that it is permutation equivariant and orthogonally equivariant.
\begin{theorem}\label{thm:gs-equivariant}
    The Gram-Schmidt process is permutation equivariant and orthogonally equivariant.
\end{theorem}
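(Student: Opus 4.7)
The plan is to prove both equivariances simultaneously by exploiting a single structural fact: permutation matrices and orthogonal matrices both preserve the Euclidean inner product. Let $T\in\R^{n\times n}$ denote either a permutation matrix $\mP$ or an orthogonal matrix $\mO$; in both cases $T^\top T=\mI$, so for any $\vu,\vw\in\R^n$ we have $\langle T\vu,T\vw\rangle=\langle\vu,\vw\rangle$ and $\|T\vu\|=\|\vu\|$. The goal is to show that if $(\vv_1,\dots,\vv_m)$ is the Gram--Schmidt output of the input columns $(\vx_1,\dots,\vx_m)$, then the Gram--Schmidt output of the transformed input $(T\vx_1,\dots,T\vx_m)$ is exactly $(T\vv_1,\dots,T\vv_m)$.

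The argument is by induction on $i$. The base case is immediate since $\vv_1=\vx_1$, so running GS on the transformed sequence produces $T\vx_1=T\vv_1$ as its first output. For the inductive step, denote by $\vv_k'$ the $k$-th output of GS applied to $(T\vx_1,\dots,T\vx_m)$. Assuming $\vv_k'=T\vv_k$ for $k<i$, I substitute into the recursion
\[
\vv_i'=T\vx_i-\sum_{k=1}^{i-1}\frac{\langle T\vx_i,\vv_k'\rangle}{\|\vv_k'\|^2}\vv_k'
=T\vx_i-\sum_{k=1}^{i-1}\frac{\langle T\vx_i,T\vv_k\rangle}{\|T\vv_k\|^2}T\vv_k,
\]
and then use inner-product and norm invariance to replace each $\langle T\vx_i,T\vv_k\rangle$ with $\langle\vx_i,\vv_k\rangle$ and $\|T\vv_k\|^2$ with $\|\vv_k\|^2$. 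Linearity of $T$ then lets me factor it out of the sum, giving $\vv_i'=T\vv_i$ and closing the induction. The same argument handles the subsequent unit-norm normalization step, since $T$ preserves norms.

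For the QR-based implementation in the code snippet, one extra check is needed: the sign fix \texttt{S = sign(diag(R))} that enforces positive diagonal entries of $\mR$ must itself be equivariant. Since $T$ preserves inner products, the diagonal of $\mR$ computed from $T\mU$ equals the diagonal computed from $\mU$, so the sign correction is identical in both cases, and therefore $\mathrm{GS}(T\mU)=T\,\mathrm{GS}(\mU)$. The main (minor) obstacle is bookkeeping about which action is meant: the equivariance is with respect to $T$ acting on the ambient space $\R^n$, i.e.\ on the rows of $\mU$, not on the column indexing of the input vectors. Once that is fixed, the proof is essentially the one inductive computation above together with the sign-convention remark.
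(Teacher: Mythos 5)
Your proof is correct and follows essentially the same inductive argument as the paper's, using inner-product and norm invariance of the transformation to push it through the Gram--Schmidt recursion. The only cosmetic differences are that you unify the permutation and orthogonal cases by working with a single inner-product-preserving matrix $T$ (the paper simply runs the identical induction twice), and you add a remark about the sign convention in the QR-based implementation, which the paper discusses separately rather than inside this proof.
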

\begin{proof}
    We first prove the permutation equivariance of GS by induction. Let $\mP\in\R^{n\times n}$ be an arbitrary permutation matrix. Let $\{\vx_1',\vx_2',\dots,\vx_m'\}$ be defined by $\vx_i'=\mP\vx_i$ for $1\leq i\leq m$, and let $\{\vv_1',\vv_2',\dots,\vv_m'\}$ be the orthogonal basis generated from $\{\vx_1',\vx_2',\dots,\vx_m'\}$ by GS\@. Clearly $\vv_1'=\vx_1'=\mP\vx_1$ is permutation equivariant. Suppose $\vv_1',\dots,\vv_{k-1}'$ are all permutation equivariant, then
    \[
        \vv_k'=\vx_k'-\sum_{j=1}^{k-1}\frac{\langle\vx_k',\vv_j'\rangle}{\|\vv_j'\|^2}\vv_j'=\mP\vx_k-\sum_{j=1}^{k-1}\frac{\langle\mP\vx_k,\mP\vv_j\rangle}{\|\vv_j\|^2}\mP\vv_j=\mP\vv_k,
    \]
    where the last equality holds because the inner product operator is invariant under permutation. Thus $\vv_k$ is also permutation equivariant, and by induction we conclude that GS is permutation equivariant.

    The orthogonal equivariance of GS can be proved similarly. Let $\mQ\in\R^{n\times n}$ be an arbitrary orthogonal matrix. Let $\{\vx_1',\vx_2',\dots,\vx_m'\}$ be defined by $\vx_i'=\mQ\vx_i$ for $1\leq i\leq m$, and let $\{\vv_1',\vv_2',\dots,\vv_m'\}$ be the orthogonal basis generated from $\{\vx_1',\vx_2',\dots,\vx_m'\}$. Clearly $\vv_1'$ is orthogonally equivariant. Suppose $\vv_1',\dots,\vv_{k-1}'$ are all orthogonally equivariant, then
    \[
        \vv_k'=\vx_k'-\sum_{j=1}^{k-1}\frac{\langle\vx_k',\vv_j'\rangle}{\|\vv_j'\|^2}\vv_j'=\mQ\vx_k-\sum_{j=1}^{k-1}\frac{\langle\mQ\vx_k,\mQ\vv_j\rangle}{\|\vv_j\|^2}\mQ\vv_j=\mQ\vv_k
    \]
    is also orthogonally equivariant, where the last equality holds because orthogonal transformations preserve the inner product. Thus GS is also orthogonally equivariant.
\end{proof}

Recall that in Algorithm~\ref{alg:lap-basis}, we first defined a set of values $\alpha_i=\operatorname{hash}(\evp_{ii},\ml\evp_{ij}\mr_{j\neq i})$ for $i=1,\dots,n$, where $\vp_i=\mathscr{P}\vx_i$ is the projection of standard basis vectors onto the eigenspace $V$. Suppose there are $k$ distinct values in $\{\alpha_i\}$, \textit{i.e.}, $|\{\alpha_i\}|=k$. Then according to the values of $\alpha_i$ we divided the standard basis vectors $\{\ve_i\}$ into $k$ disjoint groups $\mathcal{B}_i$ (arranged in descending order of the values of $\alpha_i$), and defined a summary vector $\vx_i$ for each group $\mathcal{B}_i$ as the sum of standard basis vectors in that group. Consider the projections of these $k$ summary vectors, $\mathscr{P}\vx_i$. Let $i_1<i_2<\dots<i_d$ be the smallest $d$ indices such that $\|\mathscr{P}\vx_{i_j}\|>0$ and $\mathscr{P}\vx_{i_j}$ are linearly independent, $1\leq j\leq d$. In the final step, we applied the Gram-Schmidt process to $\mathscr{P}\vx_{i_1},\dots,\mathscr{P}\vx_{i_d}$ to obtain the resulting canonical form. These steps can be summarized as follows.
\begin{enumerate}
    \item \label{step:1}Compute $\alpha_i=\operatorname{hash}(\evp_{ii},\ml\evp_{ij}\mr_{j\neq i})$, $1\leq i\leq n$.
    \item \label{step:2}Define summary vectors $\vx_i=\sum\ve_j$, where the summands $\ve_j$ share the same $\alpha_j$ value, and the summary vectors are arranged in descending order of these values.
    \item \label{step:3}Find the smallest indices $i_1<\dots<i_d$ such that $\|\mathscr{P}\vx_{i_j}\|>0$ and $\mathscr{P}\vx_{i_j}$ are linearly independent, $1\leq j\leq d$.
    \item \label{step:4}Apply GS to $\{\mathscr{P}\vx_{i_1},\dots,\mathscr{P}\vx_{i_d}\}$ to obtain the final canonical form.
\end{enumerate}

In this algorithm we mainly make use of the permutation equivariance property of GS to construct an overall permutation equivariant canonicalization, while the basis invariance of this canonicalization relies on the basis invariance of projections themselves. We prove that Algorithm~\ref{alg:lap-basis} is permutation equivariant.
\begin{theorem}\label{thm:oap-perm-equiv}
    The OAP canonicalization in Algorithm~\ref{alg:lap-basis} is permutation equivariant.
\end{theorem}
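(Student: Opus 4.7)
The plan is to track, step by step through Algorithm~\ref{alg:lap-basis}, how each intermediate object transforms when the input $\mU$ is replaced by $\mP\mU$ for an arbitrary permutation matrix $\mP$, and to show that the final output is exactly $\mP\mU^*$. The key observation driving everything is that the projection matrix transforms as $\mathscr{P}'=(\mP\mU)(\mP\mU)^\top=\mP\mathscr{P}\mP^\top$, so its diagonal and rows are conjugated by $\mP$ in a clean way.

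First I would analyze the hash values of step~\ref{step:1}. Writing $\mP$ as the matrix of a permutation $\pi$, a direct computation gives $\evp'_{\pi(i)\pi(i)}=\evp_{ii}$ and, as multisets, $\ml\evp'_{\pi(i)j}\mr_{j\neq \pi(i)}=\ml\evp_{ik}\mr_{k\neq i}$. Consequently the hash output satisfies $\alpha'_{\pi(i)}=\alpha_i$, so the equivalence classes of indices defined by the $\alpha$-values are merely relabeled by $\pi$, and their relative ordering (descending in $\alpha$) is preserved. The summary vectors of step~\ref{step:2} then satisfy $\vx'_k=\sum_{i\in\mathcal{B}_k}\ve_{\pi(i)}=\mP\vx_k$, i.e.\ the summary vectors are themselves permutation equivariant.

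For step~\ref{step:3} I would combine the previous two facts: $\mathscr{P}'\vx'_k=\mP\mathscr{P}\mP^\top\mP\vx_k=\mP\mathscr{P}\vx_k$, hence $\|\mathscr{P}'\vx'_k\|=\|\mathscr{P}\vx_k\|$, and left-multiplication by the invertible $\mP$ preserves linear (in)dependence. Thus the smallest indices $i_1<\dots<i_d$ selected by the greedy procedure are identical in the two runs. Finally, step~\ref{step:4} applies Gram--Schmidt to $\{\mathscr{P}'\vx'_{i_j}\}=\{\mP(\mathscr{P}\vx_{i_j})\}$, and by the permutation equivariance of GS (Theorem~\ref{thm:gs-equivariant}) the output is $\mP\,\operatorname{GS}(\mathscr{P}\vx_{i_1},\dots,\mathscr{P}\vx_{i_d})=\mP\mU^*$, which is exactly the desired equivariance.

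The main obstacle is keeping the index bookkeeping honest at step~\ref{step:1}: one must verify that the hash inputs transform as matched pairs (diagonal entry with the multiset of off-diagonal entries in the same row) and that the tie-breaking rule on equal $\alpha$-values is consistent under relabeling. Once that is pinned down, the rest of the argument is a straightforward propagation of $\mP$ through a chain of equivariant operations, culminating in an invocation of Theorem~\ref{thm:gs-equivariant}.
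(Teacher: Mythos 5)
Your proposal is correct and follows essentially the same route as the paper's proof: track the action of $\mP$ through each of the four steps (hash values, summary vectors, index selection, Gram--Schmidt) and invoke Theorem~\ref{thm:gs-equivariant} at the end. Your version is slightly more explicit in the index bookkeeping of step~1 (stating $\alpha'_{\pi(i)}=\alpha_i$ directly rather than asserting the value vector is equivariant), which is a welcome clarification but not a different argument.
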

\begin{proof}
    Consider the canonical forms of $\mU$ and $\mU'=\mP\mU$, where $\mP$ is an arbitrary permutation matrix. We analyze each step of this algorithm. The permutation $\mP$ acts on the eigenspace by $\mathscr{P}'=\mP\mathscr{P}\mP^\top$, as well as on the standard basis vectors by $\ve_i'=\mP\ve_i$.
    \begin{enumerate}
        \item In step~\ref{step:1}, since each projection $\vp_i'=\mathscr{P}'\ve_i'=\mP\mathscr{P}\mP^\top\mP\ve_i=\mP\vp_i$ is permutation equivariant, the value vector $\valpha=(\alpha_1,\dots,\alpha_n)$ is also permutation equivariant.
        \item In step~\ref{step:2}, we summed vectors $\ve_i$ with the same corresponding $\alpha_i$ to get the summary vectors, while the summary vectors themselves are arranged in descending order of the corresponding $\alpha_i$. If we arrange the entries of $\valpha$ in descending order as
        \[
            \alpha_{i_1}\geq\alpha_{i_2}\geq\dots\geq\alpha_{i_n},
        \]
        then the indices vector $\vi=(i_1,\dots,i_n)$ would be ``almost'' permutation equivariant, except that indices corresponding to equal entries could have any ordering. However, as we summed up all $\ve_i$ with equal corresponding $\alpha_i$, the summary vectors $\vx_i$ would be exactly permutation equivariant for $1\leq i\leq k$, in the sense that $\vx_i'=\mP(\sum_{\ve_j\in\mathcal{B}_i}\ve_j)=\mP\vx_i$.
        \item The projections $\mathscr{P}\vx_{i_j}$ are themselves permutation equivariant. Since the norm operator $\|\cdot\|$ and the linear independence of vectors are not affected by permutation, the indices $i_1<\dots<i_d$ are permutation invariant.
        \item Using the permutation equivariance of the Gram-Schmidt process (Theorem~\ref{thm:gs-equivariant}), we conclude that the whole algorithm is permutation equivariant.
    \end{enumerate}
\end{proof}

\subsection{Complete pseudo-code of Algorithm~\ref{alg:eig-basis}}\label{app:pseudo-eig-basis}

In Algorithm~\ref{alg:eig-basis}, we proposed the OAP-eig canonicalization to address the basis ambiguity of eigenvectors without permutation equivariance. The complete pseudo-code of OAP-eig, along with a detailed time complexity analysis, is shown in Algorithm~\ref{alg:oap-eig}.

\begin{algorithm}[htbp]
    \caption{OAP-eig canonicalization for eliminating basis ambiguity without permutation equivariance}
    \begin{algorithmic}
        \Require The eigenvectors $\mU\in\R^{n\times d}$
        \Ensure The canonical form $\mU^*$ of $\mU$
        \State $\mathscr{P}\gets\mU\mU^\top$\Comment{$\mathcal{O}(n^2d)$ complexity}
        \State $\mU_\mathrm{span}\gets$ empty matrix of shape $n\times 0$
        \State $r\gets 0$\Comment{rank of $\mU_\mathrm{span}$}
        \For[$\mathcal{O}(n^2d^2)$ complexity]{$i=1,\dots,n$}
        \State $\vu\gets\operatorname{normalize}(\mathscr{P}_i)$\Comment{$\mathcal{O}(n)$ complexity}
        \If{$\|\vu\|<\varepsilon$}\Comment{floating-point errors are considered}
        \State \textbf{continue}
        \EndIf
        \State $\mU_\mathrm{tmp}\gets[\mU_\mathrm{span},\vu^\top]$
        \If{$\rank(\mU_\mathrm{tmp})=r+1$}\Comment{$\vu$ is linearly independent with $\mU_\mathrm{span}$, $\mathcal{O}(nd^2)$ complexity}
        \State $\mU_\mathrm{span}\gets\mU_\mathrm{tmp}$
        \State $r\gets r+1$
        \If{$r=d$}\Comment{found indices $i_1,\dots,i_d$}
        \State \textbf{break}
        \EndIf
        \EndIf
        \EndFor
        \State $\mU^*\gets\operatorname{GS}(\mU_\mathrm{span})$\Comment{$\mathcal{O}(nd^2)$ complexity}
        \State \textbf{return} $\mU^*$
    \end{algorithmic}
    \label{alg:oap-eig}
\end{algorithm}

The overall time complexity of Algorithm~\ref{alg:oap-eig} is $\mathcal{O}(n^2d^2)$. Since in real-world datasets the value of $d$ is usually small (Figure~2 in \citet{laplacian-canonization}), Algorithm~\ref{alg:oap-eig} is more efficient than eigendecomposition itself whose time complexity is $\mathcal{O}(n^3)$.

In the for loop of Algorithm~\ref{alg:oap-eig}, we aim to find indices $i_1,\dots,i_d$ such that $\|\mathscr{P}\ve_{i_j}\|=\|\mathscr{P}_{i_j}\|>0$ and $\mathscr{P}_{i_j}$ are linearly independent for $1\leq j\leq d$. We look for these indices in a iterative fashion. The matrix $\mU_\mathrm{span}$ contains all the already-found indices. We initialize $\mU_\mathrm{span}$ to be an empty matrix of shape $n\times 0$, and every time we find an index $i$ such that $\|\mathscr{P}_i\|>0$ and $\mathscr{P}_i$ is linearly independent with the already-found columns vectors of $\mU_\mathrm{span}$, we concatenate the normalized $\mathscr{P}_i$ to $\mU_\mathrm{span}$. Once we have found a total of $d$ indices (denoted as $r$ in Algorithm~\ref{alg:oap-eig}), we break out of the loop and return the Gram-Schmidt orthogonalization of $\mU_\mathrm{span}$. By the properties of GS, $\mU^*$ is still a set of orthonormal basis in the eigenspace. However, as we iterate through $\mathscr{P}\ve_1,\mathscr{P}\ve_2,\dots,\mathscr{P}\ve_n$, the algorithm is not permutation equivariant (the standard basis vectors $\{\ve_i\}$ are permuted by permutations).

\subsection{Complete pseudo-code of Algorithm~\ref{alg:lap-basis}}\label{app:pseudo-lap-basis}

In Algorithm~\ref{alg:lap-basis}, we proposed the OAP-lap canonicalization to address the basis ambiguity of Laplacian eigenvectors while respecting permutation equivariance. The complete pseudo-code of OAP-lap, along with a detailed time complexity analysis, is shown in Algorithm~\ref{alg:oap-lap}.

\begin{algorithm}[htbp]
    \caption{OAP-lap canonicalization for eliminating basis ambiguity with permutation equivariance}
    \begin{algorithmic}
        \Require The Laplacian eigenvectors $\mU\in\R^{n\times d}$
        \Ensure The canonical form $\mU^*$ of $\mU$
        \State $\mathscr{P}\gets\mU\mU^\top$\Comment{$\mathcal{O}(n^2d)$ complexity}
        \State $\valpha\gets\bigl(\operatorname{hash}(\mathscr{P}_{11},\ml\mathscr{P}_{1j}\mr_{j\neq 1}),\dots,\operatorname{hash}(\mathscr{P}_{nn},\ml\mathscr{P}_{nj}\mr_{j\neq n})\bigr)$\Comment{$\mathcal{O}(n^2)$ complexity}
        \State $\mathit{val},\mathit{ind}\gets\operatorname{sort}(\valpha)$\Comment{$\mathcal{O}(n\log n)$ complexity}
        \State $\mathit{val}\gets\operatorname{unique}(\mathit{val})$\Comment{unique values in $\{\alpha_i\}$, $\mathcal{O}(n)$ complexity}
        \State $k\gets|\mathit{val}|$
        \If{$k<d$}
        \State \textbf{break}\Comment{impossible to find $d$ indices}
        \EndIf
        \For{$i=1,\dots,k$}\Comment{$\mathcal{O}(n)$ complexity}
        \State $\vx_i\gets\sum_j\ve_{\mathit{ind}[j]}$ such that $\alpha_{\mathit{ind}[j]}=\mathit{val}[i]$\Comment{sum of standard basis vectors in group $\mathcal{B}_i$}
        \State $\vx_i\gets\vx_i+c\vone$\Comment{the summary vector of group $\mathcal{B}_i$}
        \EndFor
        \State $\mU_\mathrm{span}\gets$ empty matrix of shape $n\times 0$
        \State $r\gets 0$\Comment{rank of $\mU_\mathrm{span}$}
        \For{$i=1,\dots,k$}\Comment{$\mathcal{O}(n^2d^2)$ complexity}
        \State $\vu\gets\operatorname{normalize}(\mathscr{P}\vx_i)$\Comment{$\mathcal{O}(n)$ complexity}
        \If{$\|\vu\|<\epsilon$}\Comment{floating-point errors are considered}
        \State \textbf{continue}
        \EndIf
        \State $\mU_\mathrm{tmp}\gets[\mU_\mathrm{span},\vu^\top]$
        \If{$\rank(\mU_\mathrm{tmp})=r+1$}\Comment{$\vu$ is linearly independent with $\mU_\mathrm{span}$, $\mathcal{O}(nd^2)$ complexity}
        \State $\mU_\mathrm{span}\gets\mU_\mathrm{tmp}$
        \State $r\gets r+1$
        \If{$r=d$}\Comment{found indices $i_1,\dots,i_d$}
        \State \textbf{break}
        \EndIf
        \EndIf
        \EndFor
        \State $\mU^*\gets\operatorname{GS}(\mU_\mathrm{span})$\Comment{$\mathcal{O}(nd^2)$ complexity}
        \State \textbf{return} $\mU^*$
    \end{algorithmic}
    \label{alg:oap-lap}
\end{algorithm}

The overall time complexity of Algorithm~\ref{alg:oap-lap} is $\mathcal{O}(n^2d^2)$, which is also more efficient than eigendecomposition itself. Although Algorithm~\ref{alg:oap-lap} need to be applied to every eigenspace of the graph, we note that the number of multiple eigenvalues (eigenvalues with multiplicity greater than $1$ and thus have basis ambiguity) are usually small in real-world datasets (Table~9 in \citet{laplacian-canonization}).

Algorithm~\ref{alg:oap-lap} consists of two parts. In the first part, we try to divide the standard basis vectors $\{\ve_i\}_{1\leq i\leq n}$ into disjoint groups $\{\mathcal{B}_i\}_{1\leq i\leq k}$. To do that, we first calculate a set of values $\alpha_i$, $1\leq i\leq n$ for each standard basis vector. Any set of permutation-equivariant $\alpha_i$ would guarantee the correctness of the algorithm, but to ensure the algorithm canonicalizes as many eigenvectors as possible (\textit{i.e.}, the \emph{superiority} of canonicalization), we would like the values of $\alpha_i$ to have enough ``distinguishability'' (\textit{i.e.}, the values of $\alpha_i$ divide the standard basis vectors into more fine-grained groups, in \citet{bounded-multiplicity} this is called \emph{refinement}). In Algorithm~\ref{alg:oap-lap} we define $\alpha_i=\operatorname{hash}(\mathscr{P}_{ii},\ml\mathscr{P}_{ij}\mr_{j\neq i})$. We proved the permutation equivariance of such $\alpha_i$ in Appendix~\ref{app:gs}, and in Appendix~\ref{app:proofs} we will prove that they are more fine-grained than previous methods, such as $\alpha_i=\mathscr{P}_{ii}$ in \citet{frame-averaging} and $\alpha_i=\|\mathscr{P}_i\|$ in \citet{laplacian-canonization}.

Once we have the values of $\alpha_i$, we can divide $\{\ve_i\}$ into disjoint groups $\mathcal{B}_i$ accordingly. Each value $\alpha_i$ corresponds to a standard basis vector $\ve_i$. Two standard basis vectors belong to the same group if and only if their corresponding $\alpha_i$ are equal, and different groups are arranged in descending order of the values of $\alpha_i$. For each group, we define its summary vector $\vx_i$ to be the sum of standard basis vectors in that group, for $1\leq i\leq k$. In the second part of Algorithm~\ref{alg:oap-lap}, we use the same method as Algorithm~\ref{alg:oap-eig} to find the indices $i_1,\dots,i_d$, except that the vectors $\{\mathscr{P}\ve_i\}_{1\leq i\leq n}$ are substituted with $\{\mathscr{P}\vx_i\}_{1\leq i\leq k}$. Note that in contrast to Algorithm~\ref{alg:oap-eig}, now these vectors are permutation-equivariant by construction.

\subsection{Verifying the correctness of Algorithm~\ref{alg:eig-basis}}\label{app:verify-eig-basis}

The correctness of Algorithm~\ref{alg:eig-sign} is obvious and we skip that part. In this subsection we verify the correctness of Algorithm~\ref{alg:eig-basis} through random simulation. Here ``correctness'' refers to two things: for any eigenvectors $\mU\in\R^{n\times d}$ the indices $i_1,\dots,i_d$ can always be found; and that the whole algorithm is basis-invariant as desired. The verification program is shown in Algorithm~\ref{alg:verify-eig-basis}. Let $\mU\in\R^{n\times d}$ be a random orthonormal matrix representing the eigenvectors, and let $\mQ\in\mathrm{O}(d)$ be a random orthonormal matrix that rotates the eigenvectors in their corresponding eigenspace. We pass $\mU$ and $\mU\mQ$ to Algorithm~\ref{alg:eig-basis} (denoted as $\operatorname{canonicalize}$) and compare the results. If our algorithm is correct, $\mU$ and $\mU\mQ$ should produce identical outputs.

\begin{algorithm}[htbp]
    \caption{Verify the correctness of Algorithm~\ref{alg:eig-basis}}
    \begin{algorithmic}
        \State $\mathit{correct}\gets 0$, $\mathit{total}\gets 0$
        \For{$i=1,2,\dots,\mathit{trials}$}
        \State $n\gets$ a random positive integer (greater than $1$)
        \State $d\gets$ a random positive integer (less than $n$)
        \State $\mU\gets$ a random orthonormal matrix in $\R^{n\times d}$
        \State $\mU_0\gets\operatorname{canonicalize}(\mU)$
        \State $\mQ\gets$ a random orthonormal matrix in $\R^{d\times d}$
        \State $\mW\gets\mU\mQ$
        \State $\mW_0\gets\operatorname{canonicalize}(\mW)$
        \State $\mathit{correct}\gets\mathit{correct}+1$ \textbf{if} $|\mU_0-\mW_0|<\varepsilon$\Comment{test basis invariance}
        \State $\mathit{total}\gets\mathit{total}+1$
        \EndFor
        \State \textbf{print} the values of $\mathit{correct}$ and $\mathit{total}$
    \end{algorithmic}
    \label{alg:verify-eig-basis}
\end{algorithm}

We conduct 1000 trials. The results are $\mathit{correct}=\mathit{total}=1000$, showing the basis invariance of Algorithm~\ref{alg:eig-basis}. The function $\mathrm{canonicalize}$ will throw an error and terminate if the indices $i_1,\dots,i_d$ (and thus the canonical form) cannot be found, so the successful execution of this program also shows that we can always find such indices on random eigenvectors. This aligns with the conclusion of Theorem~\ref{thm:useful-lemma}.

We can also modify Algorithm~\ref{alg:verify-eig-basis} to verify the orthogonal equivariance of canonical averaging in the $n$-body problem. Let $\operatorname{model}$ denote the model with canonical averaging. The program is shown in Algorithm~\ref{alg:verify-nbody}. The results show that canonical averaging achieves exact orthogonal equivariance.

\begin{algorithm}[htbp]
    \caption{Verify the orthogonal equivariance of canonical averaging in the $n$-body problem}
    \begin{algorithmic}
        \State $\mathit{correct}\gets 0$, $\mathit{total}\gets 0$
        \For{$i=1,2,\dots,\mathit{trials}$}
        \State $\mX\gets$ node features in the dataset, representing the inital location of particles
        \State $\mX_0\gets\operatorname{model}(\mX)$
        \State $\mQ\gets$ a random orthonormal matrix in $\R^{d\times d}$, where $d$ is the dimension of node features
        \State $\mW\gets\mX\mQ$
        \State $\mW_0\gets\operatorname{model}(\mW)$
        \State $\mathit{correct}\gets\mathit{correct}+1$ \textbf{if} $|\mX_0\mQ-\mW_0|<\varepsilon$\Comment{test orthogonal equivariance}
        \State $\mathit{total}\gets\mathit{total}+1$
        \EndFor
        \State \textbf{print} the values of $\mathit{correct}$ and $\mathit{total}$
    \end{algorithmic}
    \label{alg:verify-nbody}
\end{algorithm}

\subsection{Verifying the correctness of Algorithm~\ref{alg:lap-basis}}\label{app:verify-lap-basis}

In this subsection we verify the correctness of Algorithm~\ref{alg:lap-basis}. Here ``correctness'' refers to the permutation equivariance and basis invariance of the canonicalization algorithm. The verification program is shown in Algorithm~\ref{alg:verify-lap-basis}. Let $\mU\in\R^{n\times d}$ be a random orthonormal matrix representing the Laplacian eigenvectors, $\mP\in\R^{n\times n}$ be a random permutation matrix that permutes the rows of $\mU$, and let $\mQ\in\mathrm{O}(d)$ be a random orthonormal matrix that rotates the Laplacian eigenvectors in their corresponding eigenspace. We pass $\mU,\mP\mU,\mU\mQ,\mP\mU\mQ$ to Algorithm~\ref{alg:lap-basis} (denoted as $\operatorname{canonicalize}$) and compare the results. If our algorithm is correct, $\mU\mQ$ should produce invariant outputs, while $\mP\mU$ and $\mP\mU\mQ$ should produce equivariant outputs.

\begin{algorithm}[htbp]
    \caption{Verify the correctness of Algorithm~\ref{alg:lap-basis}}
    \begin{algorithmic}
        \State $\mathit{p\_correct}\gets0$, $\mathit{q\_correct}\gets0$, $\mathit{pq\_correct}\gets0$, $\mathit{total}\gets0$
        \For{$i=1,2,\dots,\mathit{trials}$}
        \State $n\gets$ a random positive integer (greater than $1$)
        \State $d\gets$ a random positive integer (less than $n$)
        \State $\mU\gets$ a random orthonormal matrix in $\R^{n\times d}$
        \State $\mU_0\gets\operatorname{canonicalize}(\mU)$
        \State $\mP\gets$ a random permutation matrix
        \State $\mV\gets\mP\mU$
        \State $\mV_0\gets\operatorname{canonicalize}(\mV)$
        \State $\mathit{p\_correct}\gets\mathit{p\_correct}+1$ \textbf{if} $|\mP\mU_0-\mV_0|<\varepsilon$\Comment{test permutation equivariance}
        \State $\mQ\gets$ a random orthonormal matrix in $\R^{d\times d}$
        \State $\mW\gets\mU\mQ$
        \State $\mW_0\gets\operatorname{canonicalize}(\mW)$
        \State $\mathit{q\_correct}\gets\mathit{q\_correct}+1$ \textbf{if} $|\mU_0-\mW_0|<\varepsilon$\Comment{test basis invariance}
        \State $\mY\gets\mP\mW$
        \State $\mY_0\gets\operatorname{canonicalize}(\mY)$
        \State $\mathit{pq\_correct}\gets\mathit{pq\_correct}+1$ \textbf{if} $|\mP\mU_0-\mY_0|<\varepsilon$\Comment{test both}
        \State $\mathit{total}\gets\mathit{total}+1$
        \EndFor
        \State \textbf{print} the values of $\mathit{p\_correct}$, $\mathit{q\_correct}$, $\mathit{pq\_correct}$ and $\mathit{total}$
    \end{algorithmic}
    \label{alg:verify-lap-basis}
\end{algorithm}

We conduct 1000 trials. The results are $\mathit{p\_correct}=\mathit{q\_correct}=\mathit{pq\_correct}=\mathit{total}=1000$, showing the permutation equivariance and basis invariance of Algorithm~\ref{alg:lap-basis}. The function $\operatorname{canonicalize}$ will throw an error and terminate if it could not find a canonical form for $\mU$. While for LapPE uncanonicalizable eigenvectors do exist \citep{laplacian-canonization}, the successful execution of this program show that empirically the probability for random Laplacian eigenvectors to be uncanonicalizable is $0$. \citet{laplacian-canonization} has a more rigorous discussion on this in Appendix~H of their paper.

\section{Concentration inequality for drawing without replacement}\label{app:concentration-rate}

In the \textsc{Exp} experiment the frame or canonicalization size is very large (Table~\ref{tab:frame-size}), so we need to randomly sample from the frame or canonicalization. We use the average of model outputs on the sampled graphs to approximate the true average value on all graphs. In this section we prove that a smaller frame/canonicalization size will lead to faster convergence rate towards the true average. Thus, according to Table~\ref{tab:frame-size}, CA is more sample efficient than FA, and OAP is more sample efficient than FA-graph.

Denote by $\mathcal{F}$ a frame averaging scheme, and by $\mathcal{C}$ the canonicalization induced by $\mathcal{F}$. Let $V_\mathcal{F}(X)=\ml\rho_1(g)^{-1}X\mid g\in\mathcal{F}(X)\mr$ be all graphs generated by the frame $\mathcal{F}(X)$ from the input $X$ (in our \textsc{Exp} experiment, $X$ would be a graph in the dataset). Note that $V_\mathcal{F}(X)$ is a multi-set, so if $k$ group actions in $\mathcal{F}(X)$ result in the same graph when applied to $X$, then this graph is counted $k$ times in $V_\mathcal{F}(X)$. Let $\mathcal{C}(X)$ be the canonicalization of input $X$. Let $\phi$ be the backbone neural network. Sampling from frame/canonicalization works as follows:
\begin{enumerate}
    \item If the frame size $\mathcal{F}(X)$ is large, then the exact averaging $\frac1{|\mathcal{F}(X)|}\sum_{g\in\mathcal{F}(X)}\phi\bigl(\rho_1(g)^{-1}X\bigr)$ is intractable. In this case, we randomly sample $n$ graphs $X_1,\dots,X_n$ from $V_\mathcal{F}(X)$ and use the empirical average $\frac1n\sum_{i=1}^n\phi(X_i)$ as an approximation to the true average.
    \item Similarly, if the canonicalization size $\mathcal{C}(X)$ is large, then the exact averaging $\frac1{|\mathcal{C}(X)|}\sum_{X_0\in\mathcal{C}(X)}\phi(X_0)$ is intractable. In this case, we randomly sample $n$ graphs $X_1,\dots,X_n$ from $\mathcal{C}(X)$ and use the empirical average $\frac1n\sum_{i=1}^n\sum_{i=1}^n\phi(X_i)$ as an approximation to the true average.
\end{enumerate}

When we sample from $V_\mathcal{F}(X)$ or $\mathcal{C}(X)$, there are two approaches: drawing with replacement and drawing without replacement. However, it is well-known that drawing without replacement is preferable, as illustrated in the following theorem.
\begin{theorem}[\citet{draw-without-replacement}]\label{thm:draw-without-replacement}
    Let $\mathcal{X}=(x_1,\dots,x_N)$ be a finite population of $N$ real points, $X_1,\dots,X_n$ denote a random sample without replacement from $\mathcal{X}$ and $Y_1,\dots,Y_n$ denote a random sample with replacement from $\mathcal{X}$. If $f\colon\R\to\R$ is continuous and convex, then
    \[
        \mathbb{E}f\left(\sum_{i=1}^nX_i\right)\leq\mathbb{E}f\left(\sum_{i=1}^nY_i\right).
    \]
\end{theorem}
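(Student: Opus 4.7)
The plan is to prove this classical theorem of Hoeffding by showing that the with-replacement sum dominates the without-replacement sum in the \emph{convex order}, which is equivalent to the stated inequality holding for every continuous convex $f$. The cleanest route is to exhibit a conditional-expectation representation: find a random variable $Z$ distributed as $\sum_i Y_i$ and a sub-$\sigma$-algebra $\mathcal{G}$ with $\mathbb{E}[Z\mid\mathcal{G}]$ distributed as $\sum_i X_i$; then Jensen's inequality for conditional expectations, $f(\mathbb{E}[Z\mid\mathcal{G}]) \leq \mathbb{E}[f(Z)\mid\mathcal{G}]$, followed by taking total expectation, delivers the theorem. Intuitively, the negative association induced by the WOR constraint contracts the joint distribution of the counts toward their common mean, and convex functionals are monotone under such contractions.

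I would proceed by induction on the sample size $n$, holding the population fixed. The base case $n=1$ is trivial since $X_1$ and $Y_1$ are both uniform on $\mathcal{X}$, so the two sides agree. For the inductive step, I would condition on the last drawn element. For the WR side,
\[
    \mathbb{E} f\left(\sum_{i=1}^n Y_i\right) = \frac{1}{N}\sum_{j=1}^N \mathbb{E} f\left(x_j + \sum_{i=1}^{n-1} Y_i\right),
\]
and analogously for the WOR side, except that the inner sum becomes a WOR sample of size $n-1$ drawn from the reduced population $\mathcal{X}\setminus\{x_j\}$. For each fixed $j$ the shifted function $g_j(y) = f(x_j + y)$ is still continuous and convex, so the induction would close if we could compare a WOR sample of size $n-1$ from the reduced size-$(N-1)$ population against a WR sample of size $n-1$ from the full size-$N$ population.

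The main obstacle is precisely this population-size mismatch: the natural conditional decomposition does not yield two samples from identical populations, so a direct appeal to the inductive hypothesis fails. I would resolve this with an auxiliary smoothing step showing that WR-from-size-$N$ convexly dominates WR-from-size-$(N{-}1)$, via a Jensen argument applied to the multinomial marginals (the size-$N$ scheme can be realized by first sampling from size $N{-}1$ and occasionally swapping in the extra point $x_j$, an operation that can only increase a convex functional of the sum). Combined with the induction hypothesis, which gives WR-from-size-$(N{-}1)$ dominating WOR-from-size-$(N{-}1)$, transitivity of the convex order then closes the induction. An alternative, more conceptual route is to realize the WOR sample as the first $n$ distinct entries of an infinite i.i.d.\ uniform stream $Y_1, Y_2, \ldots$ and express the WOR partial sum as a conditional expectation of the WR partial sum via the martingale generated by optional stopping; this bypasses the induction at the cost of heavier machinery.
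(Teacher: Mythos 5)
The paper does not actually prove this result; it imports it verbatim from the cited reference (Hoeffding, 1963), so there is no in-paper proof to compare against. Evaluating your argument on its own merits: the high-level plan---establish convex dominance of the with-replacement sum over the without-replacement sum via a Strassen-type conditional-expectation coupling and then apply Jensen---is indeed the standard framing, but the inductive argument you propose to implement it contains a genuine error.

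The gap is the ``auxiliary smoothing step.'' After conditioning on the last drawn element $x_j$, your chain requires $\mathrm{WR}(n{-}1,\mathcal{X}\setminus\{x_j\})$ to be convexly dominated by $\mathrm{WR}(n{-}1,\mathcal{X})$. Convex order forces equal means, but the first sum has expectation $(n{-}1)\,\operatorname{mean}(\mathcal{X}\setminus\{x_j\})$ and the second has $(n{-}1)\,\operatorname{mean}(\mathcal{X})$; these differ for every $j$ with $x_j\neq\operatorname{mean}(\mathcal{X})$, so the claimed dominance is false. In fact even the weaker per-$j$ inequality your decomposition needs,
\[
    \mathbb{E}\, g_j\!\left(\textstyle\sum_{i=1}^{n-1} X_i'\right)\;\leq\;\mathbb{E}\, g_j\!\left(\textstyle\sum_{i=1}^{n-1} Y_i\right),
\]
with $X_i'$ drawn WOR from $\mathcal{X}\setminus\{x_j\}$ and $g_j(y)=f(x_j+y)$, already fails for the linear $f(y)=y$ whenever $x_j<\operatorname{mean}(\mathcal{X})$; since $f$ and $-f$ are both convex, a per-$j$ inequality valid for all convex $f$ would force per-$j$ equality, which is false. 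Only the average over $j$ of these inequalities is true, and that average is precisely the statement being proved, so the term-by-term reduction is circular. The ``occasionally swap in $x_j$'' coupling you sketch also does not witness convex order: it changes the conditional mean of the sum, rather than preserving it. Your fallback ``infinite i.i.d.\ stream plus optional stopping'' idea is asserted but not carried out---it is not specified which stopping time or which martingale makes the WOR sum a conditional expectation of the WR sum, and the two partial sums have mismatched lengths---so as written it is a gesture rather than a proof. To make the conditional-expectation strategy rigorous one needs an explicit coupling of the multinomial count vector with the uniform size-$n$ subset indicator vector such that the latter is a conditional expectation of the former, or Hoeffding's original direct argument; your outline reaches neither.
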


In Theorem~\ref{thm:draw-without-replacement}, we can understand $f$ as the error between the empirical average and the true average, and $\mathcal{X}$ as the outputs of $\phi$ on the frame or canonicalization. Then we should adopt drawing without replacement since it leads to smaller approximation error and thus faster concentration rate.

When drawing without replacement, we compare the concentration rate between frame averaging $\mathcal{F}$ and the corresponding canonicalization $\mathcal{C}$. By Theorem~\ref{thm:equivalence} we have $|V_\mathcal{F}(X)|=|\mathcal{C}(X)|\cdot|G_X|$, where $G_X$ is the stabilizer of $X$ in group $G$. In fact $V_\mathcal{F}(X)$ is just $\mathcal{C}(X)$ repeated for $|G_X|$ times. To see this, we can divide $\mathcal{F}(X)$ into equivalence classes $\mathcal{F}(X)/G_X$, and denote by $\sim$ the corresponding equivalence relation. For two group actions $g_1,g_2\in\mathcal{F}(X)$, $g_1\sim g_2$ if and only if there exists $g_0\in G_X$ such that $g_1=g_0g_2$. Since $X=\rho_1(g_0)X$ by definition, group actions in the same equivalence group will result in the same graph when applied to $X$. Since each equivalence group has size $|G_X|$, the multi-set of graphs generated by the frame $V_\mathcal{F}(X)$ is just the repetition of $\mathcal{C}(X)$ for $|G_X|$ times.

Intuitively, the frame size $|V_\mathcal{F}(X)|$ is $|G_X|$ times larger than the canonicalization size $|\mathcal{C}(X)|$, thus drawing without replacement on $V_\mathcal{F}(X)$ would concentrate slower than on $\mathcal{C}(X)$. This is shown in the following theorem.
\begin{theorem}[\citet{concentration-without-replacement}]\label{thm:concentration-without-replacement}
    Let $\mathcal{X}=(x_1,\dots,x_N)$ be a finite population of $N>1$ real points, and $(X_1,\dots,X_n)$ be a list of size $n<N$ sampled without replacement from $\mathcal{X}$. Then for all $\varepsilon>0$, the following concentration bounds hold:
    \[
        \Pr\left[\max_{n\leq k\leq N-1}\frac{\sum_{t=1}^k(X_t-\mu)}k\geq\varepsilon\right]\leq\exp\left(-\frac{2n\varepsilon^2}{(1-n/N)(1+1/n)(b-a)^2}\right),
    \]
    where $a=\min_{1\leq i\leq N}x_i$, $b=\max_{1\leq i\leq N}x_i$, and $\mu=\frac1N\sum_{i=1}^Nx_i$.
\end{theorem}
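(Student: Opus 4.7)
The plan is to establish this as a Serfling-type maximal inequality by combining a reverse martingale representation of the centered partial averages with a Chernoff-style tail bound that exploits the sampling-without-replacement structure.

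First, I would extend the sample to a uniform random permutation $(X_1,\dots,X_N)$ of $\mathcal{X}$, so that $(X_1,\dots,X_n)$ has the claimed distribution, and center by writing $Y_t=X_t-\mu$ and $S_k=\sum_{t=1}^k Y_t$, noting that $S_N=0$ deterministically. The structural lemma is that $M_k:=S_k/k$ is a reverse martingale with respect to the decreasing filtration $\mathcal{G}_k=\sigma(X_{k+1},\dots,X_N)$: conditional on $\mathcal{G}_{k+1}$, exchangeability of the uniform permutation makes the first $k+1$ positions a uniformly random ordering of a known multiset of $k+1$ centered values whose sum equals $S_{k+1}$, so $E[S_k\mid\mathcal{G}_{k+1}]=k\cdot S_{k+1}/(k+1)$ and hence $E[M_k\mid\mathcal{G}_{k+1}]=M_{k+1}$.

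Next I would reindex by $l=N-k$ to convert this into a forward martingale and apply Doob's maximal inequality to the forward submartingale $\exp(h M_k)$ with $h>0$, reducing the tail bound on the maximum to a single-index Chernoff bound:
\[
\Pr\left[\max_{n\leq k\leq N-1}M_k\geq\varepsilon\right]\leq e^{-h\varepsilon}\,E\bigl[\exp(hS_n/n)\bigr].
\]
To control the remaining moment generating function I would invoke Hoeffding's comparison theorem, which states that for any convex $\phi$ the expectation $E[\phi(S_n)]$ under sampling without replacement is dominated by $E[\phi(\tilde S_n)]$ under i.i.d.\ sampling with replacement from the same empirical distribution, and then apply Hoeffding's lemma to each centered bounded summand $\tilde Y_t\in[a-\mu,b-\mu]$. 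This route already delivers a subgaussian estimate of the form $\exp(h^2(b-a)^2/(8n))$ and, after optimizing over $h$, yields the weaker bound $\exp(-2n\varepsilon^2/(b-a)^2)$.

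To recover the finite-population factor $(1-n/N)(1+1/n)$, I would sharpen the MGF estimate via the Serfling refinement: rather than compare to the with-replacement case, compute the conditional increment variances of the reverse martingale $(M_k)$ directly. Each such increment carries an extra factor $(1-k/N)$ compared with the i.i.d.\ case, since once the last $N-k$ positions are fixed the next draw has a strictly reduced variance. Summing these increment-variance contributions telescopically across $k\in\{n,\dots,N-1\}$ produces exactly the target variance proxy $(1-n/N)(1+1/n)(b-a)^2/(4n)$, and the Chernoff optimization in $h$ then yields the stated exponent. The principal obstacle is precisely this constant-chasing: the plain reverse-martingale-plus-Hoeffding argument loses the finite-population improvement, and recovering the sharp factor $(1-n/N)(1+1/n)$ requires delicate Serfling-style bookkeeping that tracks the conditional variances of the backward-martingale increments exactly rather than bounding them by their with-replacement counterparts; the maximal inequality itself, by contrast, is essentially automatic once the reverse martingale structure has been identified.
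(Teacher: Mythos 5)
This statement is quoted from \citet{concentration-without-replacement} and the paper does not reprove it, so there is no in-paper proof to compare against; what you have written must be judged on its own merits as a reconstruction of the Serfling-style argument.

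The structural skeleton of your sketch is sound. The identification of $M_k=S_k/k$ as a reverse martingale with respect to $\mathcal{G}_k=\sigma(X_{k+1},\dots,X_N)$ is correct (exchangeability gives $E[Y_t\mid\mathcal{G}_{k+1}]=S_{k+1}/(k+1)$ for each $t\le k+1$), and reindexing by $l=N-k$ to apply Doob's maximal inequality to the submartingale $\exp(hM_k)$ correctly reduces the task to bounding $E[\exp(hS_n/n)]$. Your ``first way'' via Hoeffding's comparison theorem indeed recovers only the bound without the finite-population factor; this part is fine but weaker than the target.

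The gap is in your account of where $(1-n/N)(1+1/n)$ actually comes from. You assert that each reverse-martingale increment carries ``an extra factor $(1-k/N)$'' because ``once the last $N-k$ positions are fixed the next draw has a strictly reduced variance.'' This is not what happens. Conditioned on $\mathcal{G}_{k+1}$, the increment is $M_k-M_{k+1}=\bigl(S_{k+1}-(k+1)Y_{k+1}\bigr)/\bigl(k(k+1)\bigr)$, where $Y_{k+1}$ is a uniformly random member of the multiset of $k+1$ centered values currently occupying the first $k+1$ slots. Its conditional variance is $\mathrm{Var}(Y_{k+1}\mid\mathcal{G}_{k+1})/k^2$, a \emph{random} quantity (it depends on which $k+1$ values those are) with no clean deterministic factor of $1-k/N$. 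The correct route is not to chase conditional variances at all but to note that the conditional \emph{range} of $M_k-M_{k+1}$ is deterministically bounded by $(b-a)/k$, so Azuma--Hoeffding applied to the telescoping decomposition $M_n=\sum_{k=n}^{N-1}(M_k-M_{k+1})$ (with $M_N:=S_N/N=0$) gives $E[\exp(hM_n)]\le\exp\!\bigl(h^2(b-a)^2\sum_{k=n}^{N-1}k^{-2}/8\bigr)$. The finite-population improvement then drops out of the elementary arithmetic inequality $\sum_{k=n}^{N-1}k^{-2}\le\frac{(N-n)(n+1)}{Nn^2}=\frac{(1-n/N)(1+1/n)}{n}$ (provable by induction on $N$, with equality at $N=n+1$), after which optimizing in $h$ yields the stated exponent. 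So the mechanism is the $1/k$ scaling of the increments together with the forced anchoring $S_N=0$, not a pointwise shrinkage of per-step variance; your sketch would need this replacement to be made rigorous.
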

In Theorem~\ref{thm:concentration-without-replacement}, let $\mu$ be the true average and let $X_1,\dots,X_n$ be the outputs of $\phi$ on the sampled graphs from the frame or canonicalization. The upper bound in Theorem~\ref{thm:concentration-without-replacement} decreases as the frame/canonicalization size $N$ decreases, indicating faster concentration rate. Since $|\mathcal{C}(X)|<|V_\mathcal{F}(X)|$, we conclude that CA is more sample efficient than FA; and since the frame size of OAP-graph is smaller than that of FA-graph, we conclude that OAP-graph is more sample efficient than FA-graph.

\section{PCA-frame methods for orthogonal equivariance}\label{app:pca}

\subsection{PCA-frame methods}

We can achieve orthogonal equivariance by using PCA-frame methods \citep{sign-equivariant}. Given input $\mX\in\R^{n\times k}$, we compute orthonormal eigenvectors $\mR_\mX\in\mathrm{O}(k)$ of the covariance matrix $\cov(\mX)=(\mX-\frac1n\vone\vone^\top\mX)^\top(\mX-\frac1n\vone\vone^\top\mX)$, where $\mathrm{O}(k)$ is the group of orthogonal matrices in $\R^{k\times k}$. Then we transform $\mX$ into its canonical form $\mX\mR_\mX$ and feed it into a network $h$. Finally, we apply $\mR_\mX^\top$ to the output of the network to transform it back to its original orientation. The whole process can be described as a neural network model
\[
    f(\mX)=h(\mX\mR_\mX)\mR_\mX^\top.
\]
See Figure~\ref{fig:pca} for an illustration. Intuitively, this first transforms $\mX$ by $\mR_\mX$ into a nearly canonical orientation that is unique up to sign flips; this can be seen as writing the points in the principal components basis, or aligning the principal components of $\mX$ with the coordinate axes. Then we process $\mX\mR_\mX$ using the model $h$, and finally incorporate orientation information back into the output by post-multiplying by $\mR_\mX^\top$.

\begin{figure}[htbp]
    \centering
    \includegraphics[width=0.6\textwidth]{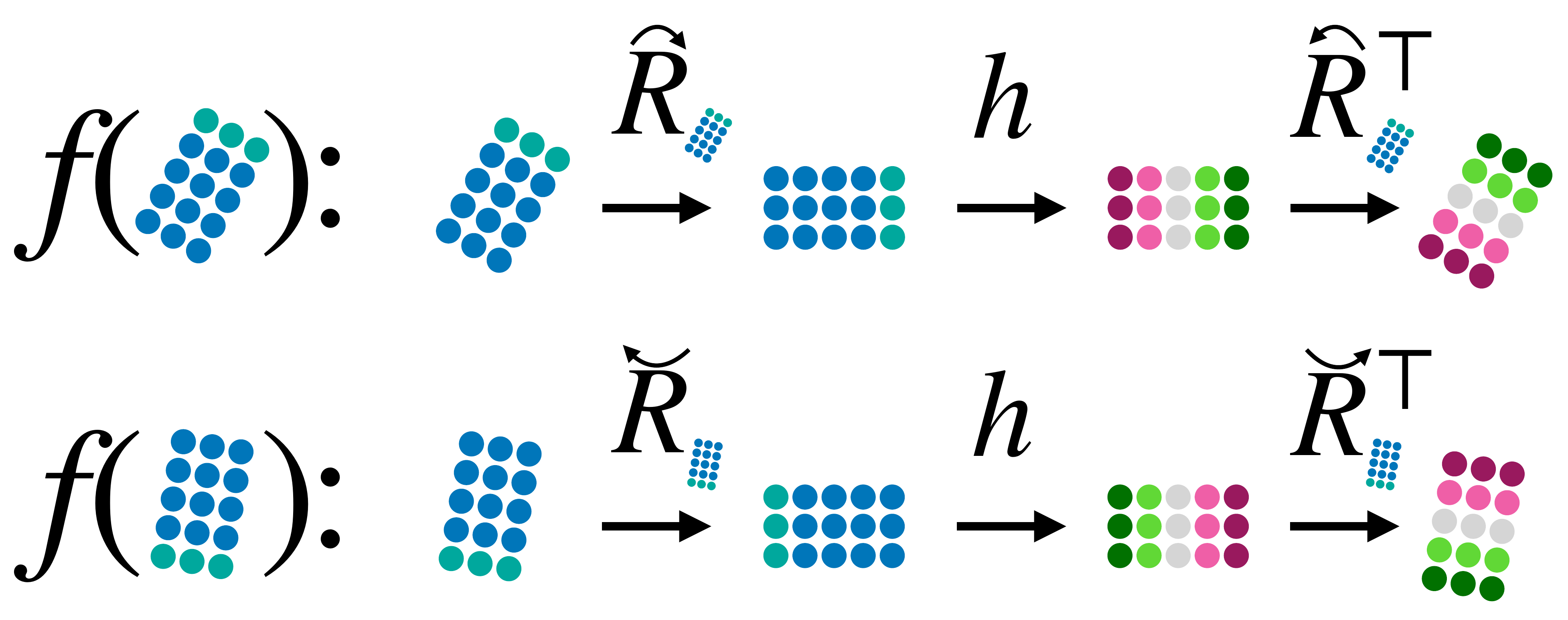}
    \caption{Using PCA-frame methods to achieve orthogonal equivariance, described as a neural network model $f(\mX)=h(\mX\mR_\mX)\mR_\mX^\top$, where $\mR_\mX$ is a choice of principal components for the point cloud $\mX\in\R^{n\times k}$. We first transform $\mX$ via $\mR_\mX$ into an orientation that is unique up to sign flips, then process $\mX\mR_\mX$ using a network $h$, and finally reintegrate orientation information back into the output via $\mR_\mX^\top$. Figure reproduced with permission from \citet{sign-equivariant}.}
    \label{fig:pca}
\end{figure}

However, due to the ambiguities of eigenvectors, the canonical form $\mX\mR_\mX$ is not unique, and $f$ cannot achieve exact orthogonal equivariance without addressing these ambiguities. \citet{frame-averaging} averages over all possible sign choices for $\mR_\mX$, which requires $2^k$ forward passes through the base model $h$. \citet{sign-equivariant} resolves these ambiguities with special sign equivariant architectures for $h$. In Section~\ref{sec:new-without-permutation}, we propose to address them using optimal canonicalization.

\subsection{Experiments on the \texorpdfstring{$n$}{n}-body problem}\label{sec:nbody}

We use canonicalization to achieve exact orthogonal equivariance with PCA-frame methods, as introduced in Appendix~\ref{app:pca}. We consider simulating $n$-body problems \citet{se3-transformer}. For baselines we consider FA over signs of eigenvectors and sign-equivariant models \citep{sign-equivariant}. We also evaluate CA based on Algorithm~\ref{alg:eig-basis} (OAP-eig) and Algorithm~\ref{alg:lap-basis} (OAP-lap) on this task. We scale the dimension $d$ of the problem and report the training time and MSE of different models. In Figure~\ref{fig:nbody-time}, we observe that the training time of FA scales exponentially, since it needs to average $2^d$ sign choices. FA runs out of memory on a 32GB V100 GPU for $d=11$. Both sign-equivariant model and CA have almost constant training time as $d$ increases, while CA is slightly more efficient since it enforces no extra equivariance restraint on the model. In Figure~\ref{fig:nbody-mse}, we observe comparable performance between these methods. CA achieves much better scalability than FA with a small loss in accuracy.

\begin{figure}[htbp]
    \centering
    \begin{subfigure}[t]{.49\columnwidth}
        \centering
        \includegraphics[height=5cm]{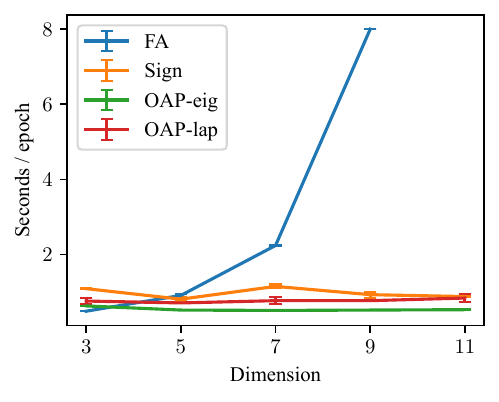}
        \caption{Training time.}
        \label{fig:nbody-time}
    \end{subfigure}%
    ~%
    \begin{subfigure}[t]{.49\columnwidth}
        \centering
        \includegraphics[height=5cm]{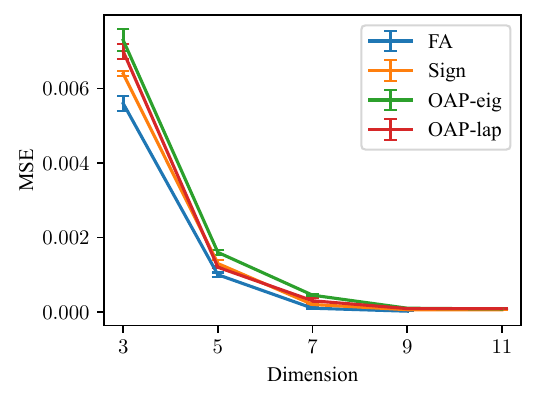}
        \caption{Test MSE\@.}
        \label{fig:nbody-mse}
    \end{subfigure}
    \caption{The training time and test MSE of models in the $n$-body problem. Results are averaged over 4 runs with different seeds.}
\end{figure}

\section{Proofs}\label{app:proofs}

\subsection{Proof of Theorem~\ref{thm:equivalence}}\label{app:proof-thm-equivalence}

\begin{proof}
    Let $\mathcal{F}$ be a frame. Consider the following set-valued function $\mathcal{C}_\mathcal{F}\colon V\to 2^V\backslash\varnothing$ defined by
    \[
        \mathcal{C}_\mathcal{F}(X)=\{\rho_1(g)^{-1}X\mid g\in\mathcal{F}(X)\}.
    \]
    Then $\mathcal{C}_\mathcal{F}$ is $G$-invariant because for any $g'\in G$,
    \begin{align*}
        \mathcal{C}_\mathcal{F}\bigl(\rho_1(g')X\bigr)&=\left\{\rho_1(g)^{-1}\rho_1(g')X\mid g\in\mathcal{F}\bigl(\rho_1(g')X\bigr)\right\}\\
        &=\{\rho_1(g)^{-1}\rho_1(g')X\mid g\in g'\mathcal{F}(X)\}\\
        &=\{\rho_1(g)^{-1}\rho_1(g')^{-1}\rho_1(g')X\mid g\in\mathcal{F}(X)\}\\
        &=\{\rho_1(g)^{-1}X\mid g\in\mathcal{F}(X)\}\\
        &=\mathcal{C}_\mathcal{F}(X).
    \end{align*}
    Thus $\mathcal{C}_\mathcal{F}$ is a canonicalization. Clearly it is also an orbit canonicalization since $\mathcal{C}_\mathcal{F}(X)\subset V_G(X)$ for all $X$.

    Let $g_1,g_2\in\mathcal{F}(X)$. We claim that $\rho_1(g_1)^{-1}X=\rho_1(g_2)^{-1}X$ if and only if there exists $g_0\in G_X$ such that $g_1=g_0g_2$, where $G_X=\{g\in G\mid\rho_1(g)X=X\}$ is the stabilizer of $X$. On the one hand, if $g_1=g_0g_2$, then $\rho_1(g_1)^{-1}X=\rho_1(g_2)^{-1}\rho_1(g_0)^{-1}X=\rho_1(g_2)^{-1}X$. On the other hand, if $\rho_1(g_1)^{-1}X=\rho_1(g_2)^{-1}X$, then $X=\rho_1(g_1)\rho_1(g_2)^{-1}X$, which means $g_1g_2^{-1}\in G_X$. Denote $g_1g_2^{-1}=g_0$, then $g_1=g_0g_2$.

    Thus, we can divide the frame into equivalence classes $\mathcal{F}(X)/G_X$. Group actions in the same equivalence class result in the same element when applied to $X$, while group actions in different equivalence classes result in different elements. The total number of elements obtained by applying $\mathcal{F}(X)$ to $X$ is $|\mathcal{F}(X)|/|G_X|$, \textit{i.e.}, $|\mathcal{C}_\mathcal{F}(X)|=|\mathcal{F}(X)|/|G_X|\leq|\mathcal{F}(X)|$. If $X$ has non-trivial automorphism, then $|\mathcal{C}_\mathcal{F}(X)|<|\mathcal{F}(X)|$, meaning canonicalization is strictly more efficient than the frame.

    Similarly, let $\mathcal{C}$ be an orbit canonicalization. Consider the following set-valued function $\mathcal{F}_\mathcal{C}\colon V\to 2^G\backslash\varnothing$ defined by
    \[
        \mathcal{F}_\mathcal{C}(X)=\{g\in G\mid\rho_1(g)^{-1}X\in\mathcal{C}(X)\}.
    \]
    Since $\mathcal{C}(X)\subset V_G(X)$, $\mathcal{F}_\mathcal{C}(X)$ is well-defined. It is also $G$-equivariant, since for any $g'\in G$,
    \begin{align*}
        \mathcal{F}_\mathcal{C}\bigl(\rho_1(g')X\bigr)&=\left\{g\in G\mid\rho_1(g)^{-1}\rho_1(g')X\in\mathcal{C}\bigl(\rho_1(g')X\bigr)\right\}\\
        &=\{g\in G\mid\rho_1(g'^{-1}g)^{-1}X\in\mathcal{C}(X)\}\\
        &=\{g\in g'G\mid\rho_1(g)^{-1}X\in\mathcal{C}(X)\}\\
        &=g'\mathcal{F}_\mathcal{C}(X).
    \end{align*}
    Thus $\mathcal{F}_\mathcal{C}$ is a frame. We can also see that the canonicalization induced by $\mathcal{F}_\mathcal{C}$ is just $\mathcal{C}$ itself. Repeating the arguments in the first part of this proof, we conclude that $|\mathcal{F}_\mathcal{C}(X)|=|G_X|\cdot|\mathcal{C}(X)|\geq|\mathcal{C}(X)|$. If $X$ has non-trivial automorphism, then canonicalization is strictly more efficient than the frame.
\end{proof}


\begin{corollary}\label{cor:equivalence}
    A frame $\mathcal{F}$ is optimal \textit{iff} the canonicalization that it induces is optimal.
\end{corollary}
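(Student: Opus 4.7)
The plan is to exploit the size identity from Theorem~\ref{thm:equivalence}: for any frame $\mathcal{F}$ we have $|\mathcal{F}(X)| = |G_X| \cdot |\mathcal{C}_\mathcal{F}(X)|$, and for any orbit canonicalization $\mathcal{C}$ we have $|\mathcal{F}_\mathcal{C}(X)| = |G_X| \cdot |\mathcal{C}(X)|$. The key observation is that the proportionality factor $|G_X|$ depends only on the input $X$ and not on the frame or canonicalization, so pointwise size comparisons are preserved in both directions under the correspondence $\mathcal{F} \leftrightarrow \mathcal{C}_\mathcal{F}$.

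For the forward direction, I would assume $\mathcal{F}$ is optimal and show $\mathcal{C}_\mathcal{F}$ is optimal. Given an arbitrary orbit canonicalization $\mathcal{C}'$, I would produce its associated frame $\mathcal{F}_{\mathcal{C}'}$ via Theorem~\ref{thm:equivalence}, invoke optimality of $\mathcal{F}$ to get $|\mathcal{F}(X)| \leq |\mathcal{F}_{\mathcal{C}'}(X)|$ for every $X$, substitute both sides using the size identity, and divide by $|G_X|$ to conclude $|\mathcal{C}_\mathcal{F}(X)| \leq |\mathcal{C}'(X)|$.

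The reverse direction is strictly symmetric: starting from the optimality of $\mathcal{C}_\mathcal{F}$, for any competing frame $\mathcal{F}'$ I would pass to its induced orbit canonicalization $\mathcal{C}_{\mathcal{F}'}$, apply optimality to get $|\mathcal{C}_\mathcal{F}(X)| \leq |\mathcal{C}_{\mathcal{F}'}(X)|$, and multiply both sides by $|G_X|$ to recover $|\mathcal{F}(X)| \leq |\mathcal{F}'(X)|$, establishing optimality of $\mathcal{F}$.

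The main point to pin down, rather than a serious technical obstacle, is the precise scope of the quantifier in the definition of an optimal canonicalization. Since the bijection in Theorem~\ref{thm:equivalence} is between frames and \emph{orbit} canonicalizations, the corollary is naturally read as optimality within that class --- a restriction that also aligns with Theorem~\ref{thm:universal-iff-orbit}, which identifies orbit canonicalizations with the universal ones. Once this convention is fixed, the proof reduces to a one-line calculation driven by the multiplicative identity $|\mathcal{F}(X)| = |G_X| \cdot |\mathcal{C}_\mathcal{F}(X)|$, and no further structural lemmas are required.
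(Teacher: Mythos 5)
Your proof is correct and follows essentially the same route as the paper's, which simply invokes the multiplicative identity $|\mathcal{F}(X)|=|G_X|\cdot|\mathcal{C}(X)|$ from Theorem~\ref{thm:equivalence} together with the definition of optimality. Your additional observation about pinning down the quantifier scope---that ``optimal'' must be read as optimal within the class of orbit canonicalizations, since a degenerate non-orbit canonicalization could trivially achieve size $1$ everywhere---is a genuine clarification that the paper's own terse proof glosses over.
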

\begin{proof}
    Corollary~\ref{cor:equivalence} immediately follows from the definition of optimality and the relation $|\mathcal{F}(X)|=|G_X|\cdot|\mathcal{C}(X)|$, where $\mathcal{F}$ and $\mathcal{C}$ induce each other.
\end{proof}

\subsection{Proof of Theorem~\ref{thm:universal-iff-orbit}}\label{app:proof-thm-universal-iff-orbit}

We first show that an orbit canonicalization itself is universal.
\begin{lemma}\label{lem:orbit-imply-universal}
    An orbit canonicalization $\mathcal{C}\colon V\to 2^V\backslash\varnothing$ is universal.
\end{lemma}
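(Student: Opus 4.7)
The plan is to construct the witness $\phi\colon 2^V\to W$ explicitly and verify well-definedness using only the orbit property of $\mathcal{C}$ and $G$-invariance of $f$. Let $f\colon V\to W$ be an arbitrary $G$-invariant function. Define the image of $\mathcal{C}$ by $\mathcal{S}=\{\mathcal{C}(X)\mid X\in V\}\subset 2^V\backslash\varnothing$, and set
\[
    \phi(S)=
    \begin{cases}
        f(s),&\text{if }S\in\mathcal{S}\text{ and }s\in S,\\
        w_0,&\text{otherwise,}
    \end{cases}
\]
where $w_0\in W$ is any fixed default value for sets outside the image.

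First I would check that $\phi$ does not depend on the representative $s$. Because $\mathcal{C}$ is an orbit canonicalization, any $S=\mathcal{C}(X)\in\mathcal{S}$ satisfies $S\subset V_G(X)$, so every $s\in S$ has the form $s=\rho_1(g)^{-1}X$ for some $g\in G$, and by $G$-invariance $f(s)=f(X)$. Hence the value $f(s)$ is constant on $S$.

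Next I would check that $\phi$ does not depend on the choice of preimage $X$ of $S$ under $\mathcal{C}$. Suppose $\mathcal{C}(X)=\mathcal{C}(Y)=S$. Pick any $s\in S$; by the previous paragraph, $f(s)=f(X)$ (using $X$'s orbit) and $f(s)=f(Y)$ (using $Y$'s orbit), so $f(X)=f(Y)$. Therefore $\phi(S)$ is unambiguously defined, and by construction $\phi(\mathcal{C}(X))=f(X)$ for all $X\in V$. This establishes universality.

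I don't expect a serious obstacle here: the lemma is essentially a tautology once one unpacks the definitions, and the orbit property is exactly the hypothesis needed to ensure that $f$ is constant on each $\mathcal{C}(X)$. The only subtlety worth spelling out is that two distinct inputs $X,Y$ could conceivably be mapped to the same canonical set $\mathcal{C}(X)=\mathcal{C}(Y)$ even when $X\not\sim Y$ is a priori possible, but the orbit condition forces any common element $s$ to lie in both $V_G(X)$ and $V_G(Y)$, which implies $X\sim Y$ and resolves the ambiguity via $G$-invariance of $f$.
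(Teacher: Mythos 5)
Your proof is correct and follows essentially the same approach as the paper's: exploit that $\mathcal{C}(X)\subset V_G(X)$ together with $G$-invariance of $f$ to show $f$ is constant on each canonical set, so evaluating $f$ on any representative yields a well-defined $\phi$. You are somewhat more careful than the paper, which simply writes ``take $\phi=f$'' without explicitly verifying independence of the representative or of the preimage $X$ under $\mathcal{C}$; both of these checks you spell out are correct and worth having.
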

\begin{proof}
    Recall universality means that for any $G$-invariant function $f\colon V\to W$, there exists a well-defined function $\phi\colon 2^V\to W$ such that $f(X)=\phi(\mathcal{C}(X))$ for all $X\in V$. Since $f$ is $G$-invariant, all inputs in the same equivalence class induced by $G$ produce the same output when fed to $f$. This means that for fixed $X$, $f(X')$ is constant for all $X'\in V_G(X)$. We can simply take the $\phi$ function to be $\phi=f$. Since $\mathcal{C}(X)\subset V_G(X)$, we have $\phi(\mathcal{C}(X))=f(\mathcal{C}(X))=f(X)$ for all $X$, which proves the universality of $\mathcal{C}$.
\end{proof}

Then we give the proof of Theorem~\ref{thm:universal-iff-orbit}.

\begin{proof}
    \emph{Sufficiency}. If $g(\mathcal{C}(X))=\mathcal{C}_c(X)$ for some orbit canonicalization $\mathcal{C}_c$ and all $X\in V$, we prove $\mathcal{C}$ is universal. By Lemma~\ref{lem:orbit-imply-universal}, for any $G$-invariant function $f\colon V\to W$, there exists a well-defined function $\phi_c$ such that $f(X)=\phi_c(\mathcal{C}_c(X))$ for all $X\in V$. Letting $\phi=\phi_c\mathbin{\circ}g$, we have $f(X)=\phi_c(g(\mathcal{C}(X)))=\phi(\mathcal{C}(X))$ for all $X\in V$, hence $\mathcal{C}$ is universal.

    \emph{Necessity}. If $\mathcal{C}$ is universal, we construct $g\colon 2^V\to 2^V$ such that it maps $\mathcal{C}(X)$ to any subset of $V_G(X)$, for all $X\in V$. Since $\mathcal{C}$ is invariant, this mapping is well-defined; and since the equivalence classes $\{V_G(X)\}$ are disjoint, this mapping is also injective. Clearly, the canonicalization defined by $\mathcal{C}_c(X)\coloneqq g(\mathcal{C}(X))$ is an orbit canonicalization.
\end{proof}

\subsection{Proof of Theorem~\ref{thm:ca-universal}}\label{app:proof-thm-ca-universal}

\begin{proof}
    The invariance of $\varPhi_\mathrm{CA}$ is derived from the invariance of $\mathcal{C}$ (in fact this does not rely on $\mathcal{C}$ to be an orbit canonicalization). For all $g\in G$ and $X\in V$,
    \begin{multline*}
        \varPhi_\mathrm{CA}\bigl(\rho_1(g)X;\mathcal{C},\phi\bigr)=\frac1{|\mathcal{C}(\rho_1(g)X)|}\sum_{X_0\in\mathcal{C}(\rho_1(g)X)}\phi(X_0)\\=\frac1{|\mathcal{C}(X)|}\sum_{X_0\in\mathcal{C}(X)}\phi(X_0)=\varPhi_\mathrm{CA}(X;\mathcal{C},\phi).
    \end{multline*}

    For universality, the $G$-invariance of $f$ implies that the value of $f$ is constant on each equivalence class $V_G(X)$. Since $\mathcal{C}$ is an orbit canonicalization, this further implies that $f$ is constant on each canonicalization $\mathcal{C}(X)$. Thus $f(X)=\varPhi_\mathrm{CA}(X;\mathcal{C},f)$ for all $X$. For an arbitrary $G$-invariant function $f$, the approximation error of $\varPhi_\mathrm{CA}$ is bounded by
    \begin{align*}
        \|f(X)-\varPhi_\mathrm{CA}(X;\mathcal{C},\phi)\|_W&=\|\varPhi_\mathrm{CA}(X;\mathcal{C},f)-\varPhi_\mathrm{CA}(X;\mathcal{C},\phi)\|_W\\
        &=\left\lVert\frac1{|\mathcal{C}(X)|}\sum_{X_0\in\mathcal{C}(X)}f(X_0)-\frac1{|\mathcal{C}(X)|}\sum_{X_0\in\mathcal{C}(X)}\phi(X_0)\right\rVert_W\\
        &\leq\frac1{|\mathcal{C}(X)|}\sum_{X_0\in\mathcal{C}(X)}\left\lVert f(X_0)-\phi(X_0)\right\rVert_W\\
        &\leq\max_{X\in V}\|f(X)-\phi(X)\|_W.
    \end{align*}
    Thus the approximation error of $\varPhi_\mathrm{CA}$ is bounded by that of $\phi$, and the universality of $\phi$ implies the universality of $\varPhi_\mathrm{CA}$.
\end{proof}


A direct corollary is that we can universally approximate any $G$-invariant function using MLPs under mild conditions.
\begin{corollary}\label{cor:ca-universal}
    Let $V,W$ be compact normed linear spaces. For any continuous $G$-invariant $f\colon V\to W$ and any $\varepsilon>0$, there exists an MLP network $\phi\colon V\to W$ such that $\|f(X)-\varPhi_\mathrm{CA}(X;\mathcal{C},\phi)\|\leq\varepsilon$ holds for all $X\in V$.
\end{corollary}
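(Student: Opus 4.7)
The plan is to chain together two standard facts: the uniform approximation bound already derived in the proof of Theorem~\ref{thm:ca-universal}, and a vector-valued universal approximation theorem for MLPs on a compact domain. Since the corollary only asks for an approximation guarantee under the hypothesis that $\phi$ can be taken to be an MLP, essentially no new structural work is needed --- the invariance of $\varPhi_\mathrm{CA}$ is already handled by the canonicalization $\mathcal{C}$, so we merely need to exhibit an MLP that is uniformly close to $f$ on $V$.

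First I would invoke the vector-valued universal approximation theorem. Since $V$ is compact and $f\colon V\to W$ is continuous (with $W$ a finite-dimensional normed space, as is standard for MLP targets), the classical theorems of Cybenko and Hornik, applied component-wise in any fixed basis of $W$, guarantee that for the given $\varepsilon>0$ there exists an MLP $\phi\colon V\to W$ with
\[
    \max_{X\in V}\lVert f(X)-\phi(X)\rVert_W\leq\varepsilon.
\]
This is the only ``approximation-theoretic'' ingredient of the argument, and it is where I would note that the compactness of $V$ is essential to turn pointwise density of MLPs into a uniform bound.

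Next I would plug this MLP into the canonical averaging scheme and invoke the error bound established inside the proof of Theorem~\ref{thm:ca-universal}. That theorem's proof already shows, using only the triangle inequality on the definition of $\varPhi_\mathrm{CA}$, that
\[
    \lVert f(X)-\varPhi_\mathrm{CA}(X;\mathcal{C},\phi)\rVert_W\leq\max_{X\in V}\lVert f(X)-\phi(X)\rVert_W
\]
for every $X\in V$, using $G$-invariance of $f$ and the orbit-canonicalization property of $\mathcal{C}$. Combining the two bounds gives the desired $\lVert f(X)-\varPhi_\mathrm{CA}(X;\mathcal{C},\phi)\rVert_W\leq\varepsilon$ uniformly in $X$.

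There is no real obstacle here; the statement is essentially a restatement of Theorem~\ref{thm:ca-universal} specialized to the MLP hypothesis class. The only minor subtlety worth flagging in the writeup is ensuring that the universal approximation theorem is cited in a form appropriate to the codomain $W$ (a product/component-wise application suffices), and that $\mathcal{C}$ is assumed to be an orbit canonicalization so that Theorem~\ref{thm:ca-universal} is directly applicable; if $\mathcal{C}$ were only a general canonicalization, one would first invoke Theorem~\ref{thm:universal-iff-orbit} to reduce to an equivalent orbit canonicalization before applying the error bound.
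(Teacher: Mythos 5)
Your proof is correct and follows essentially the same route as the paper: invoke the classical MLP universal approximation theorem to get a uniform $\varepsilon$-approximant $\phi$ of $f$ on the compact domain $V$, then push it through the error bound $\lVert f(X)-\varPhi_\mathrm{CA}(X;\mathcal{C},\phi)\rVert_W\leq\max_{X\in V}\lVert f(X)-\phi(X)\rVert_W$ established in the proof of Theorem~\ref{thm:ca-universal}. The paper's proof is more terse (it just cites MLP universality and says the result ``immediately follows'' from Theorem~\ref{thm:ca-universal}), but it is the same two-step argument, and your remarks about the component-wise application to vector-valued $W$ and the implicit orbit-canonicalization assumption are consistent with the paper's setup.
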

\begin{proof}
    Since MLP is universal in approximating continuous functions $f\colon V\to W$ \citep{mlp-universal}, it also universally approximates such $f$ that is $G$-invariant. Corollary~\ref{cor:ca-universal} immediately follows from Theorem~\ref{thm:ca-universal}.
\end{proof}

\subsection{Proof of Theorem~\ref{thm:signnet-non-universal}}\label{app:proof-thm-signnet-non-universal}

Notice that in Theorem~\ref{thm:signnet-non-universal}, the function $\phi$ have two inputs. Apart from the canonical form $\MAP_{++}(\vu)$, it also takes an indicator $\mathds{1}_\vu$ as input, which indicates whether $\vu$ is canonicalizable. Thus our proof will be divided into two parts, for canonicalizable and uncanonicalizable eigenvectors respectively.

\begin{lemma}\label{lem:canonizable-part}
    A function $h\colon\R^n\to\R^{n\times d_\mathrm{out}}$ is permutation equivariant and sign invariant on all canonicalizable inputs $\vu\in\R^n$ \textit{iff} there exists a permutation equivariant function $\phi\colon\R^n\to\R^{n\times d_\mathrm{out}}$ such that $h(\vu)=\phi(\MAP_{++}(\vu)))$.
\end{lemma}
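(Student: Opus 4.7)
The plan is to prove both directions of the biconditional separately. The forward (``if'') direction is a straightforward composition argument: if $h(\vu)=\phi(\MAP_{++}(\vu))$ with $\phi$ permutation equivariant, then on canonicalizable inputs $\MAP_{++}(\vu)=\MAP(\vu)\in\{\vu,-\vu\}$, and because $\MAP$ is sign invariant by construction we immediately get sign invariance of $h$, while permutation equivariance of $h$ follows from composing the permutation equivariance of $\MAP$ (as an equivariant canonicalization, proved by \citet{laplacian-canonization}) with that of $\phi$.

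For the ``only if'' direction, the natural candidate is to \emph{define} $\phi$ on the image of $\MAP$ over canonicalizable inputs by the rule $\phi(\MAP(\vu))=h(\vu)$. The first step is to verify this is well-defined: if $\MAP(\vu_1)=\MAP(\vu_2)$ for canonicalizable $\vu_1,\vu_2$, then since $\MAP(\vu)\in\{\vu,-\vu\}$ we must have $\vu_2=\pm\vu_1$, and sign invariance of $h$ gives $h(\vu_1)=h(\vu_2)$. The second step is to show that $\phi$ is permutation equivariant on its defined domain: for any permutation $\mP$, canonicalizability is a property of the orbit so $\mP\vu$ is canonicalizable whenever $\vu$ is, and the equivariance of $\MAP$ yields $\MAP(\mP\vu)=\mP\MAP(\vu)$; combining this with the equivariance of $h$ gives $\phi(\mP\MAP(\vu))=h(\mP\vu)=\mP h(\vu)=\mP\phi(\MAP(\vu))$.

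The third step is to extend $\phi$ from the image of $\MAP$ (on canonicalizable inputs) to all of $\R^n$ while preserving permutation equivariance. Since the image is closed under permutations, so is its complement; on the complement we can pick one representative per permutation orbit, assign $\phi$ arbitrarily there, and propagate the value equivariantly to the rest of the orbit. The lemma's required identity $h(\vu)=\phi(\MAP_{++}(\vu))$ only constrains $\phi$ on the image, so the extension is unrestricted elsewhere.

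The main obstacle I anticipate is the well-definedness step and the bookkeeping around the permutation-equivariant extension. The well-definedness hinges on the precise fact that $\MAP$ outputs an element of the sign orbit $\{\vu,-\vu\}$ rather than some unrelated canonical form, so that collapsing MAP values corresponds exactly to the sign equivalence killed by the hypothesis on $h$. The extension argument requires a careful invocation of the axiom of choice (or a concrete choice of orbit representatives) to avoid ambiguity on the complement; this is routine but deserves explicit mention to make the construction of a globally equivariant $\phi$ rigorous.
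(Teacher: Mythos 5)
Your proof is correct and takes essentially the same route as the paper's. The sufficiency direction is the same composition argument, and the necessity direction amounts to the same construction — the paper simply sets $\phi=h$ restricted to the $\MAP_{++}$ canonical forms (equivalently, $\phi(\MAP(\vu)):=h(\vu)$ since $h$ is sign invariant), invoking Theorem~\ref{thm:ca-universal} to justify well-definedness, while you verify well-definedness directly from $\MAP(\vu)\in\{\vu,-\vu\}$ and the sign-invariance hypothesis. Your explicit treatment of the permutation-equivariant extension of $\phi$ from the image of $\MAP$ to all of $\R^n$ is a small point the paper elides; it is correct and makes the construction slightly more rigorous, but it does not change the substance of the argument.
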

\begin{proof}
    \textit{Sufficiency}. If $\phi$ is permutation equivariant, since $\MAP_{++}$ is also permutation equivariant, so is $\phi(\MAP_{++}(\vu))$. Since $\MAP_{++}$ is a canonicalization for sign, $\phi(\MAP_{++}(\vu))$ is also sign-invariant.

    \textit{Necessity}. Given a permutation equivariant and sign invariant function $h$, we simple take $\phi=h$, where $\phi$ is defined on all canonical forms output by $\MAP_{++}$. Since $\MAP_{++}$ is universal on canonical inputs (because it is an orbit canonicalization), by Theorem~\ref{thm:ca-universal}, such $\phi$ exists and $\phi(\MAP_{++}(\vu))$ is sign invariant, while $\phi$ does not have to be sign invariant. Since $h$ is permutation equivariant, so is $\phi$; and since $\MAP_{++}$ is itself permutation equivariant, so is the entire function $\phi(\MAP_{++}(\vu))$.
\end{proof}

Before moving on to the second part, we first introduce some lemmas about the structure of uncanonicalizable eigenvectors under sign ambiguity.

\begin{lemma}\label{lem:paired}
    If $\vu$ is uncanonicalizable, then all non-zero values in $\vu$ are paired opposite values, \textit{i.e.}, the multi-set of all non-zero values in $\vu$ takes the form $\ml a_1,-a_1,a_2,-a_2,\dots,a_k,-a_k\mr$ for some $k$.
\end{lemma}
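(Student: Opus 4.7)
The plan is to build directly on the characterization of sign-uncanonicalizable eigenvectors from \citet{laplacian-canonization}, which says that $\vu \in \R^n$ is uncanonicalizable under sign ambiguity (with permutation equivariance) if and only if there exists a permutation matrix $\mP$ with $\vu = -\mP\vu$. Writing $\pi$ for the permutation so that $\mP\vu$ has $i$-th entry $\evu_{\pi^{-1}(i)}$, the relation $\vu = -\mP\vu$ becomes the pointwise identity $\evu_{\pi(i)} = -\evu_i$ for all $i$ (after relabeling). Thus I would begin the proof by extracting such a $\pi$ and recording this identity.

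Next I would observe that $\pi$ restricts to a permutation on the set $S = \{i : \evu_i \neq 0\}$. Indeed, if $\evu_i \neq 0$ then $\evu_{\pi(i)} = -\evu_i \neq 0$, so $\pi$ sends $S$ into $S$; since $\pi$ is a bijection on $\{1,\dots,n\}$ it must restrict to a bijection on $S$. The plan is then to decompose $\pi|_S$ into disjoint cycles and analyze each cycle separately.

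The core step is to show that every cycle of $\pi|_S$ has even length. Along a cycle $i_1 \to i_2 \to \dots \to i_\ell \to i_1$, the identity $\evu_{\pi(i)} = -\evu_i$ forces $\evu_{i_{t+1}} = -\evu_{i_t}$, and walking around the cycle gives $\evu_{i_1} = (-1)^\ell \evu_{i_1}$. Since $\evu_{i_1} \neq 0$, we need $(-1)^\ell = 1$, so $\ell$ is even. Writing $\ell = 2m$, the values along the cycle alternate as $a, -a, a, -a, \dots$ for some $a \neq 0$, so the multiset of values on the cycle is $\{a, -a\}$ each with multiplicity $m$, which can be written as $m$ copies of the pair $(a,-a)$.

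Finally I would aggregate over cycles: the disjoint union of all cycles of $\pi|_S$ equals $S$, so the multiset of nonzero entries of $\vu$ is a disjoint union of multisets of the form $\{a,-a\}$ (with multiplicity). This is exactly the claimed pairing $\ml a_1,-a_1,\dots,a_k,-a_k\mr$. I do not expect any serious obstacle here; the only mild care required is in handling the bookkeeping between the matrix form $\vu = -\mP\vu$ and the indexwise form $\evu_{\pi(i)} = -\evu_i$, and in noting that repeated values across different cycles are fine because the statement is about multisets.
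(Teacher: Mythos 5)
Your proof is correct and follows essentially the same route as the paper: both start from the characterization $\mP\vu=-\vu$ of uncanonicalizable $\vu$, decompose the induced permutation into cycles on the nonzero entries, deduce from the alternating-sign relation that every such cycle has even length, and then aggregate the $\{a,-a\}$ pairs over cycles. Your write-up is just a little more explicit about why $\pi$ restricts to a bijection on the support and about the multiset bookkeeping.
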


\begin{proof}
    Because $\mP\vu=-\vu$, all non-zero values in $\vu$ have to be permuted by $\mP$. For any orbit of the permutation $n_1\to n_2\to\cdots\to n_k\to n_1$, from the fact $\mP\vu=-\vu$, we know
    \[
        \evu_{n_2}=-\evu_{n_1},\ \evu_{n_3}=-\evu_{n_2}=\evu_1,\ \dots,\ \evu_{n_k}=(-1)^{k+1}\evu_1=-\evu_{n_1}.
    \]
    Thus, $k$ has to be even, and all values in the same orbit of the permutation are paired opposite values of the same magnitude, \textit{e.g.}, $\ml\evu_1,-\evu_1\mr$. Applying it to all orbits of the permutation finishes the proof.
\end{proof}

\begin{lemma}\label{lem:pairwise-permutation}
    For any uncanonicalizable eigenvector $\vu$, there exists a pairwise permutation matrix $\mP$ (which only sweeps two non-zero opposite numbers) such that $\mP\vu=-\vu$.
\end{lemma}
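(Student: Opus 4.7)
The plan is to bootstrap from Lemma~\ref{lem:paired}, which already guarantees that the non-zero entries of any uncanonicalizable $\vu$ can be enumerated as opposite pairs $\ml a_1,-a_1,a_2,-a_2,\dots,a_k,-a_k\mr$. The goal is then to build the desired $\mP$ as an explicit product of disjoint transpositions, each interchanging one such opposite pair.

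First, I would use the existence claim coming from Lemma~\ref{lem:paired}: there is an index set partitioned into pairs $\{p_i,q_i\}$ ($i=1,\dots,k$) and a zero set $Z$, such that $\evu_{p_i}=a_i$, $\evu_{q_i}=-a_i$, and $\evu_j=0$ for $j\in Z$. I would then define $\mP$ to be the permutation matrix corresponding to the involution $\sigma$ that swaps $p_i\leftrightarrow q_i$ for each $i$ and fixes every index in $Z$. By construction, $\sigma$ is a disjoint product of transpositions, and every single transposition swaps two entries of $\vu$ that are non-zero and opposite in sign, matching exactly the ``pairwise permutation that only swaps two non-zero opposite numbers'' description in the statement.

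Next, I would verify $\mP\vu=-\vu$ entrywise. For any $j\in Z$, $(\mP\vu)_j=\evu_{\sigma^{-1}(j)}=\evu_j=0=-\evu_j$. For any $p_i$, $(\mP\vu)_{p_i}=\evu_{q_i}=-a_i=-\evu_{p_i}$, and symmetrically at $q_i$. Hence $\mP\vu=-\vu$ as required. The fact that this $\mP$ is genuinely obtainable as a disjoint product of pair-swaps (rather than containing longer cycles, as a generic permutation witnessing uncanonicalizability might) is precisely what Lemma~\ref{lem:paired} buys us: the opposite-pair structure of the non-zero values lets us bypass any longer cycles present in the originally assumed witness $\mP'$.

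The main obstacle is essentially bookkeeping: if several entries share the same magnitude $a$, the pairing of positive with negative indices is not unique, so I should be careful to note that \emph{any} perfect matching between the set $\{j:\evu_j=a\}$ and the set $\{j:\evu_j=-a\}$ works (these sets have equal cardinality by Lemma~\ref{lem:paired} applied to each magnitude level). Once this matching is fixed for every magnitude, assembling the disjoint transpositions is routine, and the verification above completes the proof.
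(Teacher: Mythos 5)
Your proposal is correct and takes essentially the same approach as the paper: both invoke Lemma~\ref{lem:paired} to obtain the opposite-pair structure of the non-zero entries, and then define $\mP$ as a disjoint product of transpositions swapping those pairs while fixing the zero entries. Your version simply spells out the entrywise verification and the (correct) observation that any perfect matching between equal-magnitude positive and negative entries works, details the paper's one-sentence proof leaves implicit.
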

\begin{proof}
    Since the non-zero values in $\vu$ appear in pairs, we simply choose $\mP$ such that it sweeps these pairs and keeps all $0$'s unmoved.
\end{proof}

Now for the second part.
\begin{lemma}\label{lem:uncanonizable-part}
    A function $h\colon\R^n\to\R^{n\times d_\mathrm{out}}$ is permutation equivariant and sign invariant on all uncanonicalizable inputs $\vu\in\R^n$ \textit{iff} there exists a permutation equivariant function $\phi\colon\R^n\to\R^{n\times d_\mathrm{out}}$ such that $h(\vu)=\phi(|\vu|)$.
\end{lemma}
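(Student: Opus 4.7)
If $h(\vu)=\phi(|\vu|)$ with $\phi$ permutation equivariant, then $h(-\vu)=\phi(|{-\vu}|)=\phi(|\vu|)=h(\vu)$ gives sign invariance, and $h(\mP\vu)=\phi(|\mP\vu|)=\phi(\mP|\vu|)=\mP\phi(|\vu|)=\mP h(\vu)$ gives permutation equivariance, using the fact that taking absolute values commutes with permutations.

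\textbf{Necessity.} The plan is first to establish a row-equality lemma and then use it to define $\phi$ consistently. Specifically, I would prove: for every uncanonicalizable $\vu$ and any indices $i,j$ with $|\evu_i|=|\evu_j|$, rows $i$ and $j$ of $h(\vu)$ coincide. There are two cases. If $\evu_i=\evu_j$ (including both being zero), then the transposition $\mP_{ij}$ fixes $\vu$, so permutation equivariance gives $\mP_{ij}h(\vu)=h(\mP_{ij}\vu)=h(\vu)$, forcing rows $i,j$ to match. If $\evu_i=-\evu_j\ne 0$, then removing this single pair from the multi-set of non-zero entries of $\vu$ leaves, by Lemma~\ref{lem:paired}, a multi-set that still splits into opposite-value pairs; extending $(i,j)$ to a complete such pairing yields a pairwise permutation $\mP^*$ with $\mP^*\vu=-\vu$ that includes $(i,j)$ as one of its disjoint transpositions. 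Sign invariance together with permutation equivariance then gives $h(\vu)=h(-\vu)=\mP^*h(\vu)$, and because $\mP^*$ is a product of disjoint transpositions including $(i,j)$, being fixed by $\mP^*$ forces each paired row to coincide, in particular rows $i$ and $j$.

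With the row-equality lemma in hand, the next step is to show $h(\vu_1)=h(\vu_2)$ whenever $\vu_1,\vu_2$ are both uncanonicalizable with $|\vu_1|=|\vu_2|$. By Lemma~\ref{lem:paired} the two vectors share the same multi-set of entries, so some permutation $\mP$ satisfies $\mP\vu_1=\vu_2$, and any such $\mP$ necessarily permutes indices within the level sets of $|\vu_1|$. Thus $h(\vu_2)=\mP h(\vu_1)$, and the row-equality lemma yields $\mP h(\vu_1)=h(\vu_1)$ because $\mP$ only swaps rows at positions with equal $|\evu|$-values. This makes $\phi(|\vu|):=h(\vu)$ well defined on $\{|\vu|:\vu\text{ uncanonicalizable}\}$, and it inherits permutation equivariance from $h$ via $\phi(\mP|\vu|)=\phi(|\mP\vu|)=h(\mP\vu)=\mP h(\vu)=\mP\phi(|\vu|)$. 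Since this domain is closed under permutations (conjugating a negating permutation preserves uncanonicalizability), $\phi$ can be extended arbitrarily, for example by zero, to the rest of $\R^n$ without breaking equivariance.

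The hard part will be the $\evu_i=-\evu_j$ case of the row-equality lemma: we must produce a specific pairwise permutation realizing $-\vu$ that transposes $(i,j)$, which requires carefully applying Lemma~\ref{lem:paired} to the remaining entries after removing that pair to verify a complete pairing still exists. The rest of the argument is a routine assembly of these building blocks.
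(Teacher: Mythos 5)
Your proof is correct, and it follows the same overall strategy as the paper: first establish that certain rows of $h(\vu)$ must coincide, then define $\phi$ on $|\vu|$ by a consistency argument. Your version is somewhat more careful than the paper's on two points. First, your row-equality lemma explicitly covers the case $\evu_i=\evu_j$ (in particular $\evu_i=\evu_j=0$) via the transposition that fixes $\vu$; the paper only argues that \emph{opposite} entries give equal rows and then asserts without elaboration that ``numbers in $\vu$ with the same absolute value lead to the same outputs.'' That claim does follow transitively for non-zero entries, and zeros can be handled by restricting the reconciling permutation to fix the zero positions, but the paper leaves these steps implicit while you make them unnecessary by proving the stronger lemma outright. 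Second, you address well-definedness directly by showing $h(\vu_1)=h(\vu_2)$ whenever $|\vu_1|=|\vu_2|$ with both uncanonicalizable, and you note that the domain $\{|\vu|:\vu\text{ uncanonicalizable}\}$ is permutation-closed so that $\phi$ extends equivariantly to all of $\R^n$; the paper phrases this via a $\operatorname{rand\_flip}$ construction and does not discuss extending $\phi$ off this domain. One very small quibble: when you say that removing the pair $(\evu_i,\evu_j)$ leaves a paired multi-set ``by Lemma~\ref{lem:paired},'' it is really a one-line consequence of the characterization that every non-zero value appears with the same multiplicity as its negative, rather than a direct invocation of that lemma's statement; your subsequent construction of $\mP^*$ is otherwise correct.
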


\begin{proof}
    \textit{Sufficiency}. First, it is obvious that for any $\vu$,
    $h(\vu)=\phi(|\vu|)$ is sign invariant. It is also permutation equivariant because for any permutation matrix $\mP$, $h(\mP\vu)=\phi(|\mP\vu|)=\phi(\mP|\vu|)=\mP\phi(|\vu|)=\mP h(\vu)$.
    
    \textit{Necessity}. Let $h$ be a permutation equivariant and sign invariant function, then for any uncanonicalizable eigenvector $\vu$ with $\mP\vu=-\vu$, we have $h(\mP\vu)=h(-\vu)=h(\vu)$. Following the proof of Corollary~\ref{lem:pairwise-permutation}, for any two permuted values $\evu_i$ and $\evu_j$ with $\evu_i=-\evu_j$, there exists a pairwise permutation matrix $\mP$ such that $(\mP\vu)_i=\vu_j=-\vu_i$. Further combined with $h(\mP\vu)=h(\vu)$, we have $h(\vu)_i=h(\mP\vu)_i=(\mP h(\vu))_i=h(\vu)_j$. Thus, two opposite numbers in $\vu$ always have the same output on their corresponding rows of $h(\vu)$.

    Given this property, we can choose random opposite signs for equal numbers in $|\vu|$ (\textit{i.e.}, a flipping), denoted as $\tilde {\vu}=\operatorname{rand\_flip}(|\vu|)$. Let $\phi(|\vu|)=h(\operatorname{rand\_flip}(|\vu|))$. It is easy to verify that $\phi(|\vu|)$ is well-defined as different flipping leads to the same output. Specifically, this is because:
    \begin{enumerate}
        \item Opposite numbers in $\vu$ leads to the same output on their corresponding rows of $h(\vu)$;
        \item $\tilde{\vu}$'s produced by different flippings differ only by a permutation;
        \item $h$ is permutation equivariant.
    \end{enumerate}
    Thus numbers in $\vu$ with the same absolute value lead to the same outputs, and these outputs are consistent for different flippings.
\end{proof}

Then we prove Theorem~\ref{thm:signnet-non-universal}.
\begin{proof}
    We construct $\phi$ as follows. If $\mathds{1}_\vu=1$, meaning $\vu$ is canonicalizable, we let $\phi$ as in Lemma~\ref{lem:canonizable-part}; and if $\mathds{1}_\vu=0$, meaning $\vu$ is uncanonicalizable, we let $\phi$ as in Lemma~\ref{lem:uncanonizable-part}. Combining the sufficiency and necessity of these two lemmas gives the proof of Theorem~\ref{thm:signnet-non-universal}.
\end{proof}

\subsection{Proof of Corollary~\ref{cor:signnet-non-universal}}\label{app:proof-cor-signnet-non-universal}

Before proving Corollary~\ref{cor:signnet-non-universal}, we first give some intuition about why SignNet is non-universal. In Theorem~\ref{thm:signnet-non-universal} we have already seen that, on uncanonicalizable eigenvectors, the inner $\phi$ function of SignNet is equal to some other function that only takes in the absolute value of the input eigenvector. This may not be a problem if we are only considering one eigenspace, since a permutation equivariant and sign invariant function is supposed to behave that way; there is no loss of information. The problem occurs when SignNet is applied to multiple eigenvectors, where each eigenvector is passed to the $\phi$ function separately before concatenated and fed to $\rho$. We can imagine that taking the absolute value of one uncanonicalizable eigenvector would lose some relative positional information about which elements are positive and which elements are negative, with respect to other eigenvectors. This loss of information makes SignNet non-universal.

For instance, we may try to construct a simple counterexample where taking the absolute value loses some information and as a result, SignNet fails to distinguish two non-isomorphic matrices. Drawing from the intuition above, here is our first try:
\begin{gather*}
    \mU_1=[\vu_{11},\vu_{12}]=
    \begin{pmatrix}
        1 & -1 & 1 & -1 \\
        2 & 3 & 4 & 5
    \end{pmatrix}^\top,\\
    \mU_2=[\vu_{21},\vu_{22}]=
    \begin{pmatrix}
        -1 & 1 & 1 & -1 \\
        2 & 3 & 4 & 5
    \end{pmatrix}^\top.
\end{gather*}
Suppose the first column eigenvector of $\mU_1$ and $\mU_2$ corresponds to eigenvalue $\lambda_1=1$, the second column eigenvector of $\mU_1$ and $\mU_2$ corresponds to eigenvalue $\lambda_2=2$, and other eigenvectors not shown corresponds to eigenvalue $0$ (so we safely ignore them). Then the Laplacian matrices corresponding to $\mU_1$ and $\mU_2$ are:
\begin{gather*}
    \mL_1=\lambda_1\vu_{11}\vu_{11}^\top+\lambda_2\vu_{12}\vu_{12}^\top=
    \begin{pmatrix}
        9 & 11 & 17 & 19 \\
        11 & 19 & 23 & 31 \\
        17 & 23 & 33 & 39 \\
        19 & 31 & 39 & 51
    \end{pmatrix},\\
    \mL_2=\lambda_1\vu_{21}\vu_{21}^\top+\lambda_2\vu_{22}\vu_{22}^\top=
    \begin{pmatrix}
        9 & 11 & 15 & 21 \\
        11 & 19 & 25 & 29 \\
        15 & 25 & 33 & 39 \\
        21 & 29 & 39 & 51
    \end{pmatrix}.
\end{gather*}
$\mL_1$ and $\mL_2$ are clearly non-isomorphic, as there does not exist a permutation matrix $\mP$ such that $\mL_1=\mP\mL_2\mP^\top$. However, the second eigenvector of $\mU_1$ and $\mU_2$ are the same; the first eigenvector of $\mU_1$ and $\mU_2$ are uncanonicalizable and according to Theorem~\ref{thm:signnet-non-universal}, SignNet will treat them as the same. Therefore, SignNet fails to distinguish these two non-isomorphic matrices, and consequently it cannot approximate any permutation equivariant and sign invariant function that assigns different outputs to these matrices. However, as you may already notice, this example is not really legal, since the eigenvectors in $\mU_1$ and $\mU_2$ are not orthogonal (they are also not normalized, but whether normalizing or not does not affect our claims, so for simplicity we will just keep them unnormalized).

Next, we give a valid example with orthogonal eigenvectors, and show how SignNet fails to distinguish two non-isomorphic matrices in this case:
\begin{gather*}
    \mU_1=[\vu_{11},\vu_{12}]=
    \begin{pmatrix}
        -1 & 1 & -1 & 1 & 2 & 2 & -2 & -2 & 0 & 0 \\
        1 & -1 & 1 & -1 & 1 & 1 & 0 & 0 & -1 & -1
    \end{pmatrix}^\top,\\
    \mU_2=[\vu_{21},\vu_{22}]=
    \begin{pmatrix}
        1 & 1 & -1 & -1 & 2 & 2 & -2 & -2 & 0 & 0 \\
        1 & -1 & -1 & 1 & 1 & -1 & 0 & 0 & -1 & 1
    \end{pmatrix}^\top.
\end{gather*}
We still assume $\lambda_1=1$, $\lambda_2=2$. Their corresponding Laplacian matrices are:
\begin{gather*}
    \mL_1=
    \begin{pmatrix}
        3 & -3 & 3 & -3 & 0 & 0 & 2 & 2 & -2 & -2 \\
        -3 & 3 & -3 & 3 & 0 & 0 & -2 & -2 & 2 & 2 \\
        3 & -3 & 3 & -3 & 0 & 0 & 2 & 2 & -2 & -2 \\
        -3 & 3 & -3 & 3 & 0 & 0 & -2 & -2 & 2 & 2 \\
        0 & 0 & 0 & 0 & 6 & 6 & -4 & -4 & -2 & -2 \\
        0 & 0 & 0 & 0 & 6 & 6 & -4 & -4 & -2 & -2 \\
        2 & -2 & 2 & -2 & -4 & -4 & 4 & 4 & 0 & 0 \\
        2 & -2 & 2 & -2 & -4 & -4 & 4 & 4 & 0 & 0 \\
        -2 & 2 & -2 & 2 & -2 & -2 & 0 & 0 & 2 & 2 \\
        -2 & 2 & -2 & 2 & -2 & -2 & 0 & 0 & 2 & 2
    \end{pmatrix},\\
    \mL_2=
    \begin{pmatrix}
        3 & -1 & -3 & 1 & 4 & 0 & -2 & -2 & -2 & 2 \\
        -1 & 3 & 1 & -3 & 0 & 4 & -2 & -2 & 2 & -2 \\
        -3 & 1 & 3 & -1 & -4 & 0 & 2 & 2 & 2 & -2 \\
        1 & -3 & -1 & 3 & 0 & -4 & 2 & 2 & -2 & 2 \\
        4 & 0 & -4 & 0 & 6 & 2 & -4 & -4 & -2 & 2 \\
        0 & 4 & 0 & -4 & 2 & 6 & -4 & -4 & 2 & -2 \\
        -2 & -2 & 2 & 2 & -4 & -4 & 4 & 4 & 0 & 0 \\
        -2 & -2 & 2 & 2 & -4 & -4 & 4 & 4 & 0 & 0 \\
        -2 & 2 & 2 & -2 & -2 & 2 & 0 & 0 & 2 & -2 \\
        2 & -2 & -2 & 2 & 2 & -2 & 0 & 0 & -2 & 2
    \end{pmatrix}.
\end{gather*}
$\mL_1$ and $\mL_2$ are non-isomorphic. In fact, there are 24 $0$'s in $\mL_1$ but only 16 $0$'s in $\mL_2$, so there is no way they are isomorphic. However, we can verify three things:
\begin{enumerate}
    \item The eigenvectors are legal, \textit{i.e.}, $\vu_{11}^\top\vu_{12}=\vu_{21}^\top\vu_{22}^\top=0$;
    \item All four eigenvectors $\vu_{11},\vu_{12},\vu_{21},\vu_{22}$ are uncanonicalizable;
    \item $\vu_{11}$ and $\vu_{21}$ have the same absolute value, $\vu_{12}$ and $\vu_{22}$ have the same absolute value.
\end{enumerate}
Therefore, by Theorem~\ref{thm:signnet-non-universal}, SignNet would give identical output for $\mU_1$ and $\mU_2$. Thus it cannot approximate permutation equivariant and sign invariant functions that assign different outputs to these matrices. Unconvinced readers could implement a randomly initialized SignNet themselves and verify our claim (we have already done that).

Next we discuss BasisNet. We first introduce BasisNet. Let $\mU_i\in\R^{n\times d_i}$ be an orthonormal basis of a $d_i$ dimensional eigenspace. Then BasisNet is parameterized by
\[
    f(\mU_1,\dots,\mU_l)=\rho\left([\phi_{d_i}(\mU_i\mU_i^\top)]_{i=1}^l\right),
\]
where each $\phi_{d_i}$ is shared amongst all subspaces of the same dimension $d_i$, and $l$ is the number of eigenspaces. Since higher-order permutation equivariant networks are intractable, here we mainly focus on using first-order permutation equivariant $\phi_{d_i}$. This means $\phi_{d_i}(\mP\mU_i\mU_i^\top\mP^\top)=\mP\phi_{d_i}(\mU_i\mU_i^\top)$ for all permutation matrix $\mP$.

As a special case, if $d_i=1$, by letting $\phi(\vu)=\phi_1(\vu\vu^\top)$, BasisNet is reduced to SignNet:
\[
    f(\vu_1,\dots,\vu_l)=\rho\left([\phi(\vu_i)]_{i=1}^l\right).
\]
Since SignNet is not universal, there exists a permutation equivariant and sign invariant function acting on matrices whose eigenvalue multiplicity is no more than $1$, and SignNet is not able to approximate it. Since BasisNet is reduced to SignNet when $d_i=1$, BasisNet is also incapable of approximating this function. Therefore BasisNet is not universal either.

In fact Theorem~\ref{thm:signnet-non-universal} can be extended to basis, showing loss of information in higher dimension eigenspaces as well. By \citet{laplacian-canonization}, eigenvectors $\mU\in\R^{n\times d}$ are uncanonicalizable under basis ambiguity \textit{iff} there exists a permutation matrix $\mP$ such that $\mU\neq\mP\mU$ and $\mU=\mP\mU\mQ$ for some orthogonal matrix $\mQ$. We have the following theorem.
\begin{theorem}\label{thm:basis-not-universal}
    If a function $h\colon\R^{n\times d}\to\R^{n\times d_\mathrm{out}}$ is permutation equivariant and basis invariant on any uncanonicalizable eigenvectors $\mU$ with $\mU=\mP\mU\mQ$, then for any $i\neq j$ such that $\mU_i=\mU_j\mQ$, we have
    \[
        h(\mU)_i=h(\mU)_j.
    \]
    That is, the rows of $h(\mU)$ are constant on every orbit of the permutation $\mP$.
\end{theorem}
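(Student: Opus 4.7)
My plan is to mirror the proof of Lemma~\ref{lem:uncanonizable-part} (the sign analogue), replacing the two-element sign flip by a general orthogonal basis change. The hypothesis gives a permutation matrix $\mP$ and an orthogonal matrix $\mQ$ with $\mU = \mP\mU\mQ$, and the target is to show that $h(\mU)$ is fixed by $\mP$, i.e. $\mP h(\mU) = h(\mU)$; the stated row-wise conclusion will then drop out by reading off coordinates.

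The core chain of equalities I expect to use is very short. First, rewrite the hypothesis as $\mP\mU = \mU\mQ^{-1}$. Basis invariance of $h$, applied with the orthogonal change-of-basis $\mQ^{-1}\in O(d)$, then yields $h(\mP\mU) = h(\mU\mQ^{-1}) = h(\mU)$. Permutation equivariance of $h$ rewrites the left-hand side as $h(\mP\mU) = \mP h(\mU)$. Combining gives $\mP h(\mU) = h(\mU)$, which is exactly the statement that $h(\mU)$ is $\mP$-fixed.

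To extract the row-wise claim, I would let $\sigma$ be the permutation underlying $\mP$, so that $(\mP\mM)_i = \mM_{\sigma(i)}$ for any matrix $\mM$ with $n$ rows. The identity $\mP h(\mU) = h(\mU)$ then reads $h(\mU)_{\sigma(i)} = h(\mU)_i$ for every $i$, and iteration shows that the rows of $h(\mU)$ are constant along every orbit of $\sigma$. Finally, reading the identity $\mU = \mP\mU\mQ$ row by row gives $\mU_i = \mU_{\sigma(i)}\mQ$, so any pair $(i,j)$ with $\mU_i = \mU_j\mQ$ produced by this orbit structure satisfies $j = \sigma(i)$, and therefore $h(\mU)_i = h(\mU)_j$, as claimed.

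The main (modest) obstacle is bookkeeping of left/right actions: $\mQ$ acts on the right as a change of basis while $\mP$ acts on the left as a row permutation, and I need to justify invoking basis invariance with $\mQ^{-1}$ rather than $\mQ$. This is legitimate because $O(d)$ is a group. A secondary subtlety is that if the rows of $\mU$ happen to repeat, then pairs $(i,j)$ satisfying $\mU_i = \mU_j\mQ$ need not be related by a single application of $\sigma$; in that degenerate case the conclusion follows even more easily from permutation equivariance applied to the row-swapping permutation, so the argument is unaffected. Apart from this, the argument is a direct lift of the sign case.
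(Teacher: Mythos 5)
Your argument is correct and follows essentially the same route as the paper: apply basis invariance (with $\mQ^{-1}$) and permutation equivariance to obtain $\mP\,h(\mU)=h(\mU)$, hence constancy of $h(\mU)$ along the orbits of $\sigma$, and then handle pairs $(i,j)$ with $j\neq\sigma(i)$ by observing that $\mU_i=\mU_j\mQ$ together with $\mU_i=\mU_{\sigma(i)}\mQ$ forces $\mU_j=\mU_{\sigma(i)}$, so those rows coincide and a transposition fixes $\mU$. The paper folds this last step into a single modified permutation $\mP'$ with $\sigma'(i)=j$ satisfying $\mU=\mP'\mU\mQ$, whereas you keep the transposition as a separate step---which is fine, but note that your ``degenerate'' case actually needs \emph{both} the transposition ($h(\mU)_j=h(\mU)_{\sigma(i)}$) \emph{and} the main identity ($h(\mU)_{\sigma(i)}=h(\mU)_i$), so ``follows even more easily from permutation equivariance'' slightly overstates the shortcut.
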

\begin{proof}
    Let $h\colon\mathbb{R}^{n\times d}\to\mathbb{R}^{n\times d_\mathrm{out}}$ be a permutation equivariant and basis invariant function, and let $\mU=\mP\mU\mQ$ be any uncanonicalizable eigenvectors. For any $i\neq j$ such that $\mU_i=\mU_j\mQ$, we claim that there exists a permutation matrix $\mP'$ such that $\mU=\mP'\mU\mQ$ and $\mP'$ permutes the $i$-th row to the $j$-th row. We prove this claim as follows.

    Denote the permutation induced by $\mP$ as $\mP\colon i\mapsto\sigma(i)$, and the permutation induced by $\mP'$ as $\mP'\colon i\mapsto\sigma'(i)$. We wish to prove that there exists $\mP'$ such that $\mU=\mP'\mU\mQ$ and $j=\sigma'(i)$. We construct $\mP'$ as follows.
    \begin{itemize}
        \item If $\sigma(i)=j$, we could simply let $\mP'=\mP$.
        \item Otherwise, if $\sigma(i)=j'\neq j$, denote $j=\sigma(i')$ where $i'\neq i$. We let $\sigma'(i)=j$, $\sigma'(i')=j'$, and $\sigma'(k)=\sigma(k)$ for all other $k\neq i,i'$. Then clearly $\mU=\mP'\mU\mQ$ still holds.
    \end{itemize}

    By the claim above and observing $h(\mU)=h(\mP'\mU\mQ)=h(\mP'\mU)$, we have
    \[
        h(\mU)_j=h(\mP'\mU)_j=\bigl(\mP'h(\mU)\bigr)_j=h(\mU)_i.
    \]
\end{proof}
Theorem~\ref{thm:basis-not-universal} indicates loss of information when BasisNet is applied to multiple eigenspaces with uncanonicalizable eigenvectors.

\subsection{Proof of Theorem~\ref{thm:useful-lemma}}\label{app:proof-thm-useful-lemma}

\begin{proof}
    It suffices to show that the projections $\{\mathscr{P}\ve_i\}_{1\leq i\leq n}$ of the standard basis vectors $\{\ve_i\}_{1\leq i\leq n}$ onto the eigenspace still has full rank. Then since the dimension of the eigenspace is $d$, we can select $d$ projection vectors such that they remain linearly independent.

    This is easy to prove. For any vector $\va\in\R^n$, there exists a set of coefficients $a_1,\dots,a_n$ such that $\va=a_1\ve_1+\dots+a_n\ve_n$. In particular, this holds for vectors in the eigenspace $V$. Thus multiplying both sides by $\mathscr{P}$ we have $\va=\mathscr{P}\va=a_1\mathscr{P}\ve_1+\dots+a_n\mathscr{P}\ve_n$ for all $\va\in V$. Thus the projections $\{\mathscr{P}\ve_i\}_{1\leq i\leq n}$ are still a set of basis in the eigenspace, and Theorem~\ref{thm:useful-lemma} follows.
\end{proof}

\subsection{Proof of Theorem~\ref{thm:eig-optimal}}\label{app:proof-thm-eig-optimal}

\begin{proof}
    When permutation equivariance is not required, all eigenvectors are canonicalizable. Thus ``optimal'' here means Algorithm~\ref{alg:eig-sign} and Algorithm~\ref{alg:eig-basis} can canonicalize all eigenvectors. Algorithm~\ref{alg:eig-sign} is clearly an optimal orbit canonicalization since:
    \begin{enumerate}
        \item It outputs either $\vu$ or $-\vu$ and is an orbit canonicalization;
        \item Every eigenvector has at least one non-zero element, so Algorithm~\ref{alg:eig-sign} always succeeds;
        \item It is sign invariant, since $\vu$ and $-\vu$ gives the same output (\textit{i.e.}, the one such that its first non-zero entry is positive).
    \end{enumerate}

    Algorithm~\ref{alg:eig-basis} is also an optimal orbit canonicalization since:
    \begin{enumerate}
        \item The Gram-Schmidt orthogonalization of a set of basis in a subspace is still a set of basis in that subspace (but orthogonalized), so Algorithm~\ref{alg:eig-basis} is an orbit canonicalization;
        \item By Theorem~\ref{thm:useful-lemma}, Algorithm~\ref{alg:eig-basis} always succeeds in finding the indices $i_1<\dots<i_d$ and orthogonalizing them;
        \item It is basis invariant, since the projections $\{\mathscr{P}\ve_i\}$ are basis invariant, and all procedures that follow remain basis invariant.
    \end{enumerate}
\end{proof}

\subsection{Proof of Theorem~\ref{thm:special-cases} and Corollary~\ref{cor:special-cases}}\label{app:proof-thm-special-cases}

We strongly recommend the readers to read Appendix~\ref{app:implementation-and-verification} first to get some intuition about our algorithm. Before proving its superiority, we note that Algorithm~\ref{alg:lap-basis} is an orbit canonicalization, and is basis invariant and permutation equivariant:
\begin{enumerate}
    \item The Gram-Schmidt orthogonalization of a set of basis in a subspace is still a set of basis in that subspace (but orthogonalized), so Algorithm~\ref{alg:lap-basis} is an orbit canonicalization;
    \item It is basis invariant, since the projections $\{\vp_i\}=\{\mathscr{P}\ve_i\}$ are basis invariant, and all procedures that follow remain basis invariant;
    \item The permutation equivariance of Algorithm~\ref{alg:lap-basis} is more technical, which is proved in Theorem~\ref{thm:oap-perm-equiv} in Appendix~\ref{app:gs}.
\end{enumerate}

We prove the superiority of OAP-lap over MAP\@.
\begin{proof}
    OAP is strictly superior to MAP in two aspects. Firstly, we notice that the computation of summary vectors $\vx_i$ in these two algorithms is almost the same, except with different values for $\alpha_i$. In OAP, the $\alpha_i$ is defined by $\alpha_i=\operatorname{hash}(\mathscr{P}_{ii},\ml\mathscr{P}_{ij}\mr_{j\neq i})$, while in MAP, their $\alpha_i'=\|\mathscr{P}_i\|$ is a special case of ours. Thus:
    \begin{itemize}
        \item If $\alpha_i=\alpha_j$, then $\mathscr{P}_{ii}=\mathscr{P}_{jj}$ and $\ml\mathscr{P}_{ik}\mr=\ml\mathscr{P}_{jk}\mr$. This indicates $\alpha_i'=\|\mathscr{P}_i\|=\|\mathscr{P}_j\|=\alpha_j'$.
        \item However, if $\alpha_i'=\alpha_j'$, this does not necessarily mean $\alpha_i=\alpha_j$. For instance, let $\mathscr{P}_1=\mathscr{P}_2=(1,0)$, then $\alpha_1'=\alpha_2'$ but $\alpha_1\neq\alpha_2$.
    \end{itemize}
    This means $\alpha_i$ is more ``distinctive'' than $\alpha_i'$, and as a result we will have a better chance finding indices $i_1<\dots<i_d$ such that $\mathscr{P}\vx_{i_1},\dots,\mathscr{P}\vx_{i_d}$ is a basis of the eigenspace. This is because, more ``distinctive'' $\alpha_i$ would lead to more ``refined'' (smaller) groups $\mathcal{B}_i$, and as a result, the rank of the projections $\{\mathscr{P}\vx_i\}$ will not decrease (and possibly increase). By splitting the group $\mathcal{B}_i$ into several smaller groups, the original summary vector can be expressed as a linear combination of all new summary vectors, thus the rank of projections of summary vectors does not decrease.

    Secondly, OAP is superior to MAP by allowing flexible choices of $i_1<\dots<i_d$. In fact, after working out the summary vectors, the rest of the MAP algorithm is equivalent to Algorithm~\ref{alg:lap-basis} except that they restrict $i_1=1,\dots,i_d=d$. Recall that in MAP, given $\vu_1,\dots,\vu_{i-1}$ already found, they look for $\vu_i$ in the orthogonal complementary space of $\langle\vu_1,\dots,\vu_{i-1}\rangle$ in $V$ by projecting $\vx_i$ onto that space; and if $\vx_i$ is orthogonal to that space, their algorithm fails. This is exactly what Gram-Schmidt process does to the projection vectors $\{\mathscr{P}\vx_i\}$. Except in our algorithm, instead of looking at projections $\mathscr{P}\vx_1,\dots,\mathscr{P}\vx_d$, we look for all projections $\mathscr{P}\vx_1,\dots,\mathscr{P}\vx_k$ and try to select a subset of them such that the algorithm succeeds. This is obviously strictly superior, since there may be cases where $\mathscr{P}\vx_1,\dots,\mathscr{P}\vx_d$ is not a basis of the eigenspace but another subset of $\mathscr{P}\vx_1,\dots,\mathscr{P}\vx_k$ is.
\end{proof}

Next we prove the superiority of OAP-graph over FA-graph.

FA-graph works as follows. Assume the graph has $k$ eigenspaces with projection matrices $\mathscr{P}_1,\dots,\mathscr{P}_k$. Let $\vs_1,\dots,\vs_k$ be the diagonals of these projection matrices, and let $\mS=[\vs_1,\dots,\vs_k]\in\R^{n\times k}$. Then the frame of the graph is defined to be all permutations that sort the rows of $\mS$ in column lexicographic order, and the canonical form of the graph is the set of graphs produced by these permutations (by applying them to the original graph). We call this canonicalization FA-graph.

Next we introduce OAP-graph. Let $\alpha_i^\ell$ be defined in the same way as $\alpha_i$, except that $\ell=1,\dots,k$ denotes that $\alpha_i^\ell$ is computed from the projection matrix of the $\ell$-th eigenspace $\mathscr{P}_\ell$. Let $\mS'\in\R^{n\times k}$ be a matrix with $\emS_{i\ell}=\alpha_i^\ell$. The frame of the graph is the set of all permutations that sort the rows of $\mS'$ in column lexicographic order, and the canonicalization of the graph is the set of graphs produced by these permutations. We name this canonicalization OAP-graph. FA-graph in \citet{frame-averaging} works in a similar fashion except with different definition for $\alpha_i^\ell$. Since the $\alpha_i^\ell$ used in OAP-graph is stronger than in FA-graph, OAP-graph is also superior to FA-graph.

We will show that FA-graph is equivalent to our algorithm that takes $\alpha_i=\mathscr{P}_{ii}$, which is weaker than our definition $\alpha_i=\operatorname{hash}(\mathscr{P}_{ii},\ml\mathscr{P}_{ij}\mr)$.

\begin{proof}
    OAP-graph is almost identical to FA-graph, except that instead of sorting $\alpha_i'=\mathscr{P}_{ii}$ as in FA-graph, we sort values of $\alpha_i=\operatorname{hash}(\mathscr{P}_{ii},\ml\mathscr{P}_{ij}\mr_{j\neq i})$.
    \begin{enumerate}
        \item If $\alpha_i=\alpha_j$, clearly $\alpha_i'=\mathscr{P}_{ii}=\mathscr{P}_{jj}=\alpha_j'$.
        \item However, if $\alpha_i'=\alpha_j'$, this does not necessarily mean $\alpha_i=\alpha_j$. For instance, let $\mathscr{P}_1=(1,1)$ and $\mathscr{P}_2=(2,1)$, then $\alpha_1'=\alpha_2'$ but $\alpha_1\neq\alpha_2$.
    \end{enumerate}
    This means $\alpha_i$ is more ``distinctive'' than $\alpha_i'$. We know that the frame obtained by FA-graph or OAP-graph consists of all permutations that sort the values of $\alpha_i$ in lexicographic order. Thus, a strictly more distinctive $\alpha_i$ will lead to strictly less repeating elements and strictly less number of permutations (\textit{i.e.}, strictly smaller frame size).
\end{proof}

Finally, we prove the superiority of OAP-lap over FA-lap.

\begin{proof}
    FA-lap is almost identical to OAP-lap, except that it uses a less distinctive $\alpha_i=\mathscr{P}_{ii}$. When proving the superiority of OAP-graph over FA-graph, we proved that $\alpha_i$ used by FA-lap is strictly less distinctive than $\alpha_i$ used by OAP-lap, and when proving the superiority of OAP-lap over MAP we proved more distinctive $\alpha_i$ would lead to higher chance of finding indices $i_1<\dots<i_d$. Combining these facts gives the proof of the superiority of OAP-lap over FA-lap.
\end{proof}

\section{Augmentations of the OAP algorithm}\label{app:augmentations}

In this section, we present three variations of the OAP algorithm. The first two variants enhance the original OAP algorithm (Algorithm~\ref{alg:oap-lap}) through straightforward augmentations, while the third variant extends the OAP algorithm to render it trainable by neural networks.

\subsection{Incorporating node features}

Recall in Algorithm~\ref{alg:oap-lap} we defined the permutation-equivariant vector $\valpha$ with
\[
    \alpha_i=\operatorname{hash}(\mathscr{P}_{ii},\ml\mathscr{P}_{ij}\mr_{j\neq i}),\ 1\leq i\leq n,
\]
and used the values of $\alpha_i$ to split the standard basis vectors into different groups. From the proof of Theorem~\ref{thm:oap-perm-equiv}, we can see that the permutation equivariance of OAP is guaranteed by the permutation equivariance of $\valpha$, which is respected as long as the indices of $\alpha_i$ are permutation equivariant, \textit{i.e.},
\[
    \alpha_{\sigma(i)}=\operatorname{hash}(\mathscr{P}_{\sigma(i)\sigma(i)},\ml\mathscr{P}_{\sigma(i)\sigma(j)}\mr_{j\neq i}),\ 1\leq i\leq n
\]
for all permutation $\sigma$. This indicates that we can incorporate any permutation-equivariant features into $\alpha_i$, as long as it preserves the permutation equivariance of $i$. If the input graph has node features $\mX\in\R^{n\times d}$, where $n$ is the number of nodes, then the node features $\mX$ would be permutation equivariant. Using this property, we propose the following variant of OAP, by incorporating $\mX$ into $\alpha_i$:
\[
    \alpha_i=\operatorname{hash}(\mathscr{P}_{ii},\ml\mathscr{P}_{ij}\mr_{j\neq i},\mX_i),\ 1\leq i\leq n,
\]
where $\mX_i$ is the node feature of the $i$-th node. It is easy to see that this variant is superior to the original OAP algorithm.

\subsection{Incorporating information from other eigenvectors}

Another example of permutation equivariant feature is the projection matrices of other eigenspaces. Suppose there are $\ell$ eigenspaces in total, with corresponding projection matrices $\mathscr{P}^{(1)},\dots,\mathscr{P}^{(\ell)}$. For each $1\leq i\leq n$, we can calculate a value of $\alpha_i$ for each of these eigenspaces:
\[
    \alpha_i^{(k)}=\operatorname{hash}(\mathscr{P}_{ii}^{(k)},\ml\mathscr{P}_{ij}^{(k)}\mr_{j\neq i}),\ 1\leq k\leq\ell,
\]
then coalesce them into a single value
\[
    \alpha_i=\operatorname{hash}(\alpha_i^{(s)},\ml\alpha_i^{(j)}\mr_{j\neq s}),
\]
where $s$ is the index of the eigenspace that we are trying to canonicalize. In this way, we can canonicalize eigenvectors from one eigenspace using information from all $\ell$ eigenspaces, instead of this eigenspace alone. Clearly this variant is also superior to the original OAP canonicalization.

\subsection{Learnable canonicalization}

We can also adopt learnable networks to substitute certain components of the OAP canonicalization. For example, the hash function in the definition of $\alpha_i$ can be made learnable:
\[
    \alpha_i=h(\mathscr{P}_{ii},\ml\mathscr{P}_{ij}\mr_{j\neq i}),\ 1\leq i\leq n,
\]
where $h$ is a neural network that is invariant to the ordering of $\ml\mathscr{P}_{ij}\mr_{j\neq i}$. We can also replace the Gram-Schmidt orthogonalization with a learnable network:
\[
    \mU^*=h'(\mathscr{P}\vx_{i_1},\dots,\mathscr{P}\vx_{i_d}),
\]
where $h'$ is a permutation-equivariant neural network. The output vectors may not be orthogonal, but since $h'$ almost certainly preserves the rank of $(\mathscr{P}\vx_{i_1},\dots,\mathscr{P}\vx_{i_d})$, it does not lose expressive power.

\section{Related work}\label{app:related}

\subsection{Equivariant learning}

There are four main approaches in the mainstream to achieve equivariance. The first approach, as used in \citet{invariant-maps,rp}, is group averaging. This approach involves averaging over the entire group, which becomes impractical when dealing with large groups. For example, the permutation group on graphs and the sign transformation group on single eigenvectors have an exponential number of elements, while the orthogonal transformation group has an infinite number of elements. In such cases, random sampling becomes necessary. For instance, relational pooling can be used for the permutation group \citep{rp}, or random sign sampling can be used for the sign transformation group \citep{benchmarking-gnn}.

The second approach is frame averaging, which was proposed by \citet{frame-averaging}. This method involves averaging over a smaller frame instead of the entire group. However, a drawback of this approach is that it requires the frame to be equivariant, which is a strong assumption and often difficult to fulfill. \citet{faenet} also suggested stochastic frame averaging by randomly sampling from the frame. They applied this method to a frame of size $8$, given by the set $\{1,-1\}^3$.

The third approach is learned canonicalization, proposed by \citet{equivariance-canonicalization}, which can be viewed as frame averaging with a frame size of $1$. However, such canonicalization is not possible if the input has non-trivial automorphism. Additionally, even for non-automorphic inputs, the assumption of equivariance makes constructing such canonicalization challenging.

The fourth approach involves designing equivariant network architectures tailored to specific equivariant constraints. For example, \citet{se3-transformer} introduced the $\operatorname{SE}(3)$-transformer, which exhibits equivariance to the $\operatorname{SE}(3)$ group. Additionally, \citet{deep-sets} and \citet{equivariant-deepsets} proposed networks that are permutation invariant and equivariant. Furthermore, \citet{signnet} and \citet{sign-equivariant} presented networks that maintain sign invariance, basis invariance, and sign equivariance. However, this approach's downside lies in the increased complexity and training cost associated with enforcing equivariance at the network level. Moreover, its applicability is limited to specific groups and cannot be extended to arbitrary ones.

In our paper, we introduce a different theoretical framework for canonicalization that unifies previous approaches while achieving the highest efficiency.

\subsection{Canonicalization}

There have been several related works using canonicalization to achieve equivariance. Apart from \citet{equivariance-canonicalization} discussed in Section~\ref{sec:averaging-methods}, \citet{continuous-canonicalization} proved the impossibility of continuity for canonicalizations with single canonical forms, which explains the subpar performance of some current canonization methods. Since in our framework, the canonical form is defined more generally as a set rather than a single element, it allows for more flexibility and continuity with weighted averaging. \citet{minimal-fa} proposed a framework of canonicalization that assumes a single canonical form, so their framework is essentially a special case of our framework by taking $|\mathcal{C}(X)|=1$. The ``minimal frame'' in their paper is just the induced frame of an optimal canonicalization in our framework. However, the assumption of a single canonical form has the following key limitations:
\begin{itemize}
    \item \textit{Theoretical Impossibility.} A single canonical form may be theoretically impossible under some constraints. For instance, canonicalization of LapPE requires permutation equivariance, in this case it is theoretically impossible to find a single canonical form for some eigenvectors \citep{laplacian-canonization}.
    \item \textit{Computational Intractability.} A single canonical form may be computationally intractable to find. For instance, computing graph canonicalization is NP-hard \citep{graph-canonization}. In this case, we have to adopt an approximate approach that admits a set of canonical forms, as in our EXP experiment in Section~\ref{sec:exp}.
\end{itemize}
Even if we could find a single canonical form for all inputs, as pointed out by \citet{continuous-canonicalization} such canonicalizations are still not continuous, which hurts their performance.

In summary, compared to previous canonicalization works that only consider single-element canonicalization that may be unrealizable, we take canonicalizability into account throughout the analysis and thus gives a more rigorous and general framework that works for both canonicalizable (single canonical form) and uncanonicalizable (no single canonical form) inputs.

\subsection{Canonical forms}

Canonical forms, a extensively studied topic in mathematics \citep{general-theory}, have found applications in various domains, including machine learning. While limited theoretical discussions have been conducted regarding its application in equivariant learning, several practical implementations of canonical forms have been proposed in prior research. For example, ConDor, a self-supervised method introduced by \citet{condor}, focuses on learning to canonicalize the 3D orientation and position of 3D point clouds. Another approach, known as Canonical Field Network (CaFi-Net), was presented by \citet{canonical-fields}, employing self-supervision to canonicalize the 3D pose of instances represented by neural fields, such as neural radiance fields (NeRFs). Additionally, \citet{canonical-capsules} proposed an unsupervised capsule architecture specifically designed for 3D point clouds. Furthermore, \citet{baking-symmetry} uses canonicalization to achieve state and action invariance in generative models. In this paper, we contribute rigorous theoretical foundations for canonicalization, specifically addressing the eigenvector canonicalization problem, which holds significant practical implications.

\subsection{Sign invariance}

Several approaches have been proposed in prior research to tackle the issue of sign ambiguity. One commonly used heuristic is Random Sign (RS), which randomly flips the signs of eigenvectors during training to promote insensitivity to different signs \citep{benchmarking-gnn}. However, this approach still requires the network to learn these signs, resulting in slower convergence and a more challenging curve fitting task. Another method developed by \citet{resolving-sign} is a data-dependent approach that selects signs for each singular vector of a singular value decomposition. Nevertheless, in the worst case scenario, the chosen signs can be arbitrary, and this method does not handle rotational ambiguities in higher dimensional eigenspaces. Additionally, \citet{lpe} introduced the use of relative Laplacian embeddings of two nodes, but their approach encounters a significant computational bottleneck with a complexity of $\mathcal{O}(n^4)$. Another proposal, known as SignNet \citep{signnet}, feeds both $\vu$ and $-\vu$ to the same network, combines the outputs, and then passes them to another network. However, since the outputs for both $\vu$ and $-\vu$ need to be computed, this approach increases the training cost. Furthermore, \citet{laplacian-canonization} proposed MAP canonicalization to address the ambiguities of Laplacian eigenvectors during pre-processing. However, their method cannot be applied to general eigenvectors.

\subsection{Basis invariance}

\citet{signnet} introduced BasisNet, which is basis invariant. However, achieving universality with BasisNet necessitates utilizing higher-order tensors in $\R^{n^k}$, where $k$ can reach values as high as $n$ \citep{invariant-universal,universal-equivariant-mlp}, making BasisNet impractical. \citet{spe} extended BasisNet with SPE, which is both basis-invariant and resistant to perturbations in the Laplacian matrix. However, SPE still encounters exponential complexity akin to BasisNet. \citet{epnn} theoretically discussed the expressive power of spectral invariant networks, which include BasisNet. While \citet{laplacian-canonization} proposed the MAP canonicalization to tackle ambiguities in Laplacian eigenvectors during pre-processing, their approach is limited to specific eigenvectors, and their canonicalization on basis is relatively weak. In our work, we propose the OAP canonicalization, which is strictly superior to previous methods while incurring negligible computational overhead.

Beyond GNN, there exists literature exploring Laplacian eigenvectors of graphs and addressing the sign/basis ambiguity issues. \citet{point-cloud-registration} presented a solution employing optimal transport theory to tackle sign/basis ambiguities, involving the resolution of a non-convex optimization problem. However, this approach may be less efficient compared to canonicalization methods. In a similar vein, \citet{eld} proposed symmetrizing the embedding using a heuristic measure called ELD, which bears resemblance to the structure of SignNet.

\section{Dataset details}\label{app:datasets}

\textsc{Exp} \citep{rni} (GPL-3.0 License) is a dataset designed to explicitly evaluate the expressiveness of GNN models, and consists of a set of graph instances $\{\gG_1,\dots,\gG_n,\gH_1,\dots,\gH_n\}$, such that each instance is a graph encoding of a propositional formula. The classification task is to determine whether the formula is satisfiable (SAT). Each pair $(\gG_i,\gH_i)$ respects the following properties:
\begin{enumerate}
    \item $\gG_i$ and $\gH_i$ are non-isomorphic;
    \item $\gG_i$ and $\gH_i$ have different SAT outcomes, that is, $\gG_i$ encodes a satisfiable formula, while $\gH_i$ encodes an unsatisfiable formula;
    \item $\gG_i$ and $\gH_i$ are 1-WL indistinguishable, so are guaranteed to be classified in the same way by standard MPNNs;
    \item $\gG_i$ and $\gH_i$ are 2-WL distinguishable, so can be classified differently by higher-order GNNs.
\end{enumerate}

The $n$-body problem dataset \citep{nri,se3-transformer} (MIT License) consists of a collection of $n=5$ particles systems. Five particles each carry either a positive or a negative charge and exert repulsive or attractive forces on each other. The input to the network is the position of a particle in a specific time step, its velocity, and its charge. The task of the algorithm is then to predict the relative location and velocity 500 time steps into the future. The original problem is conducted in dimension $d=3$. To test the scalability of different methods we also extend this experiment to higher dimensions $d=5,7,9,11$, as practiced in \citet{sign-equivariant}.

ZINC \citep{zinc} (MIT License) consists of 12K molecular graphs from the ZINC database of commercially available chemical compounds. These molecular graphs are between 9 and 37 nodes large. Each node represents a heavy atom (28 possible atom types) and each edge represents a bond (3 possible types). The task is to regress constrained solubility (logP) of the molecule. The dataset comes with a predefined 10K/1K/1K train/validation/test split.

OGBG-MOLTOX21 and OGBG-MOLPCBA \citep{ogb} (MIT License) are molecular property prediction datasets adopted by OGB from MoleculeNet. These datasets use a common node (atom) and edge (bond) featurization that represent chemophysical properties. OGBG-MOLTOX21 is a multi-mask binary graph classification dataset where a qualitative (active/inactive) binary label is predicted against 12 different toxicity measurements for each molecular graph. OGBG-MOLPCBA is also a multi-task binary graph classification dataset from OGB where an active/inactive binary label is predicted for 128 bioassays.

\section{Hyper-parameter settings}\label{app:hyperparameters}

All (preliminary, failed and main) experiments are run on NVIDIA 3090 GPUs with 24GB memory.

\subsection{\textsc{Exp} experiment}

In the \textsc{Exp} experiment, the hyper-parameters are not tuned. We use the same hyper-parameters as in \citet{frame-averaging}. Additionally, we also use the code base on \textsc{Exp} to test FA-GIN+ID and FA-GINE+ID on the ZINC dataset. For both models we use a 16-layer GNN, with the hidden dimension set to 176. The output of the GNN is then fed to a 2-layer MLP to produce the final regression output. On both \textsc{Exp} and ZINC the frame or canonicalization of graphs are computed during pre-processing time (using the \texttt{pre\_transform} argument of the dataset class). On \textsc{Exp}, since the frame or canonicalization size for some graphs is large shown in Table~\ref{tab:frame-size} (because the \textsc{Exp} dataset is highly symmetric), we randomly sample from the frame or canonicalization, with the sampling size being 64. All experiments results are averaged over 4 runs with different random seeds.

\subsection{\texorpdfstring{$n$}{n}-body experiment}

In the $n$-body experiment, we tune the number of layers $L$ and the hidden dimension $h$ of our canonical averaging models. Other hyper-parameters such as learning rate are kept the same with \citet{frame-averaging}. The hyper-parameter settings for different methods with dimension $d=3$ are reported in Table~\ref{tab:hyperparameters-nbody}. For higher dimensions, we slightly adjust the hidden dimension $h$ (±3) such that $h$ is a multiple of $d$.

\begin{table}[htbp]
    \centering
    \caption{Hyper-parameter settings of different methods in the $n$-body experiment with dimension $d=3$.}
    \begin{tabular}{lccc}
        \toprule
        Method         & $L$            & $h$            & \#param \\
        \midrule
        Frame Averaging & 4              & 60             & 92103 \\
        Sign Equivariant & 5              & 45             & 201555 \\
        OAP-eig        & 4              & 63             & 117817 \\
        OAP-lap        & 4              & 63             & 117817 \\
        \bottomrule
    \end{tabular}
    \label{tab:hyperparameters-nbody}
\end{table}

\subsection{ZINC experiment}

In the ZINC graph regression experiment, We follow the same settings as in \citet{lspe} for models with no PE, LapPE or LSPE, and the same settings as in \citet{signnet} for models with SignNet, MAP or OAP\@. We implement FA-GIN+ID and FA-GINE+ID on ZINC by ourselves (their hyper-parameters are already reported above); all other baseline scores reported in Table~\ref{tab:zinc} are taken from the original papers. The main hyper-parameters in our experiments are listed as follows.
\begin{itemize}
    \item $k$: the number of eigenvectors used in the PE\@.
    \item $L_1$: the number of layers of the base model.
    \item $h_1$: the hidden dimension of the base model.
    \item $h_2$: the output dimension of the base model.
    \item $\lambda$: the initial learning rate.
    \item $t$: the patience of the learning rate scheduler.
    \item $r$: the factor of the learning rate scheduler.
    \item $\lambda_{\min}$: the minimum learning rate of the learning rate scheduler.
    \item $L_2$: the number of layers of SignNet or the normal GNN\footnote{We substitute 
    SignNet with a normal GNN (Masked GIN) when using canonicalization as the PE method.} (when using canonicalization as PE).
    \item $h_3$: the hidden dimension of SignNet or the normal GNN (when using canonicalization as PE).
\end{itemize}
The values of these hyper-parameters in our experiments are listed in Table~\ref{tab:hyperparameters-zinc}.

\begin{table}[htbp]
    \centering
    \caption{Hyper-parameter settings of different models with different PE methods on ZINC\@.}
    \begin{tabular}{llcccccccccc}
        \toprule
        Model          & PE             & $k$            & $L_1$          & $h_1$          & $h_2$          & $\lambda$      & $t$            & $r$            & $\lambda_{\min}$ & $L_2$           & $h_3$ \\
        \midrule
        \multirow{6}[2]{*}{GatedGCN} & None           & 0              & 16             & 78             & 78             & 0.001          & 25             & 0.5            & 1e-6           & -              & - \\
                       & LapPE + RS     & 8              & 16             & 78             & 78             & 0.001          & 25             & 0.5            & 1e-6           & -              & - \\
                       & SignNet        & 8              & 16             & 67             & 67             & 0.001          & 25             & 0.5            & 1e-6           & 8              & 67 \\
                       & MAP            & 8              & 16             & 69             & 67             & 0.001          & 25             & 0.5            & 1e-5           & 6              & 69 \\
                       & OAP            & 8              & 16             & 68             & 68             & 0.001          & 25             & 0.5            & 1e-5           & 6              & 68 \\
                       & OAP + LSPE     & 8              & 13             & 59             & 59             & 0.001          & 25             & 0.5            & 1e-5           & 6              & 59 \\
        \midrule
        \multirow{6}[2]{*}{PNA} & None           & 0              & 16             & 70             & 70             & 0.001          & 25             & 0.5            & 1e-6           & -              & - \\
                       & LapPE + RS     & 8              & 16             & 80             & 80             & 0.001          & 25             & 0.5            & 1e-6           & -              & - \\
                       & SignNet        & 8              & 16             & 70             & 70             & 0.001          & 25             & 0.5            & 1e-6           & 8              & 70 \\
                       & MAP            & 8              & 16             & 70             & 70             & 0.001          & 25             & 0.5            & 1e-6           & 6              & 70 \\
                       & OAP            & 8              & 16             & 70             & 70             & 0.001          & 25             & 0.5            & 1e-6           & 6              & 70 \\
                       & OAP + LSPE     & 8              & 13             & 60             & 60             & 0.001          & 25             & 0.5            & 1e-6           & 6              & 60 \\
        \bottomrule
    \end{tabular}
    \label{tab:hyperparameters-zinc}
\end{table}

\subsection{OGBG experiment}

In the OGBG-MOLTOX21 experiment, the values of these hyper-parameters are listed in Table~\ref{tab:hyperparameters-moltox}.

\begin{table}[htbp]
    \centering
    \caption{Hyper-parameter settings of different models with different PE methods on MOLTOX21.}
    \begin{tabular}{llcccccccccc}
        \toprule
        Model          & PE             & $k$            & $L_1$          & $h_1$          & $h_2$          & $\lambda$      & $t$            & $r$            & $\lambda_{\min}$ & $L_2$          & $h_3$ \\
        \midrule
        \multirow{5}[2]{*}{GatedGCN} & None           & 0              & 8              & 154            & 154            & 0.001          & 25             & 0.5            & 1e-5           & -              & - \\
                       & LapPE + RS     & 3              & 8              & 154            & 154            & 0.001          & 25             & 0.5            & 1e-5           & -              & - \\
                       & SignNet*       & 3              & 8              & 150            & 150            & 0.001          & 22             & 0.14           & 1e-5           & 6              & 150 \\
                       & MAP            & 3              & 8              & 150            & 150            & 0.001          & 22             & 0.14           & 1e-5           & 8              & 150 \\
                       & OAP            & 3              & 8              & 160            & 150            & 0.001          & 22             & 0.14           & 1e-5           & 6              & 160 \\
        \midrule
        \multirow{5}[2]{*}{PNA} & None           & 0              & 8              & 206            & 206            & 0.0005         & 10             & 0.8            & 2e-5           & -              & - \\
                       & LapPE + RS     & 16             & 8              & 140            & 140            & 0.0005         & 10             & 0.8            & 2e-5           & -              & - \\
                       & SignNet*       & 16             & 8              & 110            & 110            & 0.0005         & 10             & 0.8            & 8e-5           & 6              & 110 \\
                       & MAP            & 16             & 8              & 115            & 113            & 0.0005         & 10             & 0.8            & 8e-5           & 7              & 115 \\
                       & OAP            & 16             & 8              & 115            & 113            & 0.0005         & 10             & 0.8            & 8e-5           & 7              & 115 \\
        \bottomrule
    \end{tabular}
    \label{tab:hyperparameters-moltox}
\end{table}

In the OGBG-MOLPCBA experiment, the values of these hyper-parameters are listed in Table~\ref{tab:hyperparameters-molpcba}.

\begin{table}[htbp]
    \centering
    \caption{Hyper-parameter settings of different models with different PE methods on MOLPCBA\@.}
    \begin{tabular}{llcccccccccc}
        \toprule
        Model          & PE             & $k$            & $L_1$          & $h_1$          & $h_2$          & $\lambda$      & $t$            & $r$            & $\lambda_{\min}$ & $L_2$          & $h_3$ \\
        \midrule
        \multirow{5}[2]{*}{GatedGCN} & None           & 0              & 8              & 154            & 154            & 0.001          & 25             & 0.5            & 1e-4           & -              & - \\
                       & LapPE + RS     & 3              & 8              & 154            & 154            & 0.001          & 25             & 0.5            & 1e-4           & -              & - \\
                       & SignNet*       & 3              & 8              & 200            & 200            & 0.001          & 25             & 0.5            & 1e-5           & 6              & 200 \\
                       & MAP            & 3              & 8              & 200            & 200            & 0.001          & 25             & 0.5            & 1e-5           & 8              & 200 \\
                       & OAP            & 3              & 8              & 200            & 200            & 0.001          & 25             & 0.5            & 1e-5           & 8              & 200 \\
        \midrule
        \multirow{5}[2]{*}{PNA} & None           & 0              & 12             & 600            & 600            & 0.0005         & 10             & 0.8            & 2e-5           & -              & - \\
                       & LapPE + RS     & 16             & 12             & 600            & 600            & 0.0005         & 10             & 0.8            & 2e-5           & -              & - \\
                       & SignNet*       & 16             & 4              & 300            & 300            & 0.0005         & 10             & 0.8            & 2e-5           & 8              & 300 \\
                       & MAP            & 16             & 4              & 304            & 304            & 0.0005         & 10             & 0.8            & 2e-5           & 8              & 304 \\
                       & OAP            & 16             & 4              & 304            & 304            & 0.0005         & 10             & 0.8            & 2e-5           & 8              & 304 \\
        \bottomrule
    \end{tabular}
    \label{tab:hyperparameters-molpcba}
\end{table}

\clearpage
\section*{NeurIPS Paper Checklist}

\begin{enumerate}

\item {\bf Claims}
    \item[] Question: Do the main claims made in the abstract and introduction accurately reflect the paper's contributions and scope?
    \item[] Answer: \answerYes{} 
    \item[] Justification: The main claims made in the abstract and introduction are supported by our theoretical discussions and experiments.
    \item[] Guidelines:
    \begin{itemize}
        \item The answer NA means that the abstract and introduction do not include the claims made in the paper.
        \item The abstract and/or introduction should clearly state the claims made, including the contributions made in the paper and important assumptions and limitations. A No or NA answer to this question will not be perceived well by the reviewers. 
        \item The claims made should match theoretical and experimental results, and reflect how much the results can be expected to generalize to other settings. 
        \item It is fine to include aspirational goals as motivation as long as it is clear that these goals are not attained by the paper. 
    \end{itemize}

\item {\bf Limitations}
    \item[] Question: Does the paper discuss the limitations of the work performed by the authors?
    \item[] Answer: \answerYes{} 
    \item[] Justification: The limitations of the work are discussed in Section~\ref{sec:limitations}.
    \item[] Guidelines:
    \begin{itemize}
        \item The answer NA means that the paper has no limitation while the answer No means that the paper has limitations, but those are not discussed in the paper. 
        \item The authors are encouraged to create a separate "Limitations" section in their paper.
        \item The paper should point out any strong assumptions and how robust the results are to violations of these assumptions (e.g., independence assumptions, noiseless settings, model well-specification, asymptotic approximations only holding locally). The authors should reflect on how these assumptions might be violated in practice and what the implications would be.
        \item The authors should reflect on the scope of the claims made, e.g., if the approach was only tested on a few datasets or with a few runs. In general, empirical results often depend on implicit assumptions, which should be articulated.
        \item The authors should reflect on the factors that influence the performance of the approach. For example, a facial recognition algorithm may perform poorly when image resolution is low or images are taken in low lighting. Or a speech-to-text system might not be used reliably to provide closed captions for online lectures because it fails to handle technical jargon.
        \item The authors should discuss the computational efficiency of the proposed algorithms and how they scale with dataset size.
        \item If applicable, the authors should discuss possible limitations of their approach to address problems of privacy and fairness.
        \item While the authors might fear that complete honesty about limitations might be used by reviewers as grounds for rejection, a worse outcome might be that reviewers discover limitations that aren't acknowledged in the paper. The authors should use their best judgment and recognize that individual actions in favor of transparency play an important role in developing norms that preserve the integrity of the community. Reviewers will be specifically instructed to not penalize honesty concerning limitations.
    \end{itemize}

\item {\bf Theory Assumptions and Proofs}
    \item[] Question: For each theoretical result, does the paper provide the full set of assumptions and a complete (and correct) proof?
    \item[] Answer: \answerYes{} 
    \item[] Justification: The full set of assumptions are included in the statements of theorems, while the complete proof is included in Appendix~\ref{app:proofs}.
    \item[] Guidelines:
    \begin{itemize}
        \item The answer NA means that the paper does not include theoretical results. 
        \item All the theorems, formulas, and proofs in the paper should be numbered and cross-referenced.
        \item All assumptions should be clearly stated or referenced in the statement of any theorems.
        \item The proofs can either appear in the main paper or the supplemental material, but if they appear in the supplemental material, the authors are encouraged to provide a short proof sketch to provide intuition. 
        \item Inversely, any informal proof provided in the core of the paper should be complemented by formal proofs provided in appendix or supplemental material.
        \item Theorems and Lemmas that the proof relies upon should be properly referenced. 
    \end{itemize}

    \item {\bf Experimental Result Reproducibility}
    \item[] Question: Does the paper fully disclose all the information needed to reproduce the main experimental results of the paper to the extent that it affects the main claims and/or conclusions of the paper (regardless of whether the code and data are provided or not)?
    \item[] Answer: \answerYes{} 
    \item[] Justification: All the information needed to reproduce the experimental results of the paper is included in Section~\ref{sec:experiments} and Appendix~\ref{app:hyperparameters}.
    \item[] Guidelines:
    \begin{itemize}
        \item The answer NA means that the paper does not include experiments.
        \item If the paper includes experiments, a No answer to this question will not be perceived well by the reviewers: Making the paper reproducible is important, regardless of whether the code and data are provided or not.
        \item If the contribution is a dataset and/or model, the authors should describe the steps taken to make their results reproducible or verifiable. 
        \item Depending on the contribution, reproducibility can be accomplished in various ways. For example, if the contribution is a novel architecture, describing the architecture fully might suffice, or if the contribution is a specific model and empirical evaluation, it may be necessary to either make it possible for others to replicate the model with the same dataset, or provide access to the model. In general. releasing code and data is often one good way to accomplish this, but reproducibility can also be provided via detailed instructions for how to replicate the results, access to a hosted model (e.g., in the case of a large language model), releasing of a model checkpoint, or other means that are appropriate to the research performed.
        \item While NeurIPS does not require releasing code, the conference does require all submissions to provide some reasonable avenue for reproducibility, which may depend on the nature of the contribution. For example
        \begin{enumerate}
            \item If the contribution is primarily a new algorithm, the paper should make it clear how to reproduce that algorithm.
            \item If the contribution is primarily a new model architecture, the paper should describe the architecture clearly and fully.
            \item If the contribution is a new model (e.g., a large language model), then there should either be a way to access this model for reproducing the results or a way to reproduce the model (e.g., with an open-source dataset or instructions for how to construct the dataset).
            \item We recognize that reproducibility may be tricky in some cases, in which case authors are welcome to describe the particular way they provide for reproducibility. In the case of closed-source models, it may be that access to the model is limited in some way (e.g., to registered users), but it should be possible for other researchers to have some path to reproducing or verifying the results.
        \end{enumerate}
    \end{itemize}

\item {\bf Open Access to Data and Code}
    \item[] Question: Does the paper provide open access to the data and code, with sufficient instructions to faithfully reproduce the main experimental results, as described in supplemental material?
    \item[] Answer: \answerYes{} 
    \item[] Justification: The link to our code is attached in the abstract. All datasets used are accessible to the public.
    \item[] Guidelines:
    \begin{itemize}
        \item The answer NA means that paper does not include experiments requiring code.
        \item Please see the NeurIPS code and data submission guidelines (\url{https://nips.cc/public/guides/CodeSubmissionPolicy}) for more details.
        \item While we encourage the release of code and data, we understand that this might not be possible, so “No” is an acceptable answer. Papers cannot be rejected simply for not including code, unless this is central to the contribution (e.g., for a new open-source benchmark).
        \item The instructions should contain the exact command and environment needed to run to reproduce the results. See the NeurIPS code and data submission guidelines (\url{https://nips.cc/public/guides/CodeSubmissionPolicy}) for more details.
        \item The authors should provide instructions on data access and preparation, including how to access the raw data, preprocessed data, intermediate data, and generated data, etc.
        \item The authors should provide scripts to reproduce all experimental results for the new proposed method and baselines. If only a subset of experiments are reproducible, they should state which ones are omitted from the script and why.
        \item At submission time, to preserve anonymity, the authors should release anonymized versions (if applicable).
        \item Providing as much information as possible in supplemental material (appended to the paper) is recommended, but including URLs to data and code is permitted.
    \end{itemize}

\item {\bf Experimental Setting/Details}
    \item[] Question: Does the paper specify all the training and test details (e.g., data splits, hyperparameters, how they were chosen, type of optimizer, etc.) necessary to understand the results?
    \item[] Answer: \answerYes{} 
    \item[] Justification: All the training and test details are included in Appendix~\ref{app:hyperparameters}.
    \item[] Guidelines:
    \begin{itemize}
        \item The answer NA means that the paper does not include experiments.
        \item The experimental setting should be presented in the core of the paper to a level of detail that is necessary to appreciate the results and make sense of them.
        \item The full details can be provided either with the code, in appendix, or as supplemental material.
    \end{itemize}

\item {\bf Experiment Statistical Significance}
    \item[] Question: Does the paper report error bars suitably and correctly defined or other appropriate information about the statistical significance of the experiments?
    \item[] Answer: \answerYes{} 
    \item[] Justification: Error bars are reported in Section~\ref{sec:experiments}.
    \item[] Guidelines:
    \begin{itemize}
        \item The answer NA means that the paper does not include experiments.
        \item The authors should answer "Yes" if the results are accompanied by error bars, confidence intervals, or statistical significance tests, at least for the experiments that support the main claims of the paper.
        \item The factors of variability that the error bars are capturing should be clearly stated (for example, train/test split, initialization, random drawing of some parameter, or overall run with given experimental conditions).
        \item The method for calculating the error bars should be explained (closed form formula, call to a library function, bootstrap, etc.)
        \item The assumptions made should be given (e.g., Normally distributed errors).
        \item It should be clear whether the error bar is the standard deviation or the standard error of the mean.
        \item It is OK to report 1-sigma error bars, but one should state it. The authors should preferably report a 2-sigma error bar than state that they have a 96\% CI, if the hypothesis of Normality of errors is not verified.
        \item For asymmetric distributions, the authors should be careful not to show in tables or figures symmetric error bars that would yield results that are out of range (e.g. negative error rates).
        \item If error bars are reported in tables or plots, The authors should explain in the text how they were calculated and reference the corresponding figures or tables in the text.
    \end{itemize}

\item {\bf Experiments Compute Resources}
    \item[] Question: For each experiment, does the paper provide sufficient information on the computer resources (type of compute workers, memory, time of execution) needed to reproduce the experiments?
    \item[] Answer: \answerYes{} 
    \item[] Justification: The type of compute workers and memory are reported in Appendix~\ref{app:hyperparameters}. The running time and GPU memory of our experiments are reported in Table~\ref{tab:time-and-memory}.
    \item[] Guidelines:
    \begin{itemize}
        \item The answer NA means that the paper does not include experiments.
        \item The paper should indicate the type of compute workers CPU or GPU, internal cluster, or cloud provider, including relevant memory and storage.
        \item The paper should provide the amount of compute required for each of the individual experimental runs as well as estimate the total compute. 
        \item The paper should disclose whether the full research project required more compute than the experiments reported in the paper (e.g., preliminary or failed experiments that didn't make it into the paper). 
    \end{itemize}
    
\item {\bf Code of Ethics}
    \item[] Question: Does the research conducted in the paper conform, in every respect, with the NeurIPS Code of Ethics \url{https://neurips.cc/public/EthicsGuidelines}?
    \item[] Answer: \answerYes{} 
    \item[] Justification: The research conducted in the paper conform with the NeurIPS Code of Ethics.
    \item[] Guidelines:
    \begin{itemize}
        \item The answer NA means that the authors have not reviewed the NeurIPS Code of Ethics.
        \item If the authors answer No, they should explain the special circumstances that require a deviation from the Code of Ethics.
        \item The authors should make sure to preserve anonymity (e.g., if there is a special consideration due to laws or regulations in their jurisdiction).
    \end{itemize}

\item {\bf Broader Impacts}
    \item[] Question: Does the paper discuss both potential positive societal impacts and negative societal impacts of the work performed?
    \item[] Answer: \answerNA{} 
    \item[] Justification: As a general framework for learning with symmetry, we do not foresee immediate positive or negative societal outcomes.
    \item[] Guidelines:
    \begin{itemize}
        \item The answer NA means that there is no societal impact of the work performed.
        \item If the authors answer NA or No, they should explain why their work has no societal impact or why the paper does not address societal impact.
        \item Examples of negative societal impacts include potential malicious or unintended uses (e.g., disinformation, generating fake profiles, surveillance), fairness considerations (e.g., deployment of technologies that could make decisions that unfairly impact specific groups), privacy considerations, and security considerations.
        \item The conference expects that many papers will be foundational research and not tied to particular applications, let alone deployments. However, if there is a direct path to any negative applications, the authors should point it out. For example, it is legitimate to point out that an improvement in the quality of generative models could be used to generate deepfakes for disinformation. On the other hand, it is not needed to point out that a generic algorithm for optimizing neural networks could enable people to train models that generate Deepfakes faster.
        \item The authors should consider possible harms that could arise when the technology is being used as intended and functioning correctly, harms that could arise when the technology is being used as intended but gives incorrect results, and harms following from (intentional or unintentional) misuse of the technology.
        \item If there are negative societal impacts, the authors could also discuss possible mitigation strategies (e.g., gated release of models, providing defenses in addition to attacks, mechanisms for monitoring misuse, mechanisms to monitor how a system learns from feedback over time, improving the efficiency and accessibility of ML).
    \end{itemize}
    
\item {\bf Safeguards}
    \item[] Question: Does the paper describe safeguards that have been put in place for responsible release of data or models that have a high risk for misuse (e.g., pretrained language models, image generators, or scraped datasets)?
    \item[] Answer: \answerNA{} 
    \item[] Justification: The data and models in the paper poses no such risks.
    \item[] Guidelines:
    \begin{itemize}
        \item The answer NA means that the paper poses no such risks.
        \item Released models that have a high risk for misuse or dual-use should be released with necessary safeguards to allow for controlled use of the model, for example by requiring that users adhere to usage guidelines or restrictions to access the model or implementing safety filters. 
        \item Datasets that have been scraped from the Internet could pose safety risks. The authors should describe how they avoided releasing unsafe images.
        \item We recognize that providing effective safeguards is challenging, and many papers do not require this, but we encourage authors to take this into account and make a best faith effort.
    \end{itemize}

\item {\bf Licenses for existing assets}
    \item[] Question: Are the creators or original owners of assets (e.g., code, data, models), used in the paper, properly credited and are the license and terms of use explicitly mentioned and properly respected?
    \item[] Answer: \answerYes{} 
    \item[] Justification: The creators of assets are properly cited in the main text and the licenses are mentioned in Appendix~\ref{app:datasets}.
    \item[] Guidelines:
    \begin{itemize}
        \item The answer NA means that the paper does not use existing assets.
        \item The authors should cite the original paper that produced the code package or dataset.
        \item The authors should state which version of the asset is used and, if possible, include a URL.
        \item The name of the license (e.g., CC-BY 4.0) should be included for each asset.
        \item For scraped data from a particular source (e.g., website), the copyright and terms of service of that source should be provided.
        \item If assets are released, the license, copyright information, and terms of use in the package should be provided. For popular datasets, \url{paperswithcode.com/datasets} has curated licenses for some datasets. Their licensing guide can help determine the license of a dataset.
        \item For existing datasets that are re-packaged, both the original license and the license of the derived asset (if it has changed) should be provided.
        \item If this information is not available online, the authors are encouraged to reach out to the asset's creators.
    \end{itemize}

\item {\bf New Assets}
    \item[] Question: Are new assets introduced in the paper well documented and is the documentation provided alongside the assets?
    \item[] Answer: \answerNA{} 
    \item[] Justification: The paper does not release new assets.
    \item[] Guidelines:
    \begin{itemize}
        \item The answer NA means that the paper does not release new assets.
        \item Researchers should communicate the details of the dataset/code/model as part of their submissions via structured templates. This includes details about training, license, limitations, etc. 
        \item The paper should discuss whether and how consent was obtained from people whose asset is used.
        \item At submission time, remember to anonymize your assets (if applicable). You can either create an anonymized URL or include an anonymized zip file.
    \end{itemize}

\item {\bf Crowdsourcing and Research with Human Subjects}
    \item[] Question: For crowdsourcing experiments and research with human subjects, does the paper include the full text of instructions given to participants and screenshots, if applicable, as well as details about compensation (if any)? 
    \item[] Answer: \answerNA{} 
    \item[] Justification: The paper does not involve crowdsourcing nor research with human subjects.
    \item[] Guidelines:
    \begin{itemize}
        \item The answer NA means that the paper does not involve crowdsourcing nor research with human subjects.
        \item Including this information in the supplemental material is fine, but if the main contribution of the paper involves human subjects, then as much detail as possible should be included in the main paper. 
        \item According to the NeurIPS Code of Ethics, workers involved in data collection, curation, or other labor should be paid at least the minimum wage in the country of the data collector. 
    \end{itemize}

\item {\bf Institutional Review Board (IRB) Approvals or Equivalent for Research with Human Subjects}
    \item[] Question: Does the paper describe potential risks incurred by study participants, whether such risks were disclosed to the subjects, and whether Institutional Review Board (IRB) approvals (or an equivalent approval/review based on the requirements of your country or institution) were obtained?
    \item[] Answer: \answerNA{} 
    \item[] Justification: The paper does not involve crowdsourcing nor research with human subjects.
    \item[] Guidelines:
    \begin{itemize}
        \item The answer NA means that the paper does not involve crowdsourcing nor research with human subjects.
        \item Depending on the country in which research is conducted, IRB approval (or equivalent) may be required for any human subjects research. If you obtained IRB approval, you should clearly state this in the paper. 
        \item We recognize that the procedures for this may vary significantly between institutions and locations, and we expect authors to adhere to the NeurIPS Code of Ethics and the guidelines for their institution. 
        \item For initial submissions, do not include any information that would break anonymity (if applicable), such as the institution conducting the review.
    \end{itemize}

\end{enumerate}

\end{document}